\newcommand{\req}[1]{eq.~(\ref{#1})}
\newcommand{\rfig}[1]{Fig.~\ref{#1}}
\newcommand{\rtab}[1]{Tab.~\ref{#1}}
\newtheorem{lemma}{Lemma}
\newcommand{\citet}[1]{\citeauthor{#1} \shortcite{#1}}
\title{Absum: Simple Regularization Method for Reducing Structural Sensitivity of Convolutional Neural Networks}
\author{ \Large \textbf{Sekitoshi Kanai\textsuperscript{\rm 1}\textsuperscript{\rm 2}, Yasutoshi Ida\textsuperscript{\rm 1}, Yasuhiro Fujiwara\textsuperscript{\rm 3}, Masanori Yamada\textsuperscript{\rm 4}, Shuichi Adachi\textsuperscript{\rm 2}}\\ 
\textsuperscript{\rm 1}NTT Software Innovation Center
\textsuperscript{\rm 2}Keio University
\textsuperscript{\rm 3}NTT Communication Science Laboratories
\textsuperscript{\rm 4}NTT Secure Platform Laboratories\\ 
\{sekitoshi.kanai.fu, yasutoshi.ida.yc, yasuhiro.fujiwara.kh, masanori.yamada.cm\}@hco.ntt.co.jp,
adachi@appi.keio.ac.jp
}
\begin{document}

\maketitle

\begin{abstract}
We propose Absum, which is a regularization method for improving
adversarial robustness of convolutional neural networks (CNNs).
Although CNNs can accurately recognize images, recent studies
have shown that the convolution operations in CNNs
commonly have structural sensitivity to specific noise composed of Fourier basis functions.
By exploiting this sensitivity, they proposed a simple black-box adversarial attack: Single Fourier attack.
To reduce structural sensitivity, we can use regularization of convolution filter weights
since the sensitivity of linear transform can be assessed by the norm of the weights. 
However, standard regularization methods can prevent minimization of the loss function
because they impose a tight constraint for obtaining high robustness. 
To solve this problem,
Absum imposes a loose constraint; it penalizes the absolute values of
the summation of the parameters in the convolution layers. 
Absum can improve robustness against single Fourier attack
while being as simple and efficient as
standard regularization methods (e.g., weight decay and $L_1$ regularization).
Our experiments demonstrate that Absum improves robustness against single Fourier attack more than
standard regularization methods.
Furthermore, we reveal that robust CNNs with Absum are more robust against transferred
attacks due to decreasing the common  sensitivity and against high-frequency noise
than standard regularization methods.
We also reveal that Absum can improve robustness
against gradient-based attacks (projected gradient descent) when used with adversarial training.
\end{abstract}
\frenchspacing
\section{Introduction}
Deep neural networks have achieved great success in many applications, e.g., image recognition \cite{resnet} and 
machine translation \cite{vaswani2017attention}. 
Specifically, CNNs and rectified linear units (ReLUs) have resulted in breakthroughs in image recognition \cite{lecun1989backpropagation,relu} and
 are de facto standards for image recognition and other applications \cite{resnet,dcgan}.
Though CNNs can classify image data as accurately as humans, 
they are sensitive to small perturbations of inputs, i.e.,
injecting imperceptible perturbations can make deep models misclassify image data.
 Such attacks are called adversarial attacks and the perturbed inputs are called adversarial examples \cite{szegedy2013intriguing}.

We can roughly divide adversarial attacks into two types; white-box attacks, which use the information of target models \cite{fgsm,pgd2,deepfool}, 
and black-box attacks, which do not require the information of target models
\cite{papernot2016transferability,chen2017zoo,papernot2017practical}. 
Black-box attacks, rather than white-box attacks, can threaten online deep-learning services
since it is difficult to access the target models in online deep-learning applications 
\cite{papernot2017practical,yuan2019adversarial}. 

Most black-box attacks are transferred attacks, 
which are generated as white-box attacks for substitute models
instead of the target model \cite{papernot2016transferability}.
This implies that deep models have common sensitivity against specific perturbations.
In fact, \citet{tsuzuku2018structural} have recently
shown that CNNs have the structural sensitivity from the perspective that convolution
can be regarded as the 
product of the circulant matrix and proposed single Fourier attack (SFA).\footnote{\citet{Fp} concurrently proposed the same attack.}
Fourier basis functions create singular vectors of circulant matrices,
 and SFA uses these singular vectors since the dominant singular vector can be the worst noise
for a matrix-vector product.
Although SFA is a very simple attack composed of a single-frequency component,
it is universal adversarial perturbations for CNNs, i.e.,
it can decrease the classification accuracy of
various CNN-based models without using the information about the model parameters
and without depending on input images. 
To the best of our knowledge, an effective defense method against SFA has not been proposed.
Therefore, such a method is necessary.

To defend CNNs against SFA, we first reveal that the spectral norm constraint \cite{sedghi2018the}
(hereinafter, we call it SNC) can reduce the structural sensitivity.
While SNC was proposed to improve generalization performance,
it can improve robustness in the Fourier domain since singular values of convolution
layers correspond to
the magnitude of the frequency response.
However, SNC is not so practical since it requires high computational cost
to compute the spectral norm (the largest singular value).
We then develop \textit{Absum}; an efficient regularization method for reducing the structural
 sensitivity of CNNs. 
Instead of the spectral norm, 
we use the induced $\infty$-norm ($L_\infty$ operator norm) since it is  the
upper bound of the spectral norm for convolution.
However, 
a constraint of the induced $\infty$-norm, which is equivalent to
$L_1$ regularization, requires a tight constraint for robustness,
which prevents minimization of the loss function.
This is because the induced $\infty$-norm is a conservative 
measure; it handles the effects of negative inputs even though inputs always 
have positive values after ReLU activations.
To improve robustness without preventing the loss minimization, 
Absum relaxes the induced $\infty$-norm by penalizing
the absolute values of the
summations of weights instead of elements on the basis that input vectors always have positive elements.
Absum is as simple as standard regularization methods such as weight decay,
but it can reduce sensitivity to SFA.
We provide the proximal operator to minimize loss functions with Absum.

Image recognition experiments on MNIST, Fashion-MNIST (FMNIST), CIFAR10, CIFAR100, and SVHN demonstrate
 that Absum and SNC outperform $L_1$ and $L_2$ regularization methods
in terms of improving robustness against SFA, and 
the computation time of Absum is about one-tenth that of SNC. 
In the additional empirical evaluation,
we reveal that 
robust CNNs against SFA can be robust against transferred attacks
by using white-box attacks (projected gradient descent: PGD \cite{pgd,pgd2}). This implies that
sensitivity to SFA is one of the causes of the transferability of adversarial attacks. 
As a further investigation of Absum and SNC, we reveal that adversarial perturbations 
for CNNs trained with Absum and SNC have little high-frequency components, i.e., these CNNs
 are robust against high-frequency noise. 
Furthermore, our experiments show that Absum is effective against
PGD when using adversarial training.

The following are main contributions of this paper:
\begin{itemize}
\item We show that SNC improves robustness against SFA. SNC was
proposed to improve generalization performance, but effectiveness in robustness against SFA
had not been evaluated.
\item We propose Absum and its proximal operator.
Absum improves robustness against SFA as well as SNC while its computational cost
 is lower than that of SNC. 
\item In the futher empirical evaluation, Absum and SNC can also
improve robustness against other black-box attacks
(transferred attacks and High-Frequency attacks \cite{highF}). 
In addition, Absum can improve robustness against PGD when used with adversarial training.
\end{itemize}
\section{Preliminaries}
\label{pre}
\subsection{CNNs, ReLUs and Circulant Matrix}
\label{convrelu}
In this section, we outline CNNs, ReLUs, and a circulant matrix
for convolution operation.
Let $\bm{X}\!\in\!\bm{R}^{n\times n}$ be an input map, 
$\bm{Y}\!\in\!\bm{R}^{n\times n}$ be an output map,
 and $\bm{K}\!\in\!\bm{R}^{n\times n}$ be a filter 
matrix such that
$\bm{K}\!=\![\bm{k}_1, \bm{k}_2, \dots, \bm{k}_{n}]^T$, 
where $\bm{k}_i\!=\![k_{i,1},k_{i,2}\dots,k_{i,n}]^T\!\in\!\bm{R}^{n}$.
The output of the convolution operation $\bm{Y}\!=\!\bm{K}*\bm{X}$ becomes
\begin{align}
\textstyle Y_{l,m}=\sum_{p=1}^n\sum_{q=1}^n k_{p,q}X_{l+p-1,m+q-1}.
\label{conv}
\end{align}
Note that when the filter size is $h\times h$ and $h<n$, 
we can embed it in the $n\times n$ matrix $\bm{K}$ by padding with zeros \cite{sedghi2018the}.
After the convolution, we usually use ReLU activations as the following function:
\begin{align}
\textstyle ReLU(x)=\max (x,0).
\end{align}
Typical model architectures use a combination of convolution and ReLU.
For example, a standard block of ResNet \cite{resnet} is composed as
\begin{align}
\scalebox{0.906}{$
\textstyle h(\bm{X})\!=\!ReLU(\bm{X}\!+\!BN(\bm{K}^{(2)}\!*\!ReLU(BN(\bm{K}^{(1)}\!*\!\bm{X})))),$}
\label{rescomp}
\end{align}
where $BN$ is batch normalization \cite{ioffe2015batch}.

Since SFA and Absum are based on a circulant matrix for convolution operation,
we show that the convolution can be expressed as a product of 
a vector and doubly block circulant matrix. 
Let $\bm{x}\!=\!\mathrm{vec}(\bm{X})$ and $\bm{y}\!=\!\mathrm{vec}(\bm{Y})$
be vectors obtained by stacking the columns of $\bm{X}$ and 
$\bm{Y}$, respectively. Convolution $\bm{K}*\bm{X}$ can be written as
\begin{align}
\textstyle \bm{y}=\bm{A}\bm{x},
\end{align}
where $\bm{A}\in \bm{R}^{n^2\times n^2}$ is the following matrix:
\begin{align}
\scalebox{0.8}{$
\textstyle\!\! \bm{A}$}
\scalebox{0.8}{$
=\!
\left[\!\begin{array}{c@{\hskip5pt}c@{\hskip5pt}c@{\hskip4pt}c}
\mathrm{c}(\bm{k}_1)&\mathrm{c}(\bm{k}_2)&\dots&\mathrm{c}(\bm{k}_{n})\\
\mathrm{c}(\bm{k}_{n})&\mathrm{c}(\bm{k}_1)&\dots&\mathrm{c}(\bm{k}_{n-1})\\
\vdots&&&\vdots\\
\mathrm{c}(\bm{k}_2)&\mathrm{c}(\bm{k}_3)&\dots&\mathrm{c}(\bm{k}_{1})
\end{array}\!\!\right]$}\!,
\scalebox{0.8}{$
\textstyle\mathrm{c}(\bm{k}_i)\!$}=
\!\scalebox{0.8}{$
\!\left[\!\begin{array}{c@{\hskip5pt}c@{\hskip5pt}c@{\hskip4pt}c}
k_{i,1},&k_{i,2},&\dots,&k_{i,n}\\
k_{i,n},&k_{i,1},&\dots,&k_{i,n-1}\\
\vdots&&&\vdots\\
k_{i,2},&k_{i,3},&\dots,&k_{i,1}\!
\end{array}\!\!\right]$}\!.
\label{dcm}
\end{align}
The coefficients $k_{i,j}$ are cyclically shifted in $\mathrm{c}(\bm{k}_i)\!\in\!\bm{R}^{n\times n}$, 
and block matrices $\mathrm{c}(\bm{k}_i)$ are cyclically shifted in $\bm{A}$.
Therefore, $\bm{A}$ is called a doubly block circulant matrix.
\subsection{Single Fourier Attack}
As mentioned above, convolution can be written by a doubly block circulant matrix.
Such matrices always have eigenvectors $\bm{Q}\!=\!\frac{1}{n}\bm{F}\!\otimes\!\bm{F}$,
where elements of $\bm{F}$ are composed of the Fourier basis 
$F_{l,m}\!=\!\mathrm{exp}(\!-j\frac{2\pi}{n}lm)$,
where $j\!=\!\sqrt{-1}$ \cite{Jain1989,sedghi2018the,tsuzuku2018structural}, and
singular vectors are also composed of $\bm{F}\!\otimes\!\bm{F}$
even if we stack convolution layers \cite{tsuzuku2018structural,karner2003spectral}.  
From these characteristics,
\citet{tsuzuku2018structural} proposed SFA.
The perturbed input image $\hat{\bm{X}}$ by SFA is
\begin{align}
\scalebox{0.89}{$
\hat{\bm{X}}\!=\!\bm{X}\!+\!\varepsilon((1\!+\!j)(\bm{F})_l\!\otimes \!(\bm{F})_m\!+\!(1\!-\! j)(\bm{F})_{n-l}\!\otimes\! (\bm{F})_{n-m})$},
\end{align}
where $(\bm{F})_{l}\!\in\!\bm{R}^n$ is the $l$-th column vector of $\bm{F}$,
$\bm{X}$ is an input image, and $\varepsilon$ is magnitude of the attack.
SFA is composed of $(\bm{F})_l\!\otimes\!(\bm{F})_m$ and its complex conjugate
$(\bm{F})_{n-l}\otimes (\bm{F})_{n-m}$ to create a perturbation that has real values
since inputs of CNNs are assumed to be real values.
The $l$ and $m$ are hyperparameters such that $l\!=\!0,1,\dots,n-1, m\!=\!0,1,\dots,n-1$. 
Figure~\ref{sfa} shows examples of CIFAR10 perturbed by SFA.
We can see that $(l,m)$ determines a space-frequency of the noise.
Note that stacked convolution layers without activation functions
(e.g., $\bm{A}^{(2)}\!\bm{A}^{(1)}\bm{x}$)
also have singular vectors composed of Fourier basis functions.
Even though we use nonlinear activation functions,
many model architectures (e.g., WideResNet, DenseNet-BC, and GoogLeNet) are sensitive to
SFA \cite{tsuzuku2018structural}.
\begin{figure}[tb]
\centering
\includegraphics[width=\hsize]{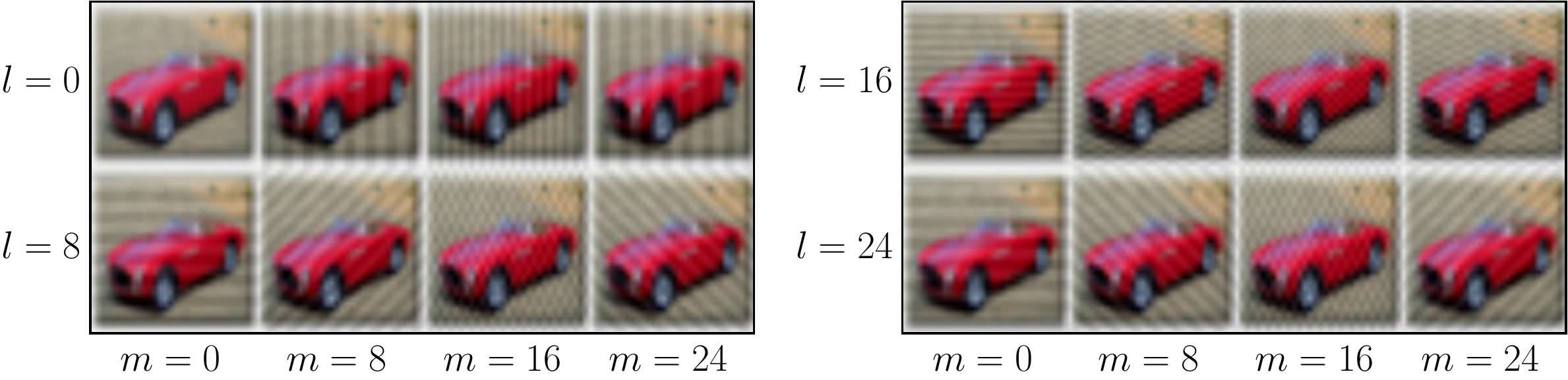} 
\caption{Examples perturbed by SFA of $(l,m)=(0,0), (0,8),\dots, (24,16), (24,24)$}
\label{sfa}
\end{figure}
\subsection{Vulnerability of CNNs in Frequency Domain}
Sensitivity to SFA can be regarded as sensitivity to a single-frequency noise \cite{Fp}. 
To understand the vulnerability of CNNs, several studies focused on
sensitivity of CNNs in the frequency domain \cite{Fp,highF,das2018shield,liu2019feature}.
These studies point out that sensitivity to high-frequency components
in images is one of the causes of adversarial attacks
since human visual systems are not sensitive to high-frequency components unlike CNNs.
In fact, several studies show that CNNs are sensitive to high-frequency noise 
\cite{jo2017measuring,highF,Fp,das2018shield}.
\citet{jo2017measuring} and \citet{highF} show that CNNs misclassify images processed by low-pass filters and \citet{highF} call this a High-Frequency attack, which is a simple black-box adversarial
attack. 
There is a hypothesis that
robust CNNs against high-frequency noise are also robust against adversarial attacks \cite{highF,Fp}.
Note that \citet{highF} claimed that sensitivity in the high-frequency domain
contributes to high performance on clean data; thus, there is a trade-off.
\subsection{Related Work}
Adversarial attacks can be transferred to other models
and transferred white-box attacks become adversarial 
black-box attacks \cite{papernot2017practical}.
These attacks can be defended against by adversarial training,
which is a promising defense method \cite{papernot2017practical,pgd2}.
However, the computational cost of adversarial training is
larger than naive training. 
Note that Absum can be used with adversarial training.
Several studies proposed black-box attacks using queries to ask the target model about predicted labels of given data,
but these attacks might still be impractical since they require 
many queries 
\cite{chen2017zoo,brendel2018decisionbased,pmlr-v80-ilyas18a}.
On the other hand, SFA only uses the information that the target 
model is composed of CNNs and is more practical.
%

Our method simply penalizes parameters in a similar manner compared to standard regularization methods.
As standard regularization methods, $L_2$ regularization (weight decay) is
commonly used for improving generalization performance due to its simplicity.
$L_1$ regularization is also used since it induces sparsity \cite{goodfellow2016deep}.
In addition, spectral norm (induced 2-norm) regularization can also improve generalization performance 
\cite{yoshida2017spectral,sedghi2018the}.
Due to space limitations, 
we outline other studies less relevant than the above studies in the appendix.
\section{Defense Methods against SFA}
In this section, we first show that SNC can improve robustness against SFA.
Since SNC has a large time complexity,
we next discuss whether standard regularizations can be alternatives.
Finally, we discuss Absum and its proximal operator, 
which is an efficient defense method against SFA.
\subsection{Spectral Norm Constraint}
SFA is based on the following properties of linear transform:
\begin{align}
\scalebox{0.92}{$
\textstyle
\sigma(\bm{A})\!=\!\max_{||\bm{x}||_2=1} ||\bm{A}\bm{x}||_2,~\bm{v}\!=\!\mathrm{arg}\max_{||\bm{x}||_2=1} ||\bm{A}\bm{x}||_2,
\textstyle
$}
\label{sn}
\end{align}
where $\sigma$ is the largest singular value (spectral norm or induced 2-norm),
and $\bm{v}$ is the right singular vector corresponding to $\sigma$.
Equation~(\ref{sn}) shows that the singular vector can be the worst noise for linear transform, and
SFA uses the singular vectors for convolutional layers.
Since the spectral norm determines the impact of SFA, 
we can reduce sensitivity to SFA by constraining the spectral norm.
The constraint of the spectral norm for CNNs (i.e., SNC)
 \cite{sedghi2018the,gouk2018regularisation} was proposed in the context of 
 improving generalization performance.
 SNC clips $\sigma$ if it exceeds a preset threshold; thus, it can directly
 control sensitivity to a single-frequency perturbation.
However, the constraints of the exact
spectral norm\footnote{The spectral norm in spectral norm regularization 
\cite{yoshida2017spectral} is often quite different from that of $\bm{A}$ \cite{sedghi2018the,gouk2018regularisation}.} of $\bm{A}$
 incurs large computation cost;
the $O(n^2c^2(c+\mathrm{log}(n)))$ time for each convolution when input size is $n\!\times\! n$, and
the numbers of input and output channels are $c$ even if we use the efficient spectral norm constraints \cite{sedghi2018the}. SNC can be infeasible when the size of inputs increases.
\subsection{Standard Regularizations fail to Defend} 
Instead of using the spectral norm,
we can assess the effect of the perturbation for linear transform by using
\begin{align}
\textstyle
\max_{||\bm{x}||_\infty=1} ||\bm{A}\bm{x}||_\infty.
\label{LOpt}
\end{align}
Equation (\ref{LOpt}) is the induced $\infty$-norm $||\bm{A}||_{\infty}$,
and we have $||\bm{A}||_{2}\!\leq\!||\bm{A}||_{\infty}$ for convolution 
(it is proved in the appendix). 
This norm is calculated as:
\begin{align}
\textstyle
||\bm{A}||_{\infty}=\max_{l} \sum_m |A_{l,m}|.
\label{solv}
\end{align}
 Substituting \req{dcm} for \req{solv}, we have
\begin{align}
\textstyle
\max_{l} \sum_m |A_{l,m}|=\sum_m\sum_l |k_{l,m}|.
\label{l1}
\end{align}
Thus, the penalty of the induced $\infty$-norm can be $L_1$ regularization \cite{gouk2018regularisation}.
Therefore, $L_1$ regularization can improve robustness.
However, the induced $\infty$-norm is a conservative measure of robustness 
\cite{szegedy2013intriguing}; the highly weighted $L_1$ regularization
for robustness can prevent minimization of the loss function.
Figure~\ref{LSSFA} shows the test accuracy of models, which is trained with $L_1$ regularization, on data perturbed by SFA against the regularization weight $\lambda$.
In this figure, the robust accuracy against SFA increases along with the regularization weight, i.e.,
the robustness increases according to the regularization weight.
However, the accuracy significantly decreases when the weight exceeds a certain point.
This is because training with high weighted $L_1$ regularization does not have sufficient search space
to minimize the loss function.
Note that weight decay can also penalize the 
spectral norm (in the appendix) and imposes tight
regularization, as discussed in the experiments section.
Therefore, we need a weak regularization method such that models
become both highly robust and accurate.
\begin{figure}[tb]
\centering
\centering
\includegraphics[width=0.8\linewidth ,clip]{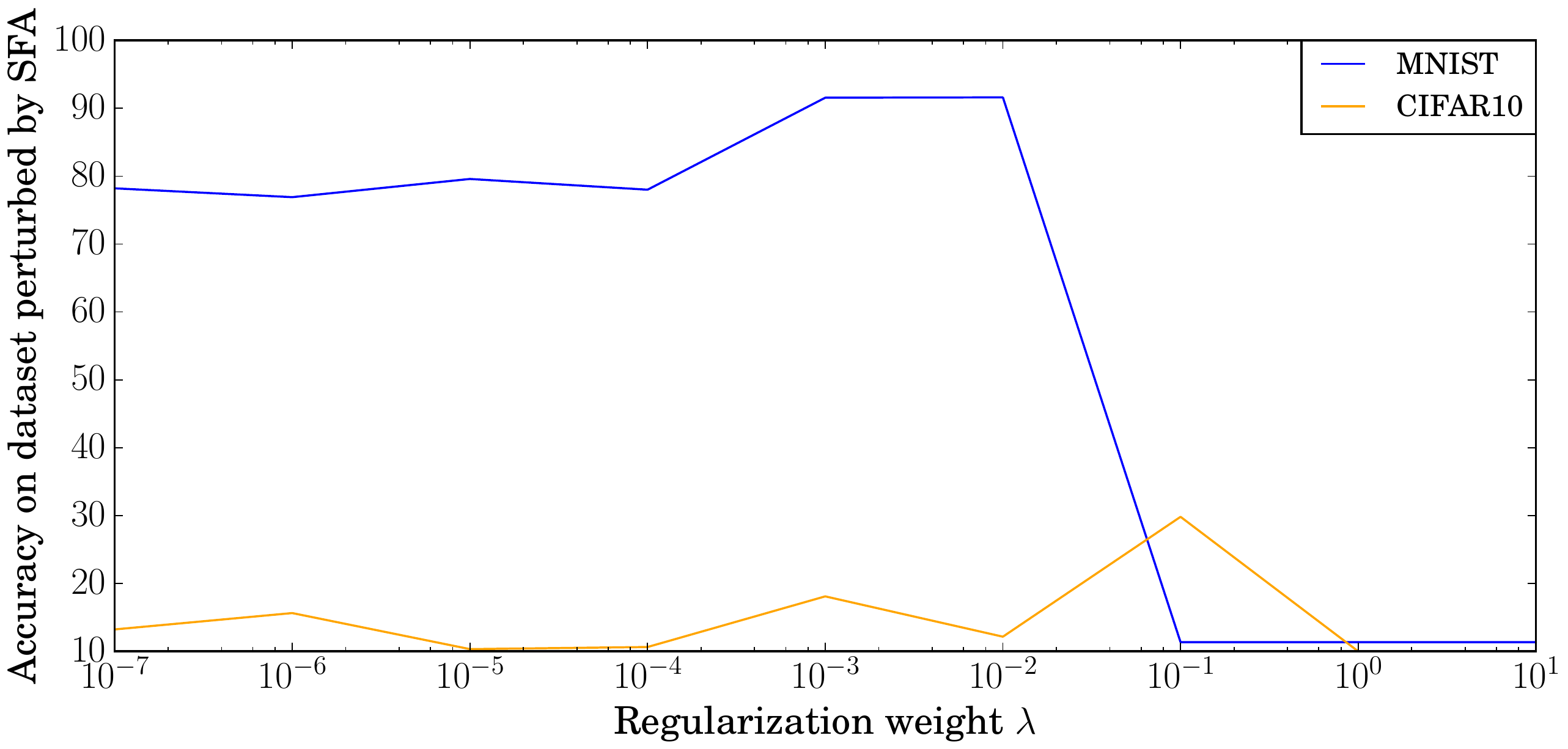}
\caption{Accuracy of models trained with $L_1$ regularization on test dataset perturbed by SFA vs regularization weight.
$l,m$ of SFA are tuned to minimize accuracy for each $\lambda$.}
\label{LSSFA}
\end{figure}
\begin{figure}[t]
\centering
\includegraphics[width=0.7\linewidth ,clip]{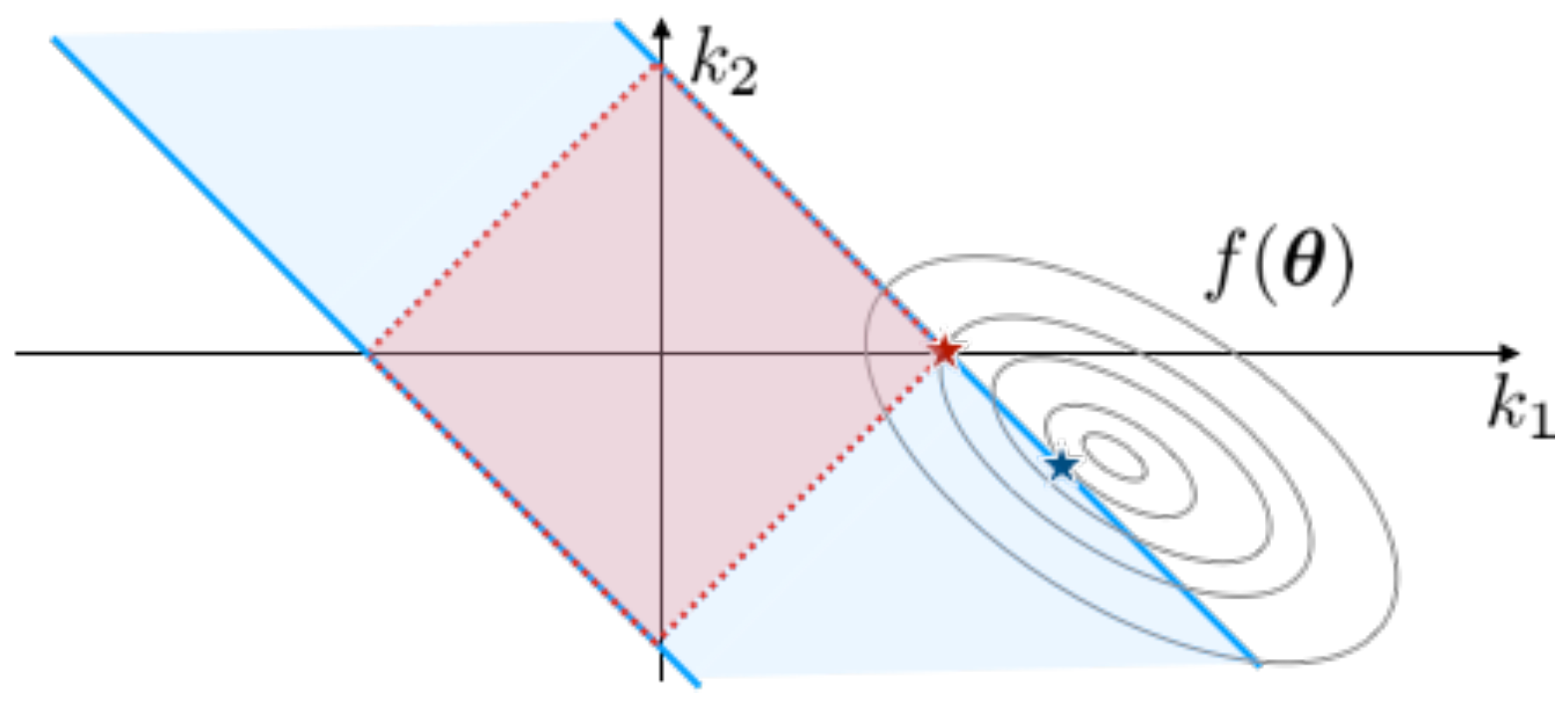}
\caption{Comparison of search spaces of Absum: $|k_1+k_2|$ (blue) and $L_1$ regularization: $|k1|+|k_2|$ (red) where $f(\bm{\theta})$ is loss function. We have $\{\bm{k}|\sum_i|k_i|\leq c\}\subseteq \{\bm{k}|~|\sum_ik_i|\leq c\}$ for any constant $c\geq 0$ from triangle inequality.}
\label{SSAbsum}
\end{figure}
\subsection{Absum: Simple and Weak Regularization}
To develop a weak regularization method,
we reconsider the optimization problem of \req{LOpt}.
The maximum point (\req{solv}) is achieved by $x_m\!=\!\mathrm{sign}(A_{l',m}\!)$, where
$l'\!=\!\mathrm{arg} \max_{l}\! \sum_m\! |A_{l,m}|$, i.e., $x_m\!=\!1$ if
 $A_{l',m}\!>\!0$ and $x_m\!=\!-1$ if $A_{l',m}\!<\!0$.
However, we should consider the sign of input in practice because we usually use ReLUs
 as activation functions.
As described in \req{rescomp}, ReLUs are used before convolution as $\bm{K}*ReLU(\cdot)$.
Thus, $\bm{x}$ cannot have negative elements, i.e., $x_m$ cannot be $\mathrm{sign}(A_{l',m})$ when $\mathrm{sign}(A_{l',m})\!=\!-1$. 
Therefore, the induced $\infty$-norm can overestimate sensitivity to the perturbation.
From this insight,
we consider the norm of $\bm{A}\bm{x}$ when $\bm{x}\!=\!\bm{1}$ instead of \req{LOpt}
\begin{align}
\textstyle
||\bm{A}\bm{1}||_\infty=\max_{l} |\sum_m A_{l,m}|=|\sum_m\sum_l k_{l,m}|.
\end{align}
For robustness, we use this value as the regularization term.
We call our method \textit{Absum} since this value is the absolute value of the summation of the filter coefficients.

The objective function of training with Absum is 
\begin{align}
\textstyle
\min_{\bm{\theta}} \frac{1}{N}\sum_{p=1}^N f(\bm{\theta},\bm{X}_p,\bm{Y}_p)&\textstyle+\lambda \sum_{i=1}^{L}g(\bm{K}^{(i)}),\label{obj}\\ \textstyle
g(\bm{K}^{(i)})&\textstyle
=|\sum_{m=1}^n\sum_{l=1}^n k^{(i)}_{l,m}|,\nonumber
\end{align}
where $f(\cdot)$ is a loss function,
$\bm{X}_p$ and $\bm{Y}_p$ are the $p$-th training image and label, respectively,
 $\bm{\theta}$ is the parameter vector including $\bm{K}^{(i)}$ in the model, and $\lambda$ is a regularization weight.
The $\bm{K}^{(i)}$ is the filter matrix of the $i$-th convolution, and $L$ is the number of
convolution filters.\footnote{We penalize the filter matrix for each channel. If one convolution layer has $c_1$ output channels and $c_2$ input channels, the regularization term becomes $\lambda\sum_{l=1}^{c_1}\sum_{m=1}^{c_2}g(\bm{K}^{(l+(m-1)c_1)})$.}
Figure~\ref{SSAbsum} shows search spaces of Absum (blue) and
$L_1$ regularization (red) when we have two parameters.
The constraint of Absum is looser than $L_1$ regularization
because a large element $k^{(i)}_{l,m}\!\gg\! 0$ is allowed if a small element $k^{(i)}_{l',m'}\!\ll\! 0$ satisfies $|k^{(i)}_{l,m }|\!=\!|k^{(i)}_{l',m'}|$. Even if $|\sum_l\!\sum_m\! k_{l,m}|\!=\!0$, the search space of Absum is a $n^2\!-\! 1$ dimensional space $\{\bm{K}|\bm{K}\!\in\! \bm{R}^{n\times n},\!\sum_l\! \sum_m\! k_{l,m}\!=\!0\}$
while that of $L_1$ regularization is a point $\bm{K}=\bm{O}$ if $\sum_l\!\sum_m\! |k_{l,m}|\!=\!0$. Note that
the search space of weight decay is also the point 
$\bm{K}=\bm{O}$ when $||\bm{K}||_F=\!0$.
Therefore, the loss function with Absum can be lower than that with $L_1$ regularization and weight decay
if we use a large $\lambda$. 

Note that when the filter size is $h\!\times\! h$ and $h\!<\!n$,
we only need to compute $|\sum_{m=1}^h\!\sum_{l=1}^h \!k_{l,m}|$ since zeros padded in $\bm{K}$ do not affect \req{obj} (hereafter, we use $h$ instead of $n$). 
\subsection{Proximal Operator for Absum}
Since $g(\bm{K})$ is not differentiable at $\sum_l\! \sum_m\! k_{l,m}\!=\!0$,
the gradient method might not be effective for minimizing \req{obj}.
To minimize \req{obj}, we use a proximal gradient method,
which can minimize a differentiable loss function with a non-differentiable 
regularization term \cite{proximal}.
We now introduce proximal operator for Absum.
For clarity, let $\bar{\bm{k}}$ be $\bar{\bm{k}}\!=\!\mathrm{vec}(\bm{K})\!=\!
[\bm{k}_{0}^T,\dots,\bm{k}_{h-1}^{T}]^T\!\in\!\bm{R}^{h^2}$.
The proximal operator for $\lambda g(\bar{\bm{k}})$ is
\begin{align}
\scalebox{0.81}{$
\mathrm{prox}_{\lambda g}(\bm{ \bar{\bm{k}}})\!=\!
\begin{cases}
\bm{ \bar{\bm{k}}}\!+\!\lambda\bm{1}&\!\!\mathrm{if}\sum_l \sum_m k_{l,m}\!<\!-h^2\lambda,\\ 
\bm{ \bar{\bm{k}}}\!-\!\frac{\sum_l\!\sum_m\!k_{l,m}}{h^2}\bm{1}&\!\!\mathrm{if}-h^2\lambda
\!\leq\! \sum_l \sum_m k_{l,m}\!\leq\!h^2\lambda,\\
\bm{ \bar{\bm{k}}}\!-\!\lambda\bm{1}&\!\!\mathrm{if}\sum_l \sum_m k_{l,m}\!>\!h^2\lambda.
\end{cases}$}
\label{prox}
\end{align} 
The following lemmas show that \req{prox} is the proximal operator for Absum:
\begin{lemma}
\label{convlem}
If $\bar{\bm{k}}=[\bar{k}_1,\dots,\bar{k}_{\bar{n}}]^T \in \bm{R}^{\bar{n}}$,
 $g(\bar{\bm{k}})=|\sum_i \bar{k}_i|$ is a convex function. 
\end{lemma}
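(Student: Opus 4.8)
The plan is to show that $g(\bar{\bm{k}})=|\sum_i \bar{k}_i|$ is convex by exhibiting it as a composition of simple convex building blocks. First I would observe that the map $\ell(\bar{\bm{k}})=\sum_i \bar{k}_i = \bm{1}^T\bar{\bm{k}}$ is linear in $\bar{\bm{k}}$, hence convex (and concave); then $g = |\cdot|\circ \ell$ is the composition of the convex function $t\mapsto|t|$ on $\bm{R}$ with this affine map. Since composing a convex function with an affine map preserves convexity, $g$ is convex.

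Alternatively, and perhaps more self-containedly for the paper, I would give the direct verification via the defining inequality. Fix $\bar{\bm{k}}, \bar{\bm{k}}' \in \bm{R}^{\bar{n}}$ and $t\in[0,1]$. The goal is
\begin{align}
g\bigl(t\bar{\bm{k}}+(1-t)\bar{\bm{k}}'\bigr) \leq t\,g(\bar{\bm{k}}) + (1-t)\,g(\bar{\bm{k}}').
\label{proofgoalconv}
\end{align}
Expanding the left side using linearity of the summation, $\sum_i\bigl(t\bar{k}_i+(1-t)\bar{k}'_i\bigr) = t\sum_i\bar{k}_i + (1-t)\sum_i\bar{k}'_i$, so the left side of \eqref{proofgoalconv} equals $\bigl|t\sum_i\bar{k}_i + (1-t)\sum_i\bar{k}'_i\bigr|$. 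Applying the triangle inequality for the absolute value together with $|ta|=t|a|$ and $|(1-t)b|=(1-t)|b|$ for $t\in[0,1]$ then yields exactly the right side of \eqref{proofgoalconv}, which completes the argument.

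There is essentially no main obstacle here: both routes are short, and the only thing to be careful about is invoking the nonnegativity of $t$ and $1-t$ so that the scalars can be pulled out of the absolute values without sign changes. I would present the direct triangle-inequality computation in the paper, since it is elementary and keeps the proof self-contained, reserving the ``affine composition'' remark as a one-line alternative. This convexity is precisely what is needed to justify that the proximal operator in \req{prox} is well defined, so after establishing Lemma~\ref{convlem} the next step in the paper is to verify that \req{prox} indeed solves the proximal subproblem $\min_{\bm{u}} \tfrac{1}{2}\|\bm{u}-\bar{\bm{k}}\|_2^2 + \lambda g(\bm{u})$.
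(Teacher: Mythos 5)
Your direct verification via the defining inequality is exactly the paper's own proof: both expand $g(t\bar{\bm{k}}+(1-t)\bar{\bm{k}}')$ using linearity of the summation, pull the nonnegative scalars $t$ and $1-t$ inside the absolute values, and conclude with the triangle inequality $|\alpha|+|\beta|\geq|\alpha+\beta|$. The affine-composition remark you add is a correct (and arguably cleaner) one-line alternative, but since you elect to present the same elementary computation the paper uses, the two proofs coincide.
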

\begin{lemma}
\label{prlem}
If $\bar{\bm{k}}=[\bar{k}_1,\dots,\bar{k}_{\bar{n}}]^T \in \bm{R}^{\bar{n}}$, $\bm{u}\in \bm{R}^{\bar{n}}$
 and $g(\bar{\bm{k}})=|\sum_i \bar{k}_i|$, 
we have 
\begin{align}
\textstyle
\!\!\mathrm{prox}_{\lambda g}(\bm{ \bar{\bm{k}}})\!&=\!
\textstyle
\mathrm{arg}\min_{\bm{u}}\frac{1}{2}||\bm{u}-\bar{\bm{k}}  ||_2^2+\lambda|\sum_i u_i|\\
\textstyle 
\!&=\!
\textstyle
\begin{cases}
\bm{ \bar{\bm{k}}}+\lambda\bm{1}\!\!&\mathrm{if}~\sum_i \bar{k}_i<-\bar{n}\lambda,\\
\textstyle 
\bm{ \bar{\bm{k}}}-\frac{\sum_i \bar{k}_i}{\bar{n}}\bm{1}\!\!&\mathrm{if}~-{\bar{n}}\lambda
\leq \sum_i \bar{k}_i\leq {\bar{n}}\lambda,\\
\textstyle 
\bm{ \bar{\bm{k}}}- \lambda\bm{1}\!\!&\mathrm{if}~\sum_i \bar{k}_i>{\bar{n}}\lambda.
\end{cases}
\label{prox2}
\end{align}
\end{lemma}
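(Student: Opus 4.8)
The plan is to minimize the strictly convex objective $\phi(\bm u) = \frac{1}{2}\|\bm u - \bar{\bm k}\|_2^2 + \lambda|\sum_i u_i|$ directly by exploiting its structure. Since the quadratic term is separable and the penalty depends on $\bm u$ only through the scalar $s = \sum_i u_i$, I would first argue that the optimal $\bm u^\star$ must, for a fixed value of $s$, minimize $\frac12\|\bm u - \bar{\bm k}\|_2^2$ subject to $\sum_i u_i = s$; this is a simple equality-constrained least-squares problem whose solution is $u_i = \bar k_i + (s - \sum_j \bar k_j)/\bar n$, i.e. $\bm u = \bar{\bm k} + \frac{s - S}{\bar n}\bm 1$ where $S := \sum_i \bar k_i$. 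This reduces the whole problem to a one-dimensional minimization over $s$ of $\psi(s) = \frac{(s-S)^2}{2\bar n} + \lambda|s|$, using that $\|\bm u - \bar{\bm k}\|_2^2 = \bar n \cdot ((s-S)/\bar n)^2 = (s-S)^2/\bar n$ on that affine subspace.

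Next I would minimize the scalar function $\psi(s)$. This is the classic soft-thresholding computation: $\psi$ is convex, differentiable away from $0$, with $\psi'(s) = (s-S)/\bar n + \lambda\,\mathrm{sign}(s)$ for $s \neq 0$. Setting the derivative to zero on each branch and checking the subgradient condition $0 \in \partial\psi(0)$ at $s=0$ gives
\begin{align}
s^\star = \mathrm{sign}(S)\max(|S| - \bar n\lambda, 0) =
\begin{cases}
S + \bar n\lambda & \text{if } S < -\bar n\lambda,\\
0 & \text{if } -\bar n\lambda \le S \le \bar n\lambda,\\
S - \bar n\lambda & \text{if } S > \bar n\lambda.
\end{cases}
\end{align}
Substituting $s^\star$ back into $\bm u = \bar{\bm k} + \frac{s^\star - S}{\bar n}\bm 1$ then yields exactly the three cases in \req{prox2}: when $S < -\bar n\lambda$ we get $\bm u = \bar{\bm k} + \lambda\bm 1$; in the middle range $s^\star - S = -S$ gives $\bm u = \bar{\bm k} - \frac{S}{\bar n}\bm 1$; and when $S > \bar n\lambda$ we get $\bm u = \bar{\bm k} - \lambda\bm 1$. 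Well-definedness of $\mathrm{prox}_{\lambda g}$ (uniqueness of the argmin) follows from strict convexity of $\phi$, and Lemma~\ref{convlem} guarantees $g$ is convex so the proximal operator is the standard one.

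The main obstacle is not any single hard step but making the reduction-to-one-dimension argument airtight: I need to justify interchanging the minimization, i.e. that $\min_{\bm u}\phi(\bm u) = \min_s \min_{\bm u:\,\sum u_i = s}\phi(\bm u)$, and that the inner minimizer is unique and given by the stated projection formula. This is routine (it is just partial minimization of a jointly convex function, and the inner problem is an orthogonal projection onto an affine hyperplane), but it must be stated carefully since the penalty is nonsmooth. An alternative, perhaps cleaner, route is to verify the claimed $\bm u^\star$ satisfies the optimality condition $\bm 0 \in \bm u^\star - \bar{\bm k} + \lambda\,\partial|\textstyle\sum_i u^\star_i|$ directly: in each branch one checks $\bm u^\star - \bar{\bm k}$ is a constant multiple of $\bm 1$ lying in $\lambda\,\partial|s^\star|$ (which is $\{\lambda\}$, $[-\lambda,\lambda]\bm{}$ scaled, or $\{-\lambda\}$ according to the sign of $s^\star$), and conclude by convexity. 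I would present the partial-minimization argument as the main line and mention the subgradient check as confirmation. Finally, Lemma~\ref{prlem} applied with $\bar n = h^2$ and $\bar{\bm k} = \mathrm{vec}(\bm K)$ immediately gives \req{prox}, completing the justification that \req{prox} is the proximal operator for Absum.
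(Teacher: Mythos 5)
Your proof is correct, but it takes a genuinely different route from the paper's. The paper argues directly in $\bm{R}^{\bar{n}}$ by case analysis on the sign of $\sum_i u_i$: in the two open cases it sets $\partial J/\partial u_i = u_i - \bar{k}_i \pm \lambda = 0$ and reads off the condition on $\sum_i \bar{k}_i$, and in the boundary case $\sum_i u_i = 0$ it introduces a subgradient $\bm{v}$ of $|\sum_i u_i|$, bounds $\|\bm{v}\|_\infty \le 1$, and then performs a secondary minimization of $\frac{1}{2}\|\lambda\bm{v}\|_2^2$ over the admissible $\bm{v}$ to pin down $\bm{v}=\frac{\sum_i \bar{k}_i}{\bar{n}\lambda}\bm{1}$ and hence $\bm{u}=\bar{\bm{k}}-\frac{\sum_i \bar{k}_i}{\bar{n}}\bm{1}$. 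You instead partially minimize over the hyperplanes $\{\bm{u}:\sum_i u_i = s\}$ (an orthogonal projection, giving $\bm{u}=\bar{\bm{k}}+\frac{s-S}{\bar{n}}\bm{1}$) and reduce everything to scalar soft-thresholding of $\psi(s)=\frac{(s-S)^2}{2\bar{n}}+\lambda|s|$. Your reduction is airtight as stated --- $\min_{\bm{u}}\phi = \min_s\min_{\bm{u}:\sum_i u_i=s}\phi$ is just a partition of the domain, and strict convexity of $\phi$ gives uniqueness --- and it buys a cleaner handling of the nonsmooth point: the threshold $\bar{n}\lambda$ and the middle-case formula fall out of the one-dimensional problem rather than from an auxiliary optimization over subgradients. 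The paper's approach is more elementary in the two differentiable cases but noticeably more delicate at $\sum_i u_i = 0$. Both arrive at the same three-branch formula in \req{prox2}, and your concluding remark that \req{prox} follows with $\bar{n}=h^2$ matches the paper's use of the lemma.
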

The proofs of lemmas are provided in the appendix.
Lemma \ref{convlem} shows that we can use the proximal gradient method,
and Lemma \ref{prlem} shows that the proximal operator of Absum can be obtained as the closed-form of \req{prox2}. 
By using the proximal operator after stochastic gradient descent (SGD), we update the $i$-th convolution filter:
\begin{align}
\textstyle
\bar{\bm{k}}^{(i)}\!\leftarrow\!\mathrm{prox}_{\eta \lambda g}(\bar{\bm{k}}^{(i)}\!-\!\eta\nabla_{\bar{\bm{k}}^{(i)}}\frac{1}{B}\sum_{b=1}^{B}f(\bm{\theta},\bm{X}_b,\bm{Y}_b ))
\end{align}
where $\eta$ is a learning rate, and $B$ is a minibatch size. 
We provide the pseudocode of the whole training in the appendix.
We can compute the proximal operator in $O(h^2)$ time for each convolution when the filter size is $h\times h$ because we only need to compute the summation of parameters and elementwise operations. 
We can also compute weight decay and $L_1$ regularization in $O(h^2)$ since the number of
parameters in each convolution is $h^2$.
Therefore, the order of computational complexity of Absum is 
the same as those of weight decay
and $L_1$ regularization.
When we have $c$ input channels and $c$ output channels,
the computational costs of Absum, weight decay, and $L_1$ regularization are $O(c^2h^2)$ and less than
that of SNC $O(c^2n^2(c+\mathrm{log}(n)))$ where $n\geq h$.

Note that the loss function $f$ for training deep neural networks
 is usually non-convex while $g(\bm{K})$ is convex.
Several studies investigate the proximal gradient method when $f$ is non-convex
\cite{li2015accelerated},
and \citet{NIPS2016_6504} use
the proximal gradient method for inducing sparse structures
in deep learning.
We observed that the algorithm of Absum can find a good parameter point during the experiments.
\begin{table*}[tb]
\centering
\caption{Accuracies on datasets perturbed by SFA.}
\label{sfatab}
\scalebox{0.79}{
\begin{tabular}{lrrrrrrrrrrrrrrrrr}
\toprule
& \multicolumn{4}{c}{Avg.}&\multicolumn{4}{c}{Min.} &\multicolumn{4}{c}{CLN}&\multicolumn{4}{c}{$\lambda$ and $\sigma$}\\
\cmidrule(l){2-5}\cmidrule(l){6-9}\cmidrule(l){10-13}\cmidrule(l){14-17}
{} &      Absum &         WD &        L1 &        SNC &      Absum &         WD &         L1 &SNC &      Absum &         WD &         L1 &        SNC &      Absum &         WD &         L1 &SNC \\
\midrule
MNIST &  \bf{98.64} &  98.59 &  98.48 &  98.55 &  \bf{94.76} &  86.84 &  78.01 &  91.79 &  99.14 &  99.10 &  \bf{99.18} &  99.10&$10^{-2}$&$10^{-3}$&$10^{-4}$&$10$\\
FMNIST &  \bf{83.11} &  83.09 &  82.49 &  82.60 &  \bf{60.12} &  47.57 &  58.38 &  55.36 &  \bf{88.46} &  86.99 &  87.05 &  87.50&$10^{-3}$&$10^{-2}$&$10^{-3}$&$10$\\
CIFAR10 &  79.05 &  69.09 &  66.44 &  \bf{85.57} &   53.90 &  11.44 &  15.64 &  \bf{73.99} &  89.69 &  \bf{94.73} &  \bf{93.41} &  88.37 &$10^{-1}$&$10^{-4}$&$10^{-6}$&$0.5$\\
CIFAR100 &  48.69 &  42.97 &  38.99 &  \bf{60.42} &  16.32 &   5.23 &  9.84 &  \bf{45.05} &  68.72 &  67.05 &  \bf{71.68} &  62.76&$10^{-3}$&$10^{-6}$&$10^{-7}$&$1$ \\
SVHN &  \bf{93.34} &  91.74&  91.14 &  93.20 &  \bf{73.69}&  60.36 &  57.52&  62.90 &  95.93 &  \bf{96.37} &  96.20 &  95.42&$10^{-3}$&$10^{-3}$&$10^{-7}$&$0.1$\\
\bottomrule
\end{tabular}
}
\end{table*}
\section{Experiments}\label{exp}
We discuss the evaluation of the effectiveness of SNC
and Absum in improving robustness against SFA.
Next, we show that Absum is more efficient than SNC especially
when the size of input images and models are large.
Finally, as the further investigation,
we discuss the evaluation of the performance of Absum and SNC in
terms of robustness against transferred attacks,
vulnerability in frequency domain, and robustness against PGD
when used with adversarial training.
To evaluate effectiveness, we conducted experiments of image recognition on MNIST \cite{mnist}, FMNIST \cite{fmnist}, CIFAR10, CIFAR100 \cite{cifar}, and SVHN \cite{svhn}.
We compared Absum and SNC with standard regularizations (weight decay (WD) and $L_1$ regularization). 
\subsection{Experimental Conditions}
We provide details of the experimental conditions in the appendix.
In all experiments, we selected the best regularization weight from among $[10^{1}, 10^{0},\dots, 10^{-7}]$ for Absum and standard regularization methods, 
and the best spectral norm $\sigma$ from among $[0.01, 0.1, 0.5, 1.0, 10]$
for SNC.
In SNC, we clipped $\sigma$ once in 100 iterations due to the large computational cost.
For MNIST and FMNIST, we stacked two convolutional layers 
and two fully connected layers and used ReLUs as activation functions.
For CIFAR10, CIFAR100, and SVHN, the model architecture was ResNet-18 \cite{resnet}.
We used SFA with $l,m\!\in\!\{0,1,\dots,27\}$ and $\varepsilon=80/255$ on MNIST and FMNIST,
and $l, m\!\in\! \{0,1,\dots,31\}$ and $\varepsilon\!=\!10/255$ on CIFAR10, CIFAR100, and SVHN.

In addition, we used PGD to evaluate robustness against transferred attacks
and white-box attacks since PGD is a sophisticated white-box attack.
In addition to naive training, we evaluated robustness against PGD
when we used adversarial training \cite{pgd,pgd2}
 with each method because Absum can be used with it due to its simplicity.
Model architectures were the same as in the experiments involving SFA.
The hyperparameter settings for PGD were based on \cite{pgd2}.
The $L_\infty$ norm of the perturbation $\varepsilon$ was set to $\varepsilon\!=\!0.3$ for MNIST and FMNIST
and $\varepsilon\!=\!8/255$ 
for CIFAR10, CIFAR100, and SVHN at training time.
For PGD, we updated the perturbation for 40 iterations with a step size of 0.01 on MNIST and FMNIST
at training and evaluation times,
and on CIFAR10, CIFAR100, and SVHN, for 7 iterations with a step size of 2/255
at training time and 100 iterations with the same step size at evaluation time.
\subsection{Effectiveness and Efficiency}
\subsubsection{Robustness against SFA}
Table~\ref{sfatab} lists the accuracies of each method on test data perturbed by SFA
and selected $\lambda$ and $\sigma$.
In this table, Avg. denote robust accuracies against SFA averaged over $(l,m)$, and
Min. denotes minimum accuracies among hyperparameters $(l, m)$,
i.e., robust accuracies against optimized SFA. 
CLN denotes accuracies on clean data.
The $\lambda$ and $\sigma$ are selected so that Avg. would become the 
highest.
In \rtab{sfatab}, Absum and SNC are more robust against SFA compared with WD and $L_1$.
Although SNC is more robust than Absum on CIFAR10 and CIFAR100,
clean accuracies of SNC are less than those of Absum and
the computation time of SNC is larger than that of Absum as discussed below.
In the appendix, we provide
accuracies against each $(l,m)$ and 
the results in which $\lambda$ and $\sigma$ are selected so that each of CLN 
and Min. would become highest.

Figure~\ref{AccVsLam} shows the test accuracies of the methods on MNIST and CIFAR10 perturbed by SFA against regularization weights.
In this figure, min and max denote the minimum and maximum test accuracies among $(l,m)$, respectively,
and avg. denotes test accuracies  averaged over $(l,m)$.
All methods tend to increase their minimum accuracy
(results of SFA with optimized $(l, m)$) according to the regularization weight.
However, $L_1$ and WD significantly decrease in accuracy when the regularization weight
is higher than $10^{-1}$.
On the other hand, Absum with the high regularization weight does not decrease in accuracy.
Figure~\ref{trainLoss} shows the lowest training loss $\frac{1}{N}\sum f$ in training on CIFAR10 against $\lambda$.
WD and $L_1$ with a large $\lambda$ prevent minimization of the training loss.
On the other hand, Absum with a large $\lambda$ can decrease the training loss because the search space of $\bm{K}^{(i)}\!\in\!\bm{R}^{h\times h}$ has $h^2\!-\!1$ dimensional space even if $g(\bm{K}^{(i)})\!=\!0$.
In conclusion, standard regularization methods might not be 
effective in improving robustness against SFA
because the high regularization weight imposes too tight of constraints to minimize the loss function.
On the other hand, Absum imposes looser constraints; thus,
we can improve robustness while maintaining classification performance.
The results of other datasets 
are almost the same as \rfig{AccVsLam} (included in the appendix). We also provide figures 
showing the accuracy and the training loss of SNC against $\sigma$ in the appendix.
\begin{figure}[tbp]
\centering
\subfloat[MNIST]{\includegraphics[width=\linewidth ,clip]{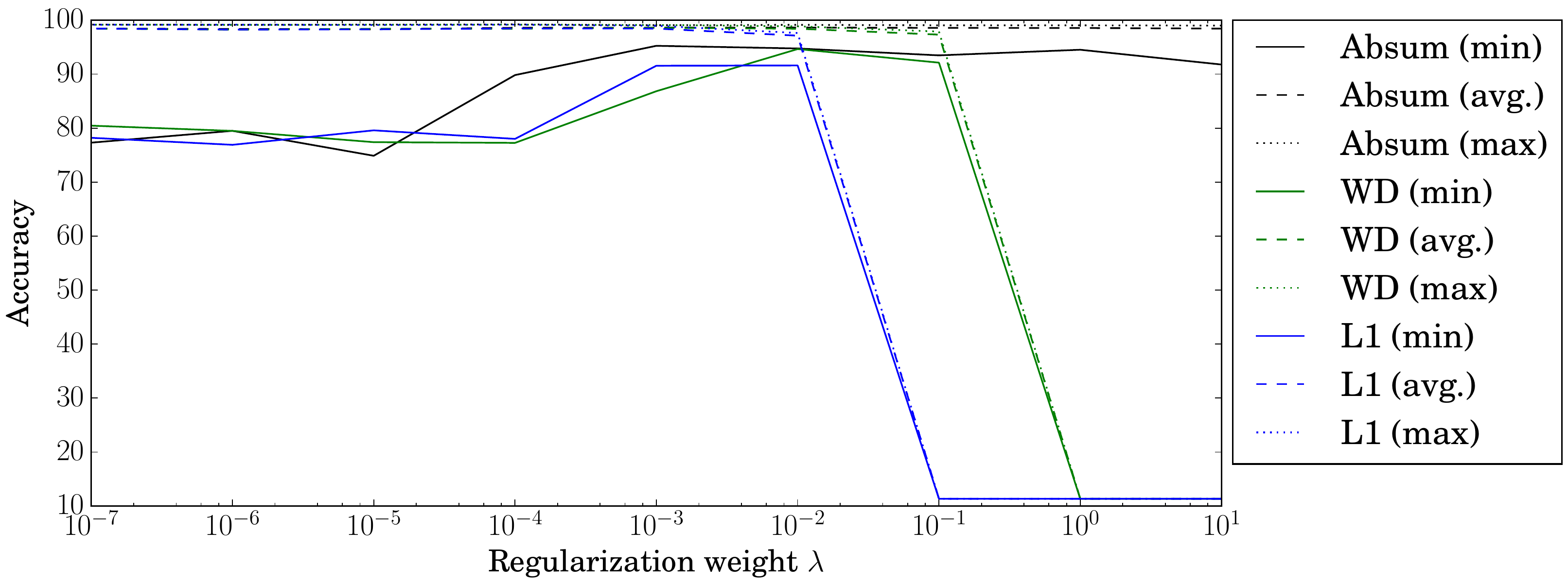}}\\
\subfloat[CIFAR10]{\includegraphics[width=\linewidth ,clip]{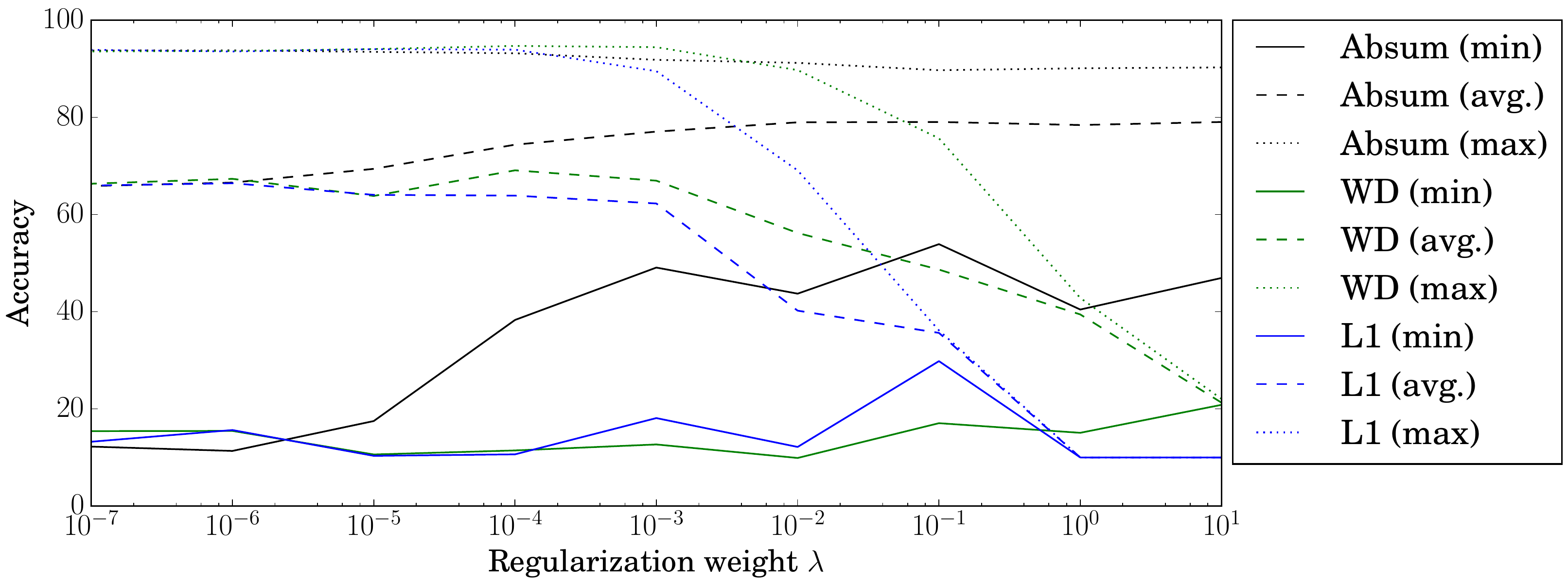}}
\caption{Accuracy on test datasets perturbed by SFA vs $\lambda$}
\label{AccVsLam}
\end{figure}
\begin{figure}[t]
\centering
\includegraphics[width=0.8\linewidth ,clip]{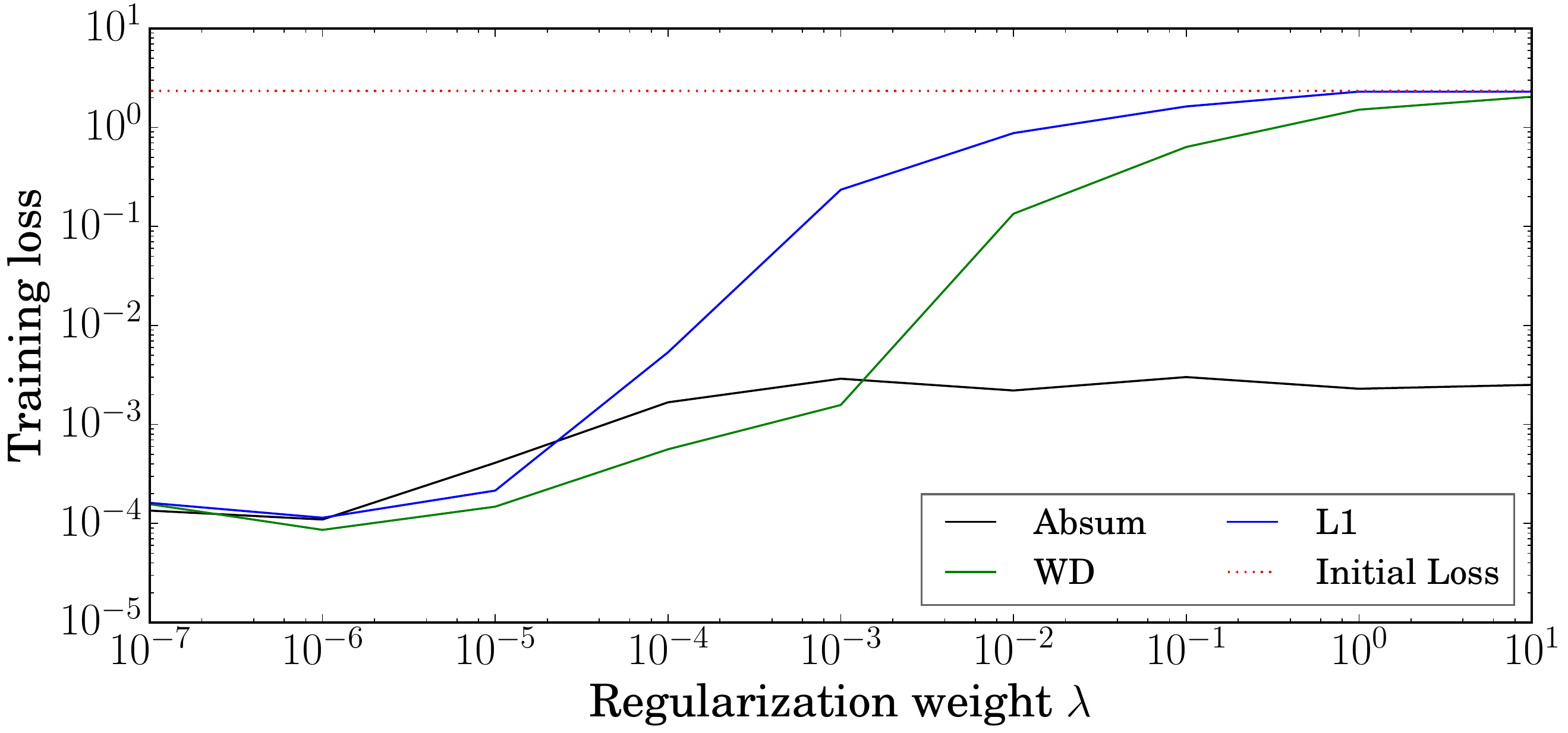}
\caption{Training loss vs $\lambda$}
\label{trainLoss}
\end{figure}
\subsubsection{Computational Cost}
To confirm the efficiency of Absum, we evaluated the runtime for one epoch.
We also evaluated the runtime of the forward and backward
 processes of ResNet-18 for one image when input size increases
by using random synthetic three channels images whose sizes
were 32$\times$32, 64$\times$64, 128$\times$128, 256$\times$256, 512$\times$512, and 1024$\times$1024 with 10 random labels.
The results are shown in \rfig{CompTime}.
As shown in \rfig{CompTime}~(a), Absum is about ten times faster than SNC on $32\times 32$ image datasets
with ResNet18. 
The runtime of SNC is comparable to those of other methods on MNIST and FMNIST
because we use only two convolution layers, and image sizes of these datasets are smaller than other
 datasets.
In \rfig{CompTime}~(b), the runtime of Absum does not increase significantly
compared with SNC and the increase in the runtime of Absum is similar to
those of standard regularization methods.
This is because the computational cost of Absum does not depend on the size of input images.
Since SNC incurs large computational cost and depends on the input size, we could not evaluate
the runtime when the image width is larger than 256.
\begin{figure}[tb]
\centering
\subfloat[Runtime for one epoch]{\includegraphics[width=0.488\linewidth]{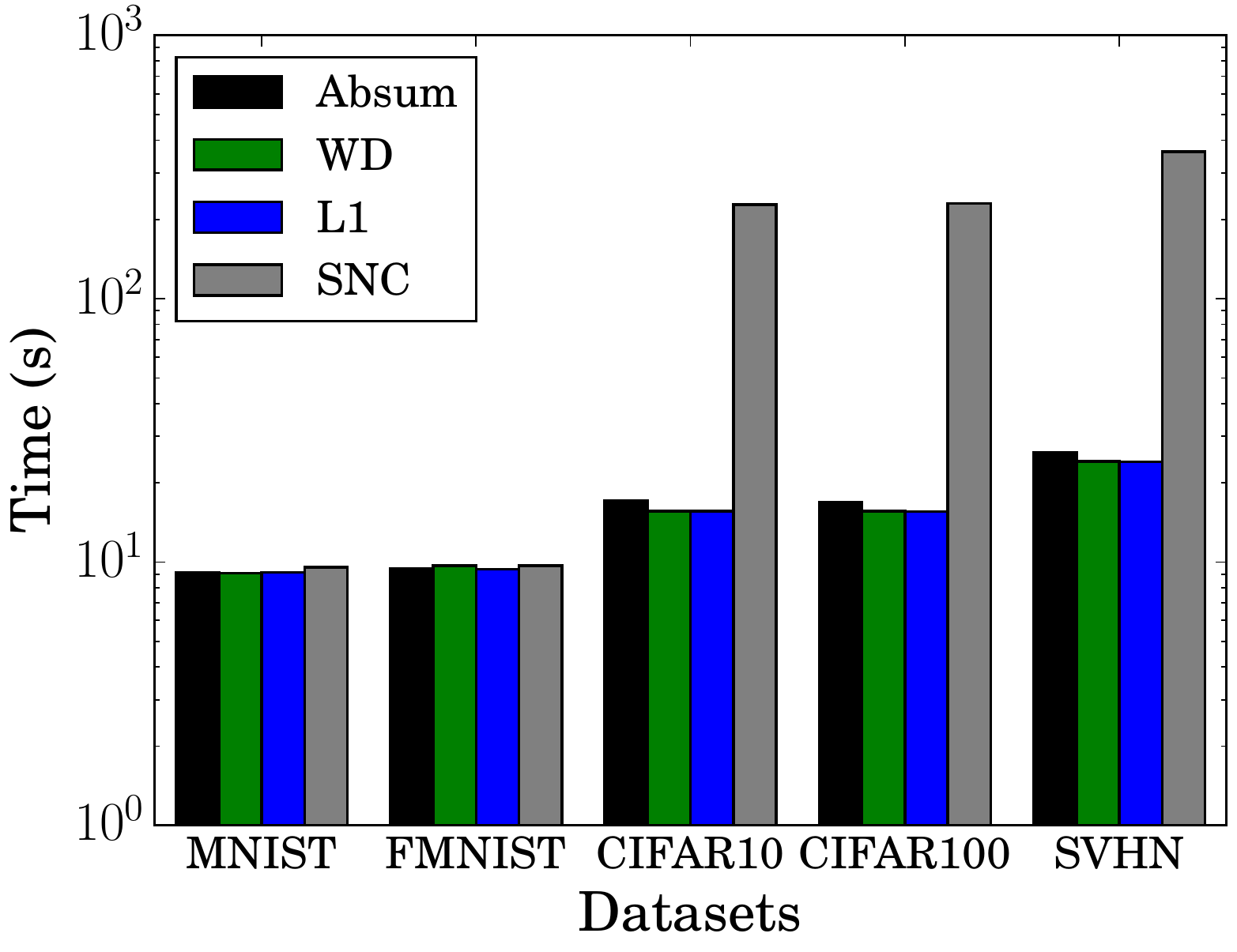}}\centering\hfill
\subfloat[Runtime vs. input size]{\includegraphics[width=0.488\linewidth]{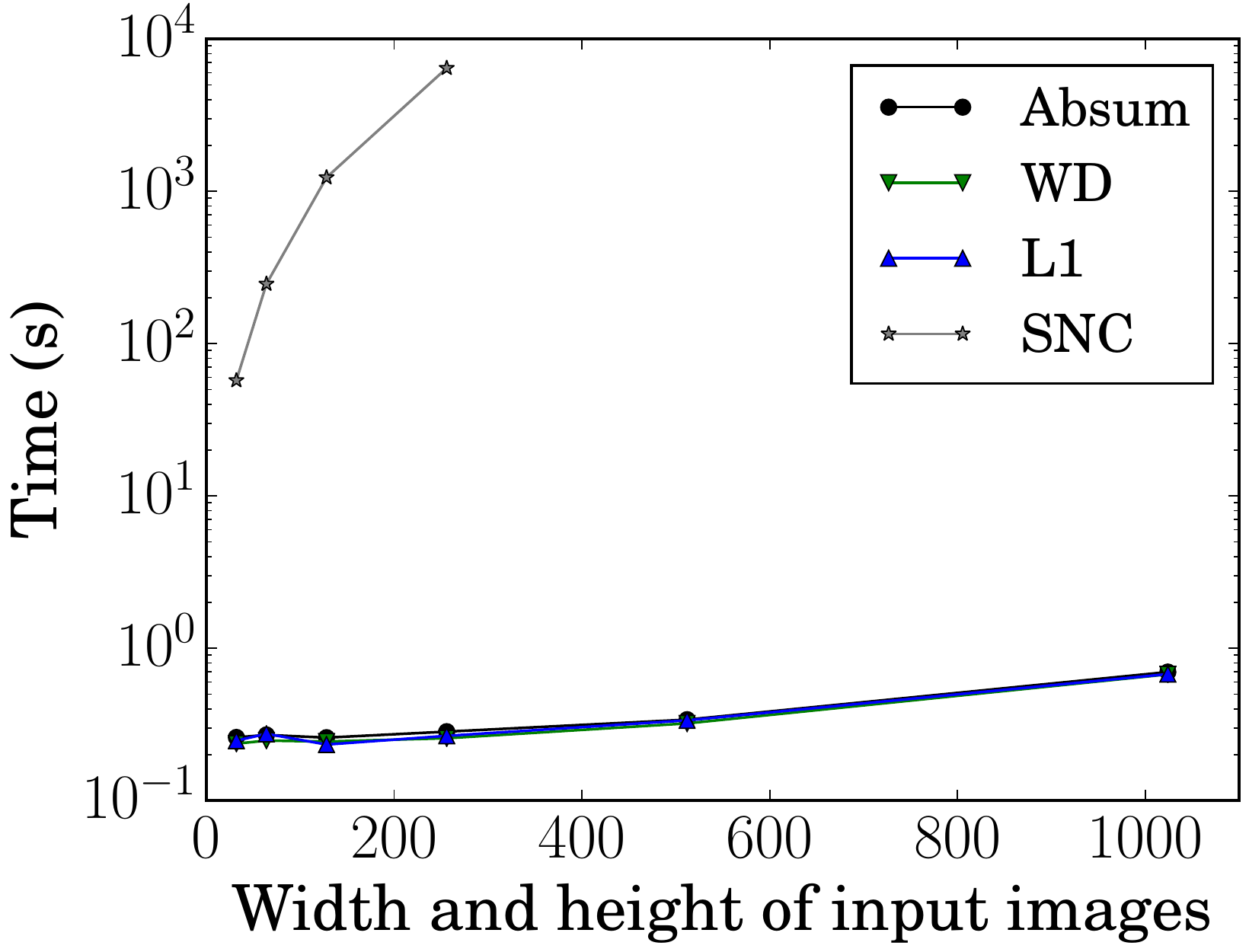}}
%
\caption{Computation time}
\label{CompTime}
\end{figure}
\subsection{Extensive Empirical Investigation}
\subsubsection{Robustness against Transferred Attacks}
Sensitivity to SFA is caused by convolution operation and is universal for CNNs.
This sensitivity might be a cause of transferability of adversarial attacks, and robust
CNNs against SFA can be robust against transferred attacks.
To confirm this hypothesis, we investigate sensitivity to transferred PGD.
We generate adversarial examples by using the substitute models that were trained under 
the same setting as that presented in the previous section but with different random initializations.
We used these substitute models rather than completely different models
because they can be regarded as one of the worst-case instances for transferred attacks \cite{pgd2}.
The accuracies on these adversarial examples are listed in \rtab{transfer}.
Absum and SNC improve robustness compared to WD and $L_1$.
Tables~\ref{sfatab} and \ref{transfer} imply that the method of
improving robustness against SFA can also improve robustness against the transferred attacks.
This is the first study that shows the relation between
robustness against SFA and against transferred white-box attacks.
\begin{table}[t]
\centering
\caption{Robust accuracy against transferred PGD attacks. Results of other $\varepsilon$
are shown in the appendix. 
w/o Reg. denotes results of training without regularization.}
\label{transfer}
\scalebox{0.79}{
\begin{tabular}{ccccccc}
\toprule
&Absum&WD&L1&SNC&w/o Reg.\\
\midrule
MNIST ($\varepsilon\!=\!0.2$)&{\bf 76.34}&48.94&66.48&71.30&65.87\\
FMNIST ($\varepsilon\!=\!0.2$)&{\bf 30.08}&3.46&18.35&21.31&19.74\\
CIFAR10 ($\varepsilon\!=\!4/255$)&26.29&18.48&15.66&{\bf 48.85}&15.85\\
CIFAR100 ($\varepsilon\!=\!4/255$)&18.57&17.40&16.68&{\bf 36.57}&16.68\\
SVHN ($\varepsilon\!=\!4/255$)&49.11&40.49&52.79&46.36&\bf{54.39}
\\
\bottomrule
\end{tabular}
}
\end{table}
\subsubsection{Sensitivity in Frequency Domain}
\begin{figure}[tb]
\centering
\includegraphics[width=\linewidth ,clip]{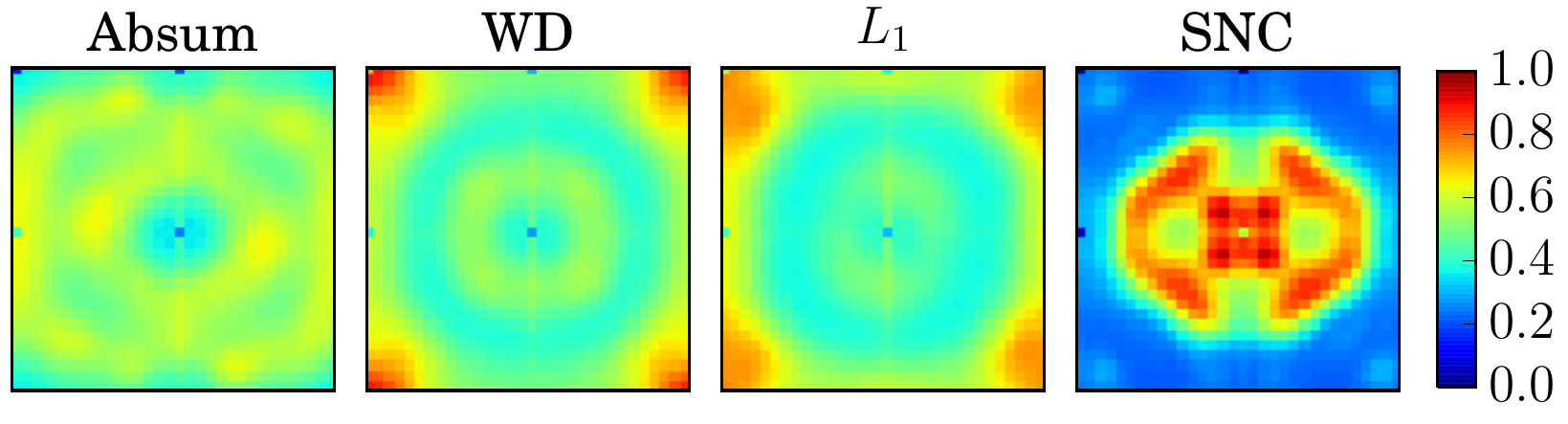}
\caption{Power spectra of PGD perturbations on CIFAR10. 
Magnitudes in low-frequency and high-frequency domains
 are located near center and edge of each figure, respectively.
They are normalized as in (0, 1) after logarithmic transform.}
\label{PGDPower}
\end{figure}
\begin{table}[t]
\centering
\caption{Robust accuracy against High-Frequency attacks.} 
\label{hfa}
\scalebox{0.8}{
\begin{tabular}{ccccccc}
\toprule
&Absum&WD&L1&SNC&w/o Reg.\\
\midrule
MNIST&99.00&98.98&\bf{99.10}&98.97&99.01\\
FMNIST&84.15&83.91&82.56&\bf{84.30}&84.03\\
CIFAR10&64.51&52.82&47.01&{\bf 82.11}&47.46\\
CIFAR100&41.44&36.15&31.53&{\bf 61.22}&31.80\\
SVHN&{\bf 52.95}&28.11&17.03&18.75&11.13
\\
\bottomrule
\end{tabular}
}
\end{table}
Several studies show that CNNs are sensitive to high-frequency noise
unlike human visual systems since CNNs are biased towards
high-frequency information \cite{highF,Fp}.
From the robustness against SFA, which is regarded as single-frequency noise,
 Absum and SNC can be expected not to bias CNNs towards
high-frequency information. To confirm this hypothesis, we first investigate
the power spectra of adversarial perturbations of models trained using each method.
Next, we investigate robustness against High-Frequency attacks, which remove
high-frequency components of image data. High-Frequency attacks have a hyperparameter of radius
that determines the cutoff frequency, and we set it as half the image width. 
In these experiments, $\lambda$ and $\sigma$ are the same as those in \rtab{sfatab}.

Figure~\ref{PGDPower} shows the power spectra of PGD perturbations on CIFAR10 and 
\rtab{hfa} lists the accuracies on the test data processed by High-Frequency attacks.
In \rfig{PGDPower}, we shift low frequency components to the center of the spectrum and
power spectra are averaged over test data and RGB channels.
This figure shows that vulnerabilities of WD and $L_1$ are biased
in the high-frequency domain, while vulnerability of SNC is highly biased in the low-frequency domain.
The power spectrum of Absum is not biased towards a specific frequency domain.
Due to these characteristics, SNC and Absum are more robust against High-Frequency attacks
than WD and $L_1$ (\rtab{hfa}).
Since human visual systems can perceive low-frequency noise better than high-frequency noise,
attacks for Absum and SNC might be more perceptible than attacks for WD and $L_1$.
Note that Absum is more robust against
high-pass filtering than SNC, which is presented in the appendix.
This result supports that Absum does not bias CNNs
towards a specific frequency domain while SNC biases CNNs
towards the low-frequency domain. 
\subsubsection{Robustness against PGD with Adversarial Training}
Table~\ref{data ad} 
lists the accuracies of models trained by 
adversarial training on data perturbed by PGD.
When using adversarial training, Absum improves robustness against PGD, 
the highest among regularization methods, on almost all datasets.
This implies that 
sensitivity to SFA is one of the causes of vulnerabilities of CNNs.
The $\lambda$ of Absum tends to be higher than the $\lambda$ of WD and $L_1$;
thus, Absum can also improve robustness against PGD
without deteriorating classification performance due to its looseness.
Note that Absum does not improve robustness against PGD
whithout adversarial training
since the structural sensitivity of CNNs does not necessarily cause all vulnerabilities of
CNN-based models (we discussed this in the appendix).
Even so, Absum is more effective than other standard regularizations since
it can efficiently improve robustness against black-box attacks (SFA, transferred attacks, and High-Frequency attacks)
and enhance adversarial training, as mentioned above.
\begin{table}[tb]
\centering
\caption{Accuracies (\%) on test datasets perturbed by PGD.}
\label{data ad}
\scalebox{0.74}{
\begin{tabular}{lrrrrrr}
\toprule
MNIST&\multicolumn{6}{c}{Adversarial training}\\
\midrule
$\varepsilon$&   0.05 &   0.10 &   0.15 &   0.20 &   0.25 &   0.30 \\
\midrule
Absum $\lambda=10^{-3}$& \bf{96.01}&  \bf{94.92} & \bf{93.75}&  \bf{92.73}& \bf{91.59}&  \bf{90.78}\\
WD $\lambda=10^{-4}$&  92.97 &  91.34 &  89.69 &  88.02 &  87.05 &  85.96\\
L1 $\lambda=10^{-4}$&  93.12 &  91.86 &  90.60 &  89.28 &  88.25 &  87.06 \\
SNC $\sigma=10$&  91.92 &  89.43 &  86.77 &  83.89 &  80.24 &  76.92 \\
w/o Reg. &  91.57 &  89.85 &  88.43 &  86.87 &  85.76 &  84.86 \\
\bottomrule
\toprule
FMNIST&\multicolumn{6}{c}{Adversarial training}\\
\midrule
%
Absum $\lambda=10^{-3}$&  \bf{66.94} &  \bf{65.92} &  \bf{65.77} &  \bf{65.52} &  \bf{65.24} &  \bf{64.95} \\
WD $\lambda=10^{-5}$&  65.38 &  63.64 &  62.91 &  62.60 &  62.11 &  61.96 \\
L1 $\lambda=10^{-6}$&  66.13 &  64.16 &  62.95 &  62.23 &  61.64 &  61.66 \\
SNC $\sigma=10$&  51.58 &  49.33 &  47.31 &  45.85 &  44.86 &  44.04 \\
w/o Reg. &  63.36 &  61.66 &  61.15 &  60.97 &  60.46 &  60.26 \\
\bottomrule
\end{tabular}
}
\centering
\scalebox{0.79}{
\begin{tabular}{lrrrrr}
\toprule
CIFAR10&\multicolumn{5}{c}{Adversarial training}\\
\midrule
$\varepsilon$ &   4/255  &   8/255  &   12/255 &   16/255 &   20/255 \\
\midrule
Absum $\lambda=10^{-5}$&  69.42 &  49.39 &  \bf{30.22} &  \bf{15.03} &  \bf{6.54} \\
WD $\lambda=10^{-5}$&   \bf{69.48} &  49.38 &  29.37 &  14.45 &  6.06 \\
L1 $\lambda=10^{-5}$& 68.99 &  \bf{49.45} &  29.51 &  14.68 &  6.31 \\
SNC $\sigma=10$&  68.47 &  48.74 &  29.07 &  14.32 &  6.04 \\
w/o Reg. &  68.46 &  48.77 &  29.20 &  14.50 &  6.08 \\
\bottomrule
\toprule
CIFAR100&\multicolumn{5}{c}{Adversarial training}\\
\midrule
Absum $\lambda=10^{-4}$&  \bf{42.19} &  \bf{27.25} &  15.89 &  \bf{8.47} &  4.14 \\
WD $\lambda=10^{-7}$&  41.14 &  27.05 &  \bf{15.90} &  8.26 &  \bf{4.28} \\
L1 $\lambda=10^{-4}$&  40.75 &  26.14 &  14.45 &  7.61 &  3.67 \\
SNC $\sigma=10$&  40.90 &  26.61 &  15.53 &  8.32 &  4.13 \\
w/o Reg. &  40.70 &  26.24 &  14.85 &  7.94 &  3.86 \\
\bottomrule
\toprule
SVHN&\multicolumn{5}{c}{Adversarial training}\\
\midrule
Absum $\lambda=10^{-5}$&  \bf{77.78} &  \bf{52.74} &  \bf{27.39} &  11.97 &  5.50 \\
WD $\lambda=10^{-7}$&  76.66 &  50.40 &  25.05 &  10.86 &  5.04 \\
L1 $\lambda=10^{-6}$&  76.50 &  51.49 &  27.10 &  \bf{12.12} &  \bf{5.63} \\
SNC $\sigma=1.0$&  77.23 &  50.80 &  25.24&  11.04 &  5.03 \\
w/o Reg. & N/A& N/A& N/A& N/A& N/A\\
\bottomrule
\end{tabular}
}
\end{table}
\section{Conclusion}
We proposed Absum; an efficient defense
method against SFA that can reduce the structural sensitivity of CNNs with ReLUs while
its computational cost remains comparable to standard regularizations.
By reducing the structural sensitivity, 
Absum can improve robustness against not only SFA, but also
transferred PGD, and High-Frequency attacks.
Due to its simplicity, Absum can be used with other methods, 
and Absum can enhance adversarial training of PGD.
\bibliographystyle{aaai}
\bibliography{Absum.bib}
\appendix
\section*{Appendix}
\section{Proofs of Lemmas}
In this section, we provide the proofs of the lemmas.
\begin{lemma}
If $\bar{\bm{k}}=[\bar{k}_1,\dots,\bar{k}_{\bar{n}}]^T \in \bm{R}^{\bar{n}}$,
 $g(\bar{\bm{k}})=|\sum_i^{\bar{n}} \bar{k}_i|$ is a convex function. 
\end{lemma}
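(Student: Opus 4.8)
The plan is to show directly that $g(\bar{\bm{k}}) = |\sum_{i=1}^{\bar{n}} \bar{k}_i|$ is convex by exhibiting it as the composition of a linear map and a convex function, then invoking the standard fact that such a composition is convex. First I would define the linear functional $\ell : \bm{R}^{\bar{n}} \to \bm{R}$ by $\ell(\bar{\bm{k}}) = \sum_{i=1}^{\bar{n}} \bar{k}_i = \bm{1}^T \bar{\bm{k}}$, which is affine (indeed linear) and hence both convex and concave. Then I would observe that $g = |\cdot| \circ \ell$, i.e. $g(\bar{\bm{k}}) = |\ell(\bar{\bm{k}})|$, where the absolute value $|\cdot| : \bm{R} \to \bm{R}$ is a convex function on the real line (a one-variable verification from the triangle inequality: $|t a + (1-t) b| \le t|a| + (1-t)|b|$ for $t \in [0,1]$).

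The key step is then to combine these: for any $\bar{\bm{k}}, \bar{\bm{k}}' \in \bm{R}^{\bar{n}}$ and $t \in [0,1]$,
\begin{align}
g(t\bar{\bm{k}} + (1-t)\bar{\bm{k}}')
&= |\ell(t\bar{\bm{k}} + (1-t)\bar{\bm{k}}')| \nonumber\\
&= |t\,\ell(\bar{\bm{k}}) + (1-t)\,\ell(\bar{\bm{k}}')| \nonumber\\
&\leq t\,|\ell(\bar{\bm{k}})| + (1-t)\,|\ell(\bar{\bm{k}}')| \nonumber\\
&= t\,g(\bar{\bm{k}}) + (1-t)\,g(\bar{\bm{k}}'),\nonumber
\end{align}
where the second equality uses linearity of $\ell$ and the inequality uses convexity of $|\cdot|$ on $\bm{R}$. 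This chain of (in)equalities is exactly the definition of convexity of $g$, so the proof concludes.

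There is essentially no obstacle here — the statement is elementary. The only thing to be careful about is making the argument self-contained: I would either cite convexity of $|\cdot|$ and of affine maps as standard, or spell out the one-line triangle-inequality justification for $|\cdot|$ being convex so that the reader needs nothing beyond the triangle inequality on $\bm{R}$. An equally valid alternative I could present in one sentence: $g(\bar{\bm{k}}) = \max\{\bm{1}^T\bar{\bm{k}},\, -\bm{1}^T\bar{\bm{k}}\}$ is the pointwise maximum of two affine (hence convex) functions, and a pointwise maximum of convex functions is convex. I would pick the composition argument as the primary one since it most directly mirrors the structure the paper uses elsewhere.
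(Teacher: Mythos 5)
Your proposal is correct and is essentially the paper's own argument: the paper likewise uses linearity of the sum to pull $t$ and $1-t$ inside the absolute values and then concludes with the triangle inequality $|\alpha|+|\beta|\geq|\alpha+\beta|$, which is exactly your ``convexity of $|\cdot|$ composed with a linear map'' step written out explicitly. Your framing is slightly more abstract but the mathematical content is identical.
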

\begin{proof}
If $g(\cdot)$ is a convex function, 
we have $g(t\bm{x}+(1-t)\bm{y})\leq tg(\bm{x})+(1-t)g(\bm{y})$, where $t\in [0,1]$ and $\forall \bm{x},\bm{y}\in \bm{R}^{\bar{n}}$.
Therefore, we investigate $J=tg(\bm{x})+(1-t)g(\bm{y})-g(t\bm{x}+(1-t)\bm{y})$, and
if $J\geq 0$, we prove the lemma. We have
\begin{align}
\scalebox{0.9}{$
\textstyle
J=$}&\scalebox{0.9}{$
\textstyle
t|\sum_i x_i|+(1-t)|\sum_i y_i|-|\sum_i (tx_i +(1-t)y_i)|,\nonumber $} \\
\scalebox{0.9}{$=$}&\scalebox{0.9}{$
\textstyle
|t\sum_i x_i|+|(1-t)\sum_i y_i|-|t\sum_i x_i +(1-t)\sum_iy_i|,$}
\end{align}
since $t\geq 0$ and $1-t\geq 0$.
Let $\alpha=t\sum_i x_i$ and $\beta=(1-t)\sum_i y_i$; thus, we have
\begin{align}
J=&|\alpha|+|\beta|-|\alpha+\beta|.
\end{align}
From the triangle inequality, we have $J\geq 0$; thus, this completes the proof.
\end{proof}
\begin{lemma}
If $\bar{\bm{k}}=[\bar{k}_1,\dots,\bar{k}_{\bar{n}}]^T \in \bm{R}^{\bar{n}}$, $\bm{u}\in \bm{R}^{\bar{n}}$
 and $g(\bar{\bm{k}})=|\sum_i^{\bar{n}} \bar{k}_i|$, 
we have 
\begin{align}
\textstyle
\mathrm{prox}_{\lambda g}(\bm{ \bar{\bm{k}}})=&
\textstyle
\mathrm{arg}\min_{\bm{u}}\frac{1}{2}||\bm{u}-\bar{\bm{k}}  ||_2^2+\lambda|\sum_i^{\bar{n}}u_i|\label{oppr}\\
=&
\textstyle
\begin{cases}
\bm{ \bar{\bm{k}}}+\lambda\bm{1}&\mathrm{if}~~~\sum_i^{\bar{n}} \bar{k}_i<-{\bar{n}}\lambda,\\
\bm{ \bar{\bm{k}}}-\frac{\sum_i^{\bar{n}} k_i}{\bar{n}}\bm{1}&\mathrm{if}~~~-{\bar{n}}\lambda
\leq \sum_i^{\bar{n}} \bar{k}_i\leq{\bar{n}}\lambda,\\
\bm{ \bar{\bm{k}}}- \lambda\bm{1}&\mathrm{if}~~~\sum_i^{\bar{n}} \bar{k}_i>{\bar{n}}\lambda.
\end{cases}
\end{align}
\end{lemma}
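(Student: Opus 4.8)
The plan is to use strong convexity of the objective in \req{oppr}. By Lemma~\ref{convlem} the map $\bm{u}\mapsto\lambda|\sum_i u_i|$ is convex, so adding the $1$-strongly convex quadratic $\frac12\|\bm{u}-\bar{\bm{k}}\|_2^2$ yields a strongly convex function, which therefore has a \emph{unique} minimizer $\bm{u}^{\star}$; consequently $\bm{u}^{\star}$ is characterized by the first-order condition $\bm{0}\in\bm{u}^{\star}-\bar{\bm{k}}+\lambda\,\partial\big(|\bm{1}^{T}\bm{u}^{\star}|\big)$. Since $\bm{u}\mapsto|\bm{1}^{T}\bm{u}|$ is the absolute value composed with the linear map $\bm{u}\mapsto\bm{1}^{T}\bm{u}$, the subdifferential chain rule gives $\partial(|\bm{1}^{T}\bm{u}^{\star}|)=\{s\bm{1}:s\in\partial|\cdot|(\bm{1}^{T}\bm{u}^{\star})\}$, where $\partial|\cdot|(t)=\{\mathrm{sign}(t)\}$ for $t\neq 0$ and $\partial|\cdot|(0)=[-1,1]$. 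Hence the optimality condition becomes: there exists $s\in\partial|\cdot|(\sum_i u^{\star}_i)$ with $\bm{u}^{\star}=\bar{\bm{k}}-\lambda s\bm{1}$.

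Next I would split into three cases according to the sign of $\sum_i u^{\star}_i$, writing $S:=\sum_i\bar{k}_i$ for brevity. If $\sum_i u^{\star}_i>0$ then $s=1$, so $\bm{u}^{\star}=\bar{\bm{k}}-\lambda\bm{1}$ and $\sum_i u^{\star}_i=S-\bar{n}\lambda$, which is positive exactly when $S>\bar{n}\lambda$. Symmetrically, $\sum_i u^{\star}_i<0$ forces $s=-1$, hence $\bm{u}^{\star}=\bar{\bm{k}}+\lambda\bm{1}$, and this case is self-consistent exactly when $S<-\bar{n}\lambda$. If $\sum_i u^{\star}_i=0$ then $s$ may be any value in $[-1,1]$ and $\bm{u}^{\star}=\bar{\bm{k}}-\lambda s\bm{1}$; imposing $0=\sum_i u^{\star}_i=S-\bar{n}\lambda s$ pins down $s=S/(\bar{n}\lambda)$, which lies in $[-1,1]$ precisely when $-\bar{n}\lambda\le S\le\bar{n}\lambda$, giving $\bm{u}^{\star}=\bar{\bm{k}}-\frac{S}{\bar{n}}\bm{1}$. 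These three regimes of $S$ cover all of $\bm{R}$ and coincide on the overlaps $S=\pm\bar{n}\lambda$, so by uniqueness of $\bm{u}^{\star}$ they deliver exactly the stated closed form.

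An equivalent route, which I would use as a sanity check, is to decouple the coordinates first: for a fixed value $t$ of $\bm{1}^{T}\bm{u}$, the term $\frac12\|\bm{u}-\bar{\bm{k}}\|_2^2$ is minimized by the Euclidean projection of $\bar{\bm{k}}$ onto the hyperplane $\{\bm{u}:\bm{1}^{T}\bm{u}=t\}$, namely $u_i=\bar{k}_i+(t-S)/\bar{n}$, with optimal residual $(t-S)^2/(2\bar{n})$. The problem then collapses to the scalar minimization $\min_{t}\;(t-S)^2/(2\bar{n})+\lambda|t|$, whose minimizer is the soft-threshold $t^{\star}=\mathrm{sign}(S)\max(|S|-\bar{n}\lambda,0)$; substituting $t^{\star}$ back into $u_i=\bar{k}_i+(t^{\star}-S)/\bar{n}$ reproduces the three-case formula in the statement.

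I do not expect a genuine obstacle here. The only points requiring care are: invoking uniqueness from strong convexity, so that the subgradient inclusion is sufficient and not merely necessary; computing the subdifferential of $\bm{u}\mapsto|\bm{1}^{T}\bm{u}|$ correctly (it is supported on $\mathrm{span}\{\bm{1}\}$, which is why the update is a uniform shift of all coordinates); and checking that the three cases partition $\bm{R}$ and agree at the thresholds $S=\pm\bar{n}\lambda$ — all routine. Lemma~\ref{prlem} for Absum then follows immediately by taking $\bar{n}=h^{2}$.
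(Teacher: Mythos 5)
Your proposal is correct and takes essentially the same approach as the paper: a three-way case split on the sign of $\sum_i u_i$ at the optimum, with stationarity in the smooth cases and a subgradient condition in the $\sum_i u_i=0$ case. Your handling of that last case is in fact a bit cleaner than the paper's --- you use the exact chain-rule subdifferential $\{s\bm{1}:|s|\le 1\}$ and invoke strong convexity for uniqueness and sufficiency, whereas the paper bounds the subgradient only by $||\bm{v}||_\infty\le 1$ and then performs an additional minimization over $\bm{v}$ to recover the uniform shift.
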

\begin{proof}
For clarity, let $J=\frac{1}{2}||\bm{u}-\bar{\bm{k}}  ||_2^2+\lambda|\sum_i^{\bar{n}}u_i|$.
We have three cases; 
(a) $\sum_i^{\bar{n}}u_i>0$, (b) $\sum_i^{\bar{n}}u_i<0$, and (c) $\sum_i^{\bar{n}}u_i=0$.
In (a), we have $|\sum_i^{\bar{n}}u_i|=\sum_i^{\bar{n}}u_i$, and 
$\frac{\partial J}{\partial u_i}=u_i-\bar{k}_i+\lambda=0$ at the optimal point.
Therefore, $u_i=\bar{k}_i-\lambda$, and the solution becomes $\bm{u}=\bar{\bm{k}}-\lambda\bm{1}$.
The condition is $\sum_i^{\bar{n}}u_i=\sum_i^{\bar{n}} \bar{k}_i-{\bar{n}}\lambda>0$, i.e., $\sum_i^{\bar{n}} \bar{k}_i>{\bar{n}}\lambda$.
In (b), we have $|\sum_i^{\bar{n}}u_i|=-\sum_i^{\bar{n}}u_i$, and we can optimize $J=\frac{1}{2}||\bm{u}-\bar{\bm{k}}  ||_2^2-\lambda\sum_i^{\bar{n}}u_i$ in the same manner as (a).
As a result, $\bm{u}=\bar{\bm{k}}+\lambda\bm{1}$ if $\sum_i^{\bar{n}} \bar{k}_i<-{\bar{n}}\lambda$.
In (c), $|\sum_i^{\bar{n}}u_i|$ is non-differentiable, but we can use subgradient $\bm{v}$ such as
$|\sum_i^{\bar{n}}z_i|\geq |\sum_i^{\bar{n}}u_i|+\bm{v}^T(\bm{z}-\bm{u})$. Let $B$ be $B=\{\bm{u}\pm r\bm{e}_i|i=1,\dots,n\}$ where small $r>0$ and $\bm{e}_k$ be the standard basis; thus, we have $M=\max_{\bm{z}\in\bm{B}}|\sum_i z_i|=|\sum_i u_i\pm r|=r$ when $|\sum_i u_i|=0$.
As a result, $\bm{v}$ is bounded as $||\bm{v}||_\infty\leq \frac{M-|\sum_i u_i|}{r}= 1$.
We then have $\frac{\partial J}{\partial u_i}=u_i-\bar{k}_i+\lambda v_i=0$; thus, $\bm{u}=\bar{\bm{k}}-\lambda \bm{v}$.
Since $||\bm{v}||_\infty\leq 1$, we have $-{\bar{n}}\lambda\leq\lambda\sum v_i\leq {\bar{n}}\lambda$.
Thus, the condition becomes $-{\bar{n}}\lambda\leq\sum_i \bar{k}_i\leq {\bar{n}}\lambda$ since $\sum_i u_i=\sum_i \bar{k}_i+\lambda\sum_i v_i=0$. By substituting $\bm{u}=\bar{\bm{k}}-\lambda \bm{v}$ into $J$, we have 
$J=\frac{1}{2}||\lambda\bm{v}  ||_2^2$ subject to $\sum_i v_i=\frac{\sum_i k_i}{\lambda}$
 and $||\bm{v}||_\infty\leq 1$. 
Thus, the minimum point is $v_1=v_2=\dots=v_{\bar{n}}=\frac{\sum_i k_i}{{\bar{n}}\lambda}$, i.e.,
$\bm{v}=\frac{\sum_i k_i}{{\bar{n}}\lambda}\bm{1}$. Therefore, $\bm{u}=\bar{\bm{k}}-\frac{\sum_i k_i}{\bar{n}}\bm{1}$
is the minimum point when $-{\bar{n}}\lambda\leq\sum_i \bar{k}_i\leq {\bar{n}}\lambda$.
This completes the proof.
\end{proof}
\section{Inequality of Induced Norms for Convolution}
The $u+n(v-1)$-th singular value of a doubly circulant matrix $\bm{A}$ can be written as 
$\sigma_{u,v}\!=\!
|\sum_{l,m}k_{l,m}\!\exp(j\frac{2\pi}{n}(ul+vm))|$ (not arranged in descending order),
 and we have $||\bm{A}||_2\!=\!\max_{u,v}\sigma_{u,v}\!\leq\!\sum_{l,m}|k_{l,m}|\!\leq\!||\bm{A}||_{\infty}$. 
Therefore, the spectral norm of $\bm{A}$ is bounded above
by the induced $\infty$-norm as $||\bm{A}||_2\!\leq\!||\bm{A}||_{\infty}$.
\section{$L_2$ Regularization and Induced Norm}
In this section, we explain that $L_2$ regularization
(weight decay: WD) can 
constrain the induced norm of a convolutional layer.
The $L_2$ regularization term of the convolution filter $\bm{K}\in \bm{R}^{n\times n}$ is
\begin{align}
\textstyle
\sum_{l}^n\sum_{m}^n k_{l,m}^2.
\end{align}
On the other hand, the square of the Frobenius norm of $\bm{A}$ becomes
\begin{align}
\textstyle ||\bm{A}||^2_{F}=\sum_{l}^{n^2}\sum_{m}^{n^2}A_{l,m}^2=n^2\sum_{l}^{n}\sum_{m}^{n}k_{l,m}^2.
\end{align}
Therefore, if we use the $L_2$ regularization,
we constrain the Frobenius norm of $\bm{A}$.
In addition, let $\bm{M}$ be $m\times m$ matrices, we have the following inequalities:
\begin{align}
\textstyle
||\bm{M}||_2\leq||\bm{M}||_F,\\
\textstyle
\frac{||\bm{M}||_\infty}{\sqrt{m}}\leq||\bm{M}||_2\leq\sqrt{m}||\bm{M}||_\infty,
\end{align}
where $||\cdot||_2$ is the induced 2-norm, which is the largest singular value.
From the above inequalities, we have $\frac{||\bm{A}||_\infty}{n}\leq||\bm{A}||_2\leq||\bm{A}||_F$,
and thus, 
 if we decrease the Frobenius norm of $\bm{A}$,
the induced 2-norm and $\infty$-norms are also decreased.
\section{Algorithm of Absum}
Algorithm \ref{alg} shows the whole training algorithm of Absum.
First, we update parameters by SGD (lines 3 and 4).
Next, we apply the proximal operator to each convolution filter (lines 5-13).
These processes are iteratively performed.
\begin{algorithm}[tb]
\caption{Training with Absum}
\label{alg}
\begin{algorithmic}[1]
\STATE Initialize parameters $\bm{\theta}$
\WHILE{$e\leq E$} 
\STATE Sample a minibatch $\{(\bm{X}_i,\bm{Y}_i)\}^B $
\STATE$\bm{\theta}\leftarrow \bm{\theta}-\eta \nabla_{\bm{\theta}}\frac{1}{B}\sum_b^{B} f(\bm{\theta}, \bm{X}_b,\bm{Y}_b)$\\
\FOR{$i\in{1,\dots, L}$}
\IF{$\sum_l^h\sum_m^h k_{l,m}^{(i)}<-h^2\eta\lambda$}
\STATE $\bar{\bm{k}}^{(i)}+\eta \lambda \bm{1}$
\ELSIF{$-h^2\eta\lambda\leq\sum_l^h\sum_m^h k_{l,m}^{(i)}\leq h^2\eta\lambda$}
\STATE $\bar{\bm{k}}^{(i)}-\frac{\sum_l^h\sum_m^h k_{l,m}^{(i)}}{h^2} \bm{1}$
\ELSE[$\sum_l^h\sum_m^h k_{l,m}^{(i)}>h^2\eta\lambda$]
\STATE $\bar{\bm{k}}^{(i)}-\eta \lambda \bm{1}$
\ENDIF
\ENDFOR
\STATE $e=e+ 1$
\ENDWHILE
\end{algorithmic}
\end{algorithm}
\section{Related Work}
Adversarial attacks are divided into two types; white-box and black-box attacks.
The fast gradient sign method (FGSM) and PGD are 
popular as simple and sophisticated white-box attacks, respectively \cite{fgsm,pgd,pgd2}.
Though many defense methods against white-box attacks have been proposed, e.g.,
defensive distillation \cite{distillation} and stochastic defense \cite{SAP},
several methods have been toppled by strong attacks \cite{best,candw}.
A promising method is adversarial training \cite{fgsm,pgd,pgd2},
which uses adversarial examples as training data.
However, its computational cost is larger than naive training. 
Note that Absum can be used with adversarial training and
enhances it, as discussed in experiments. 
Black-box attacks are more practical than white-box attacks
since it is difficult to access the target models in online applications 
\cite{papernot2017practical,yuan2019adversarial}. 
Most black-box attacks are transferred white-box attacks
and can be defended against by adversarial training \cite{papernot2017practical}.
Several black-box attacks use queries that ask the target model about the predicted labels of given input data,
but these attacks might still be impractical since they require a large amount of queries \cite{brendel2018decisionbased,chen2017zoo,pmlr-v80-ilyas18a}.
On the other hand, SFA only uses the information that the target model is composed of CNNs and is more practical.

%
An early study \cite{szegedy2013intriguing} showed that the induced norm can be a 
measure of robustness, and Parseval networks constrain
the induced norm of linear layers to improve robustness \cite{parseval}.
Parseval networks are more robust against FGSM than naive models
and can enhance adversarial training. 
However, the computational cost of Parseval networks is larger 
than standard regularization methods.
In addition, its robustness might be less than that of the spectral norm
regularization \cite{lmt} though Parseval networks penalize the spectral norm like the spectral norm
constraint. 
The spectral norm regularization can improve generalization performance \cite{yoshida2017spectral}.
However, the spectral norm in spectral norm regularization 
is often quite different from that of $\bm{A}$ \cite{gouk2018regularisation,sedghi2018the}
for convolution.

As simple regularization methods, \citet{srivastava2014dropout} shows that maxnorm regularization
can improve generalization performance of deep learning.
The maxnorm regularization in \cite{srivastava2014dropout} restricts the $L_2$ norm of weight vectors to 
be strictly less than or equal to a threshold $c$ as $||\bm{A}_i||_2\!\leq\!c$
where $\bm{A}_i$ is the $i$-th row vector of $\bm{A}$ in eq.~(5).
Therefore, the maxnorm regularization on convolution is 
$||\bm{A}_i||_2\!=\!\textstyle\sqrt{\sum_{l}\!\sum_{m} k^2_{l,m}}\!\leq\!c$ and
is similar to $L_2$ regularization.
In fact, we observed that the effectiveness of maxnorm regularization is similar to weight decay.
\section{Experimental Conditions}
\label{cond}
We had roughly two experimental conditions according to the dataset.
In all experiments, we selected the best regularization weight from among $[10^{1}, 10^{0},\dots, 10^{-6}, 10^{-7}]$ for Absum and standard regularization methods
and selected the best spectral norm $\sigma$ from among $[0.01, 0.1, 0.5, 1.0, 10]$
for spectral norm constraint (SNC) \cite{sedghi2018the}. 
In SNC, we clipped singular values once in 100 iterations due to the large computational cost.
Our experiments ran once for each hyperparameter.
We assumed that
all images were divided by 255 and pixels had the values between 0 and 1.
In addition, MNIST, CIFAR10 and CIFAR100 were standardized as (mean, standard deviation)=(0,1)
before the images were applied to the models as preprocessing.
In the evaluation of robustness, we standardized input images 
by using the means and standard deviations of clean data after adversarial perturbation. 
The computation graph of the standardization was preserved in gradient-based attacks;
thus, perturbations of PGD were optimized while considering this preprocess.
\subsection{MNIST and Fashion-MNIST}
\label{m cond}
The training set of each dataset contains 60,000 examples, and the test set contains 10,000 examples \cite{mnist,fmnist}.
For MNIST and Fashion-MNIST (FMNIST), we stacked two convolutional layers 
and two fully connected layers,
the first convolutional layer had the 10 output channels and the second convolutional
layer had 20 output channels.
The kernel sizes of the convolutional layers were 5, 
their strides were 1, and we did not use zero-padding in these layers.
After each convolutional layer, we applied max pooling (the stride was 2) and ReLU activation.
The output of the second convolutional layer was applied to the first fully connected layer
 (the size was $320\times 50$), and we used the ReLU activation after the first fully connected layer.
The size of the second fully connected layer was $50\times 10$, and we used softmax as the output function.
After the second convolution layer and before the second fully connected layer,
we applied 50 \% dropout.
We trained the model for 100 epochs by using Momentum SGD (the learning rate of 0.01
and momentum of 0.5).
We set the minibatch size to 64.

We changed $(l,m)$ to $\{(0,0)$, $(0,1)$, $\dots$, $(27,26)$, $(27,27)\}$ in SFA
since the size of the images was $28\times 28$
and evaluated the accuracy of the model on the test data perturbed by SFA.
The $L_\infty$ norm of the perturbation of SFA was set to $80/255$.
The perturbed inputs were clipped so that each element would be included in $[0, 1]$.
For fair comparison, all regularization methods were applied to only convolution filter parameters.
\subsection{CIFAR10, CIFAR100, and SVHN}
\label{c cond}
CIFAR10 and CIFAR100 contain 50,000 training images and 10,000 test images \cite{cifar}. 
SVHN contains 73,257 images for training and 26,032 images for testing \cite{svhn}.
For SVHN, we used cropped digits, which were cropped as $32\times 32$.
The model architecture was ResNet-18 for CIFAR10, CIFAR100, and SVHN \cite{resnet}.\footnote{Our
 training settings are based on the open source of https://github.com/kuangliu/pytorch-cifar.}
As the preprocessing for training, given images were randomly cropped as $32\times 32$ after
padding a sequence of four on each border of the images.
Horizontal flip was randomly applied to images with a probability of 0.5.
We trained the model for 350 epochs with Momentum SGD (momentum 0.9).
The initial learning rate was set to 0.1, and after 150 and 250 epochs, we divided
the learning rate by 10. We set the minibatch size to 128.

We changed $(l,m)$ in SFA to $\{(0,0), (0,1), \dots, (31,30), (31,31)\}$ 
since the size of the images was $32\times 32$ and 
evaluated the accuracy of the model on the test data perturbed by SFA.
The $L_\infty$ norm of the perturbation of SFA was set to $10/255$.
The perturbed inputs were clipped so that each element would be included in $[0, 1]$.
For fair comparison, all regularization methods were applied to only convolution filter parameters.

Note that about 20~\%
of SVHN test and train datasets have the class label of `1'. 
Due to the class imbalance, models output class `1' regardless of input images in some hyperparameter settings.
In this case, the robust accuracies are always about 20\%; thus, these models
sometimes outperform properly trained models with naive training in terms of robust accuracy.
However, these results are not meaningful, and we do not list them in the tables.
For the other datasets, we also do not list the results of the models that
output one class regardless of input images.
\subsection{High-Frequency Attack}
To evaluate robustness in the frequency domain, we used High-Frequency attacks.
High-Frequency attacks can be regarded as low-pass filteres, which remove high-frequency components.
In High-Frequency attacks \cite{highF}, we first apply discrete Fourier transform (DFT) $\mathcal{F}$ to data $\bm{X}$ as
\begin{align}
\bm{Z}=\mathcal{F}(\bm{X}).
\end{align}
Next, we decompose the low- and high-frequency components as
\begin{align}
Z^l_{i,j}=\begin{cases}
Z_{i,j}&\mathrm{if}~d((i,j),(c_i,c_j))\leq r\\
0&\mathrm{otherwise}
\end{cases},\\
Z^h_{i,j}=\begin{cases}
0&\mathrm{if}~d((i,j),(c_i,c_j))\leq r\\
Z_{i,j}&\mathrm{otherwise}
\end{cases},
\end{align}
where $Z^l_{i,j}$ and $Z^h_{i,j}$ are elements of low- and high-frequency components in
the frequency domain,
respectively, $(c_i,c_j)$ is a centroid of the image, $d(\cdot,\cdot)$ is the Euclidean distance,
and $r$ is a radius that determines the cutoff frequency.
Finally, we apply the inverse DFT to $\bm{Z}^l$ as
\begin{align}
\bm{X}^l=\mathcal{F}^{-1}(\bm{Z}^l),
\end{align}
and $\bm{X}^l$ is an input image attacked by High-Frequency attacks.
While $r$ is gradually reduced and accuracies are iteratively evaluated for each $r$
in \cite{highF}, we used fixed $r$ as half of the image width
since we just focus on comparing Absum with other methods.

In addition to High-Frequency attacks, we evaluated robust accuracies against high-pass filter
$\bm{X}^h\!=\!\mathcal{F}^{-1}(\bm{Z}^h)$.
Note that images processed by the high-pass filter are not adversarial examples
since it is difficult for humans to accurately classify these images.
Even so, this experiment reveals how the model trained using each method is
biased towards the low-frequency components.
\subsection{Computational Cost}
We evaluated the computation time of Absum.
We used one NVIDIA Tesla V100 GPU and 32 Intel(R) Xeon(R) Silver 4110 CPUs,
and our implementation used Python 3.6.8, pytorch 0.4.1, CUDA 9.0, and numpy 1.11.3
in this experiment.
Note that we used numpy to compute the FFT and singular value decomposition,
which is difficult to parallelize, in SNC. 
We clipped singular values once in 100 iterations due to the large computational cost.
The model architectures and training process were the same as those of the experiments involving SFA.
We used $\lambda=10^{-4}$ and $\sigma=1.0$.
We also conducted an experiment to evaluate the computational time when the input size increases.
We generated random images whose sizes
were $3\times32\times 32$, $3\times64\times64$, $3\times128\times128$, $3\times256\times256$, $3\times512\times512$, and $3\times1024\times1024$ with ten random labels,
and evaluated the computation time of the forward and backward processes of ResNet18 for one image.
\subsection{Robustness against PGD}
We also evaluated the effectiveness of Absum against PGD.
We evaluated Absum with adversarial training \cite{pgd2} in addition to naive training
because Absum and other regularization methods can be used with adversarial training.
In these experiments, we used advertorch \cite{ding2018advertorch} to generate adversarial examples
of PGD.

Model architectures and training conditions were almost the same as the experiments of SFA.
The number of epochs for MNIST and FMNIST was set to 100.
On the other hand, we observed overfitting in the adversarial training on CIFAR10, CIFAR100, and SVHN.
Therefore, we trained the model for 150 epochs with Momentum SGD (momentum 0.9).
The initial learning rate was set to 0.1, and after 50 and 100 epochs, we divided
the learning rate by 10.
We also applied weight decay of $10^{-4}$ to all parameters on CIFAR10 and CIFAR100 in the adversarial training of PGD since overfitting easily occurred in adversarial training on these datasets.

In PGD, the $L_\infty$ norm of the perturbation $\varepsilon$ was set to
 $\varepsilon=[0.05, 0.1, 0.15, 0.2, 0.25, 0.4]$ for MNIST and FMNIST, and $\varepsilon=[4/255, 8/255, 12/255, 
 16/255, 20/255]$ for CIFAR10 at evaluation time. 
For PGD, we updated the perturbation for 40 iterations with a step size of 0.01 on MNIST and FMNIST at training and evaluation times.
On CIFAR10, CIFAR100, and SVHN, we updated the perturbation for 7 
iterations with a step size of 2/255 at training time and 
100 iterations at evaluation time. 
The starting points of PGD were randomly initialized from a uniform distribution of [-2/255, 2/255].
For adversarial training, we used training data perturbed by
PGD with $\varepsilon=0.3$ on MNIST and $\varepsilon=8/255$ on CIFAR10, CIFAR100, and SVHN.
In adversarial training, we only used adversarial examples of training data.
The above conditions are based on \cite{pgd2}. 
\section{Additional Experimental Results}
\subsection{Robustness against SFA}
\begin{figure*}[tbp]
\centering
\scalebox{0.88}{
\captionsetup[subfloat]{farskip=0.8pt,captionskip=0.8pt}
\subfloat[Absum]{\includegraphics[width=0.25\linewidth ,clip]{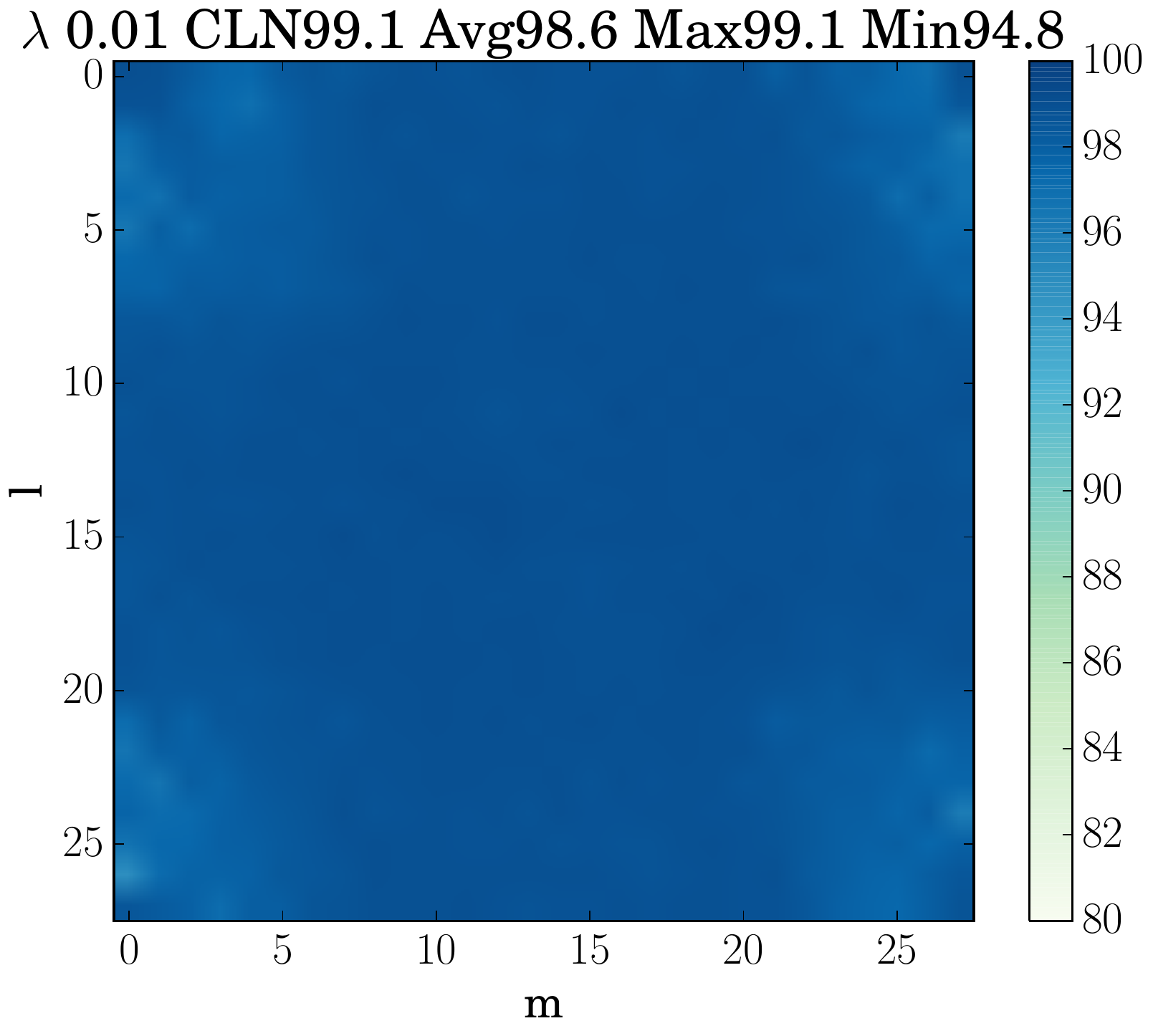}}
\subfloat[WD]{\includegraphics[width=0.25\linewidth ,clip]{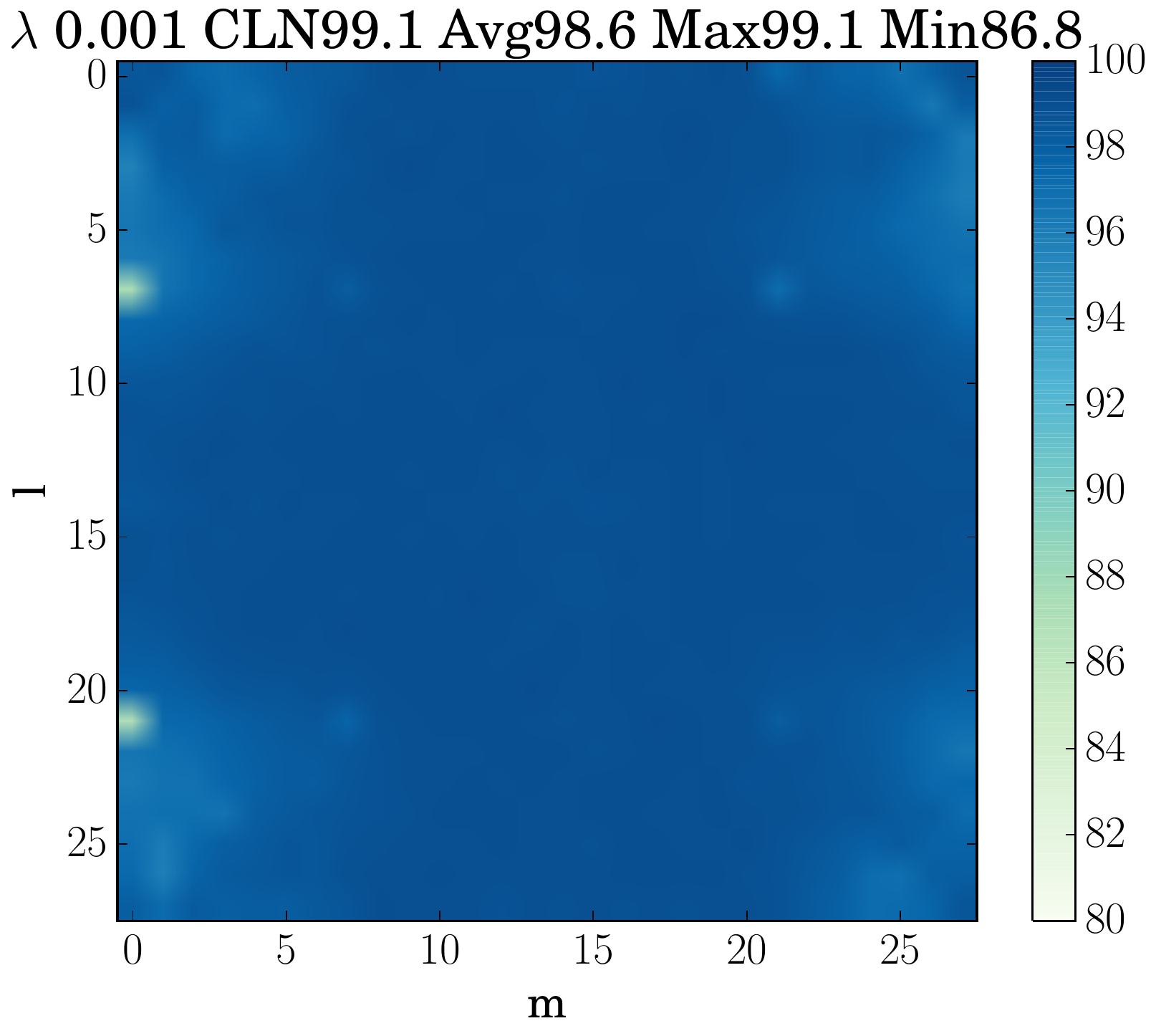}}
\subfloat[$L_1$]{\includegraphics[width=0.25\linewidth ,clip]{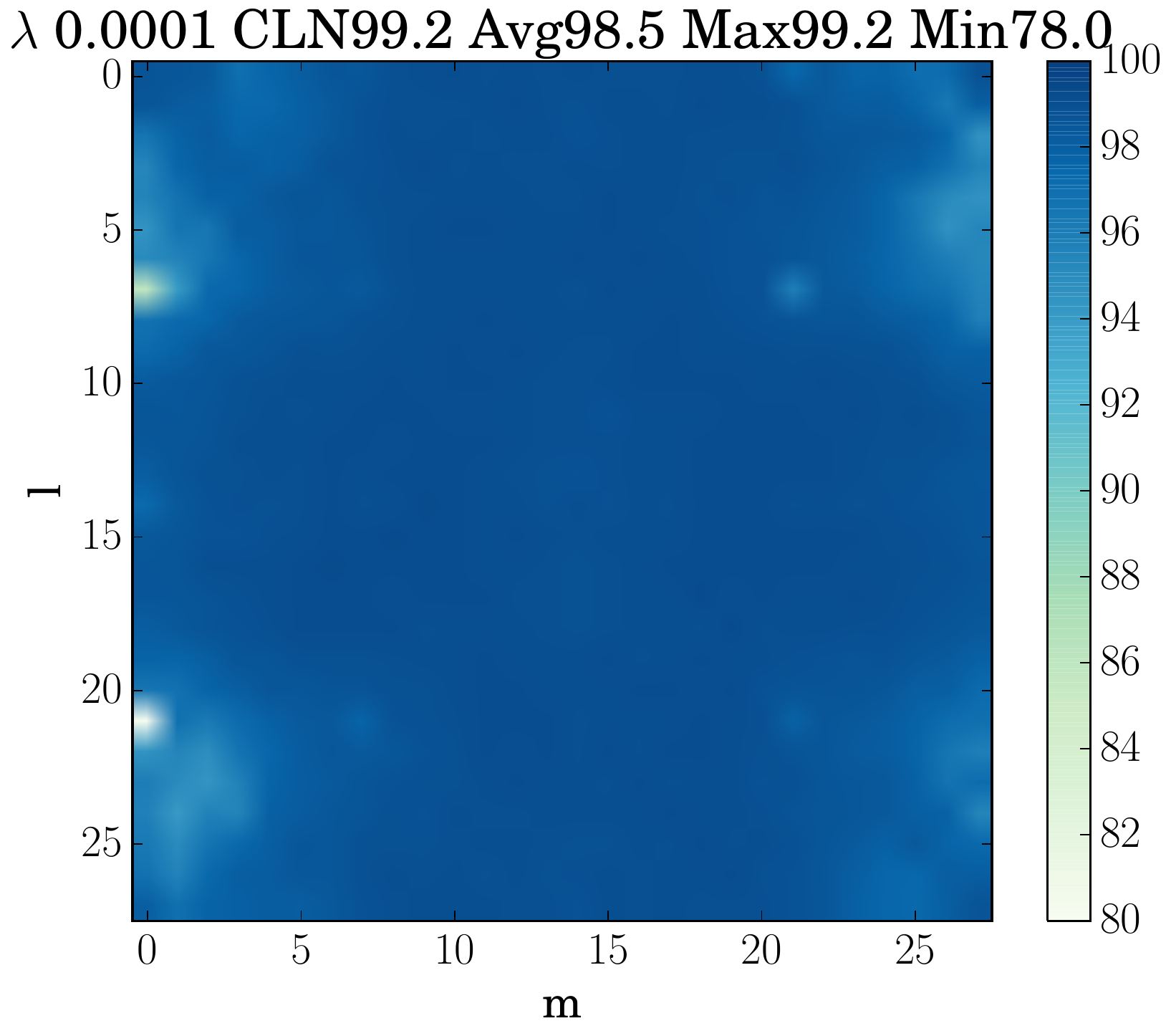}}
\subfloat[SNC]{\includegraphics[width=0.25\linewidth ,clip]{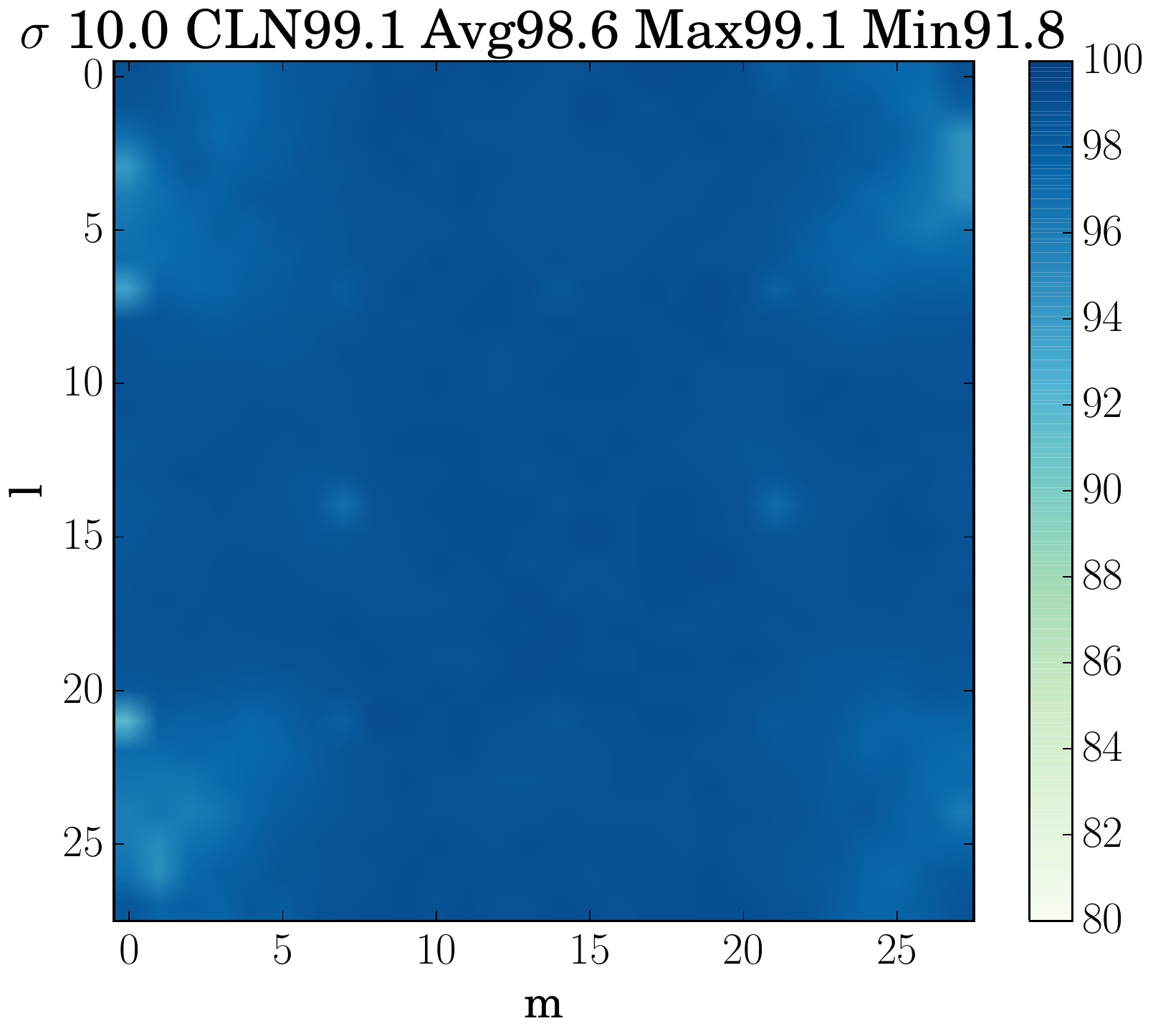}}
}\\
\centering
\scalebox{0.88}{
\subfloat[Absum]{\includegraphics[width=0.25\linewidth ,clip]{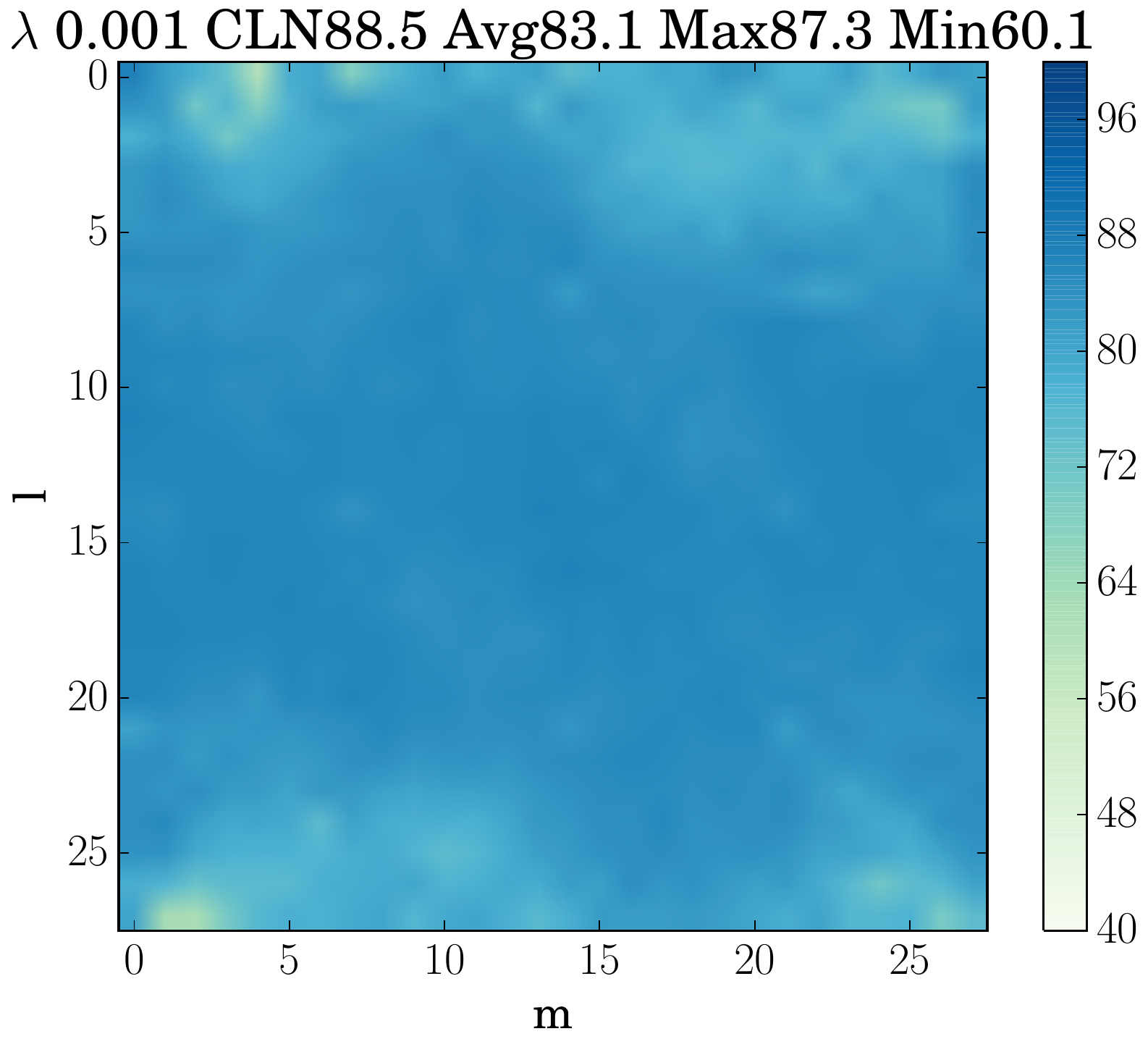}}
\subfloat[WD]{\includegraphics[width=0.25\linewidth ,clip]{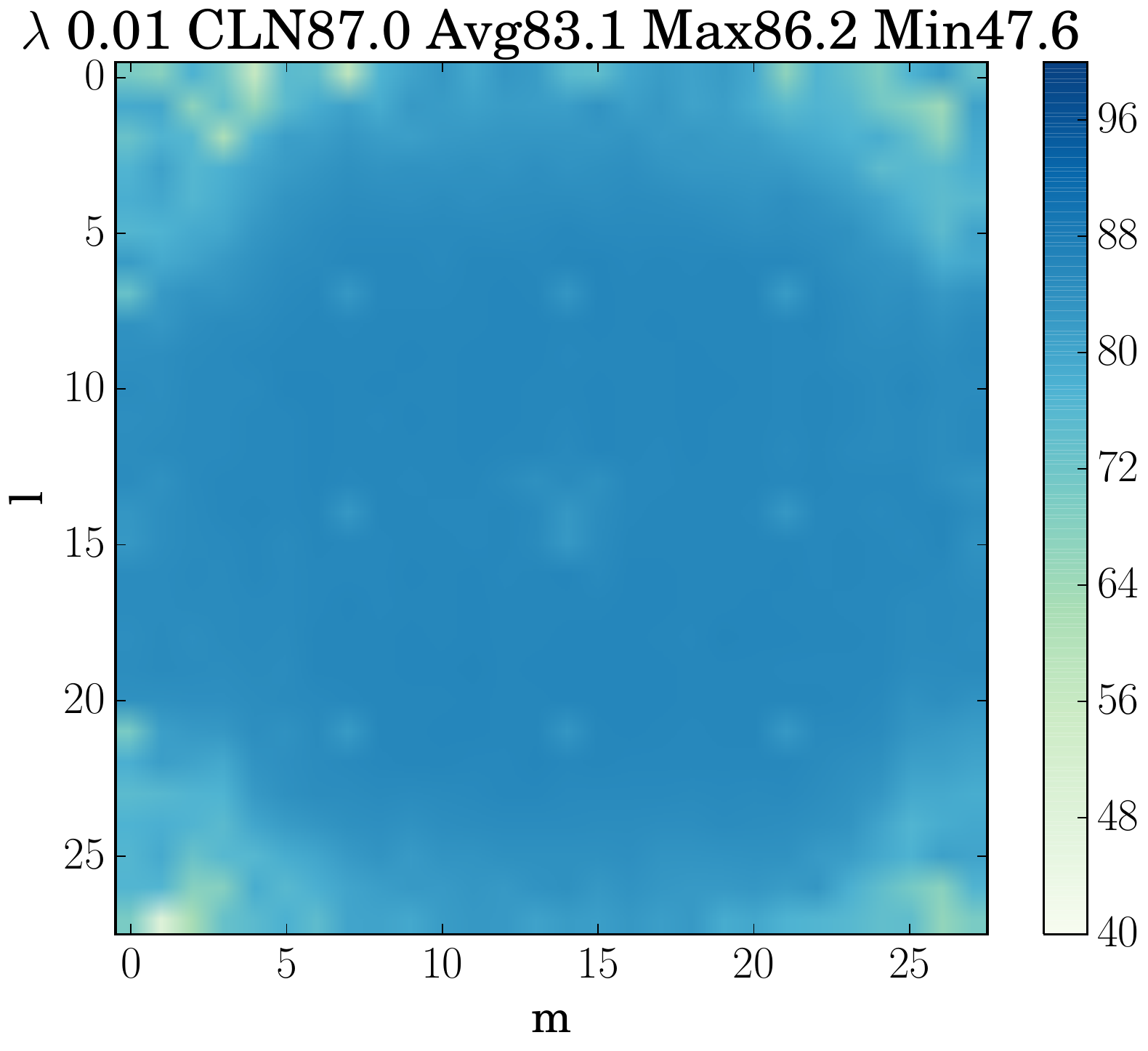}}
\subfloat[$L_1$]{\includegraphics[width=0.25\linewidth ,clip]{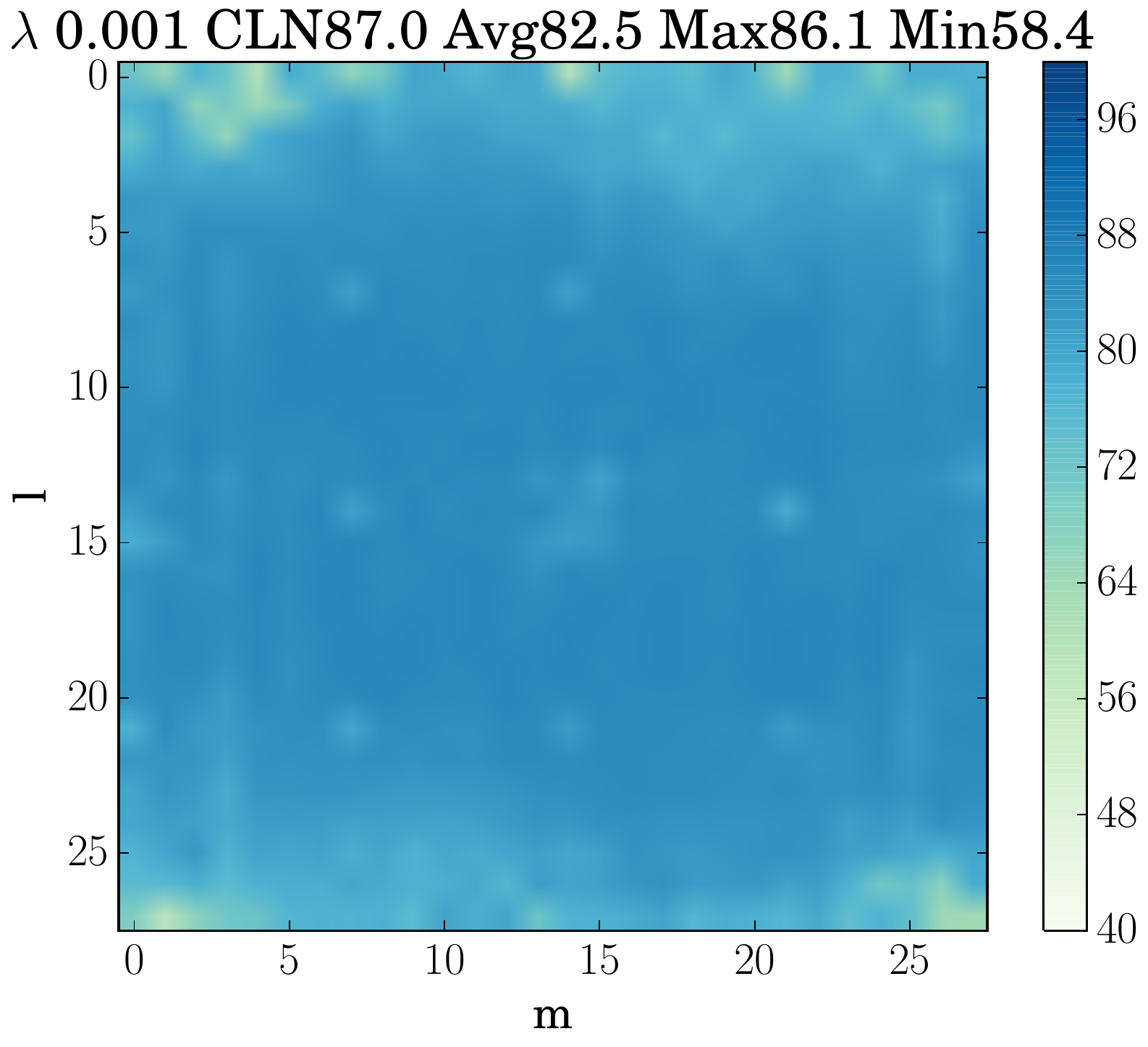}}
\subfloat[SNC]{\includegraphics[width=0.25\linewidth ,clip]{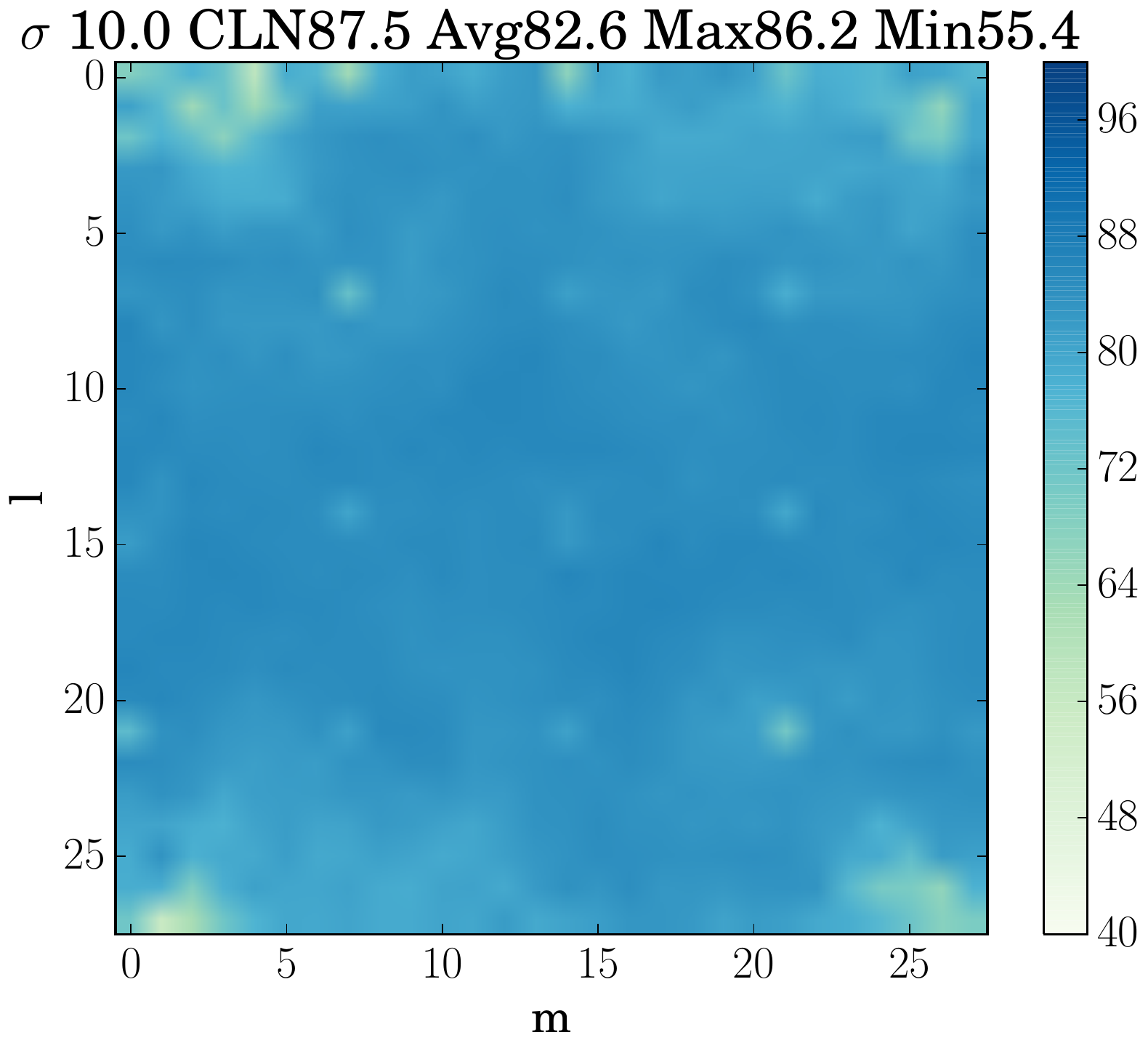}}
}
\\
\centering
\scalebox{0.88}{
\subfloat[Absum]{\includegraphics[width=0.25\linewidth ,clip]{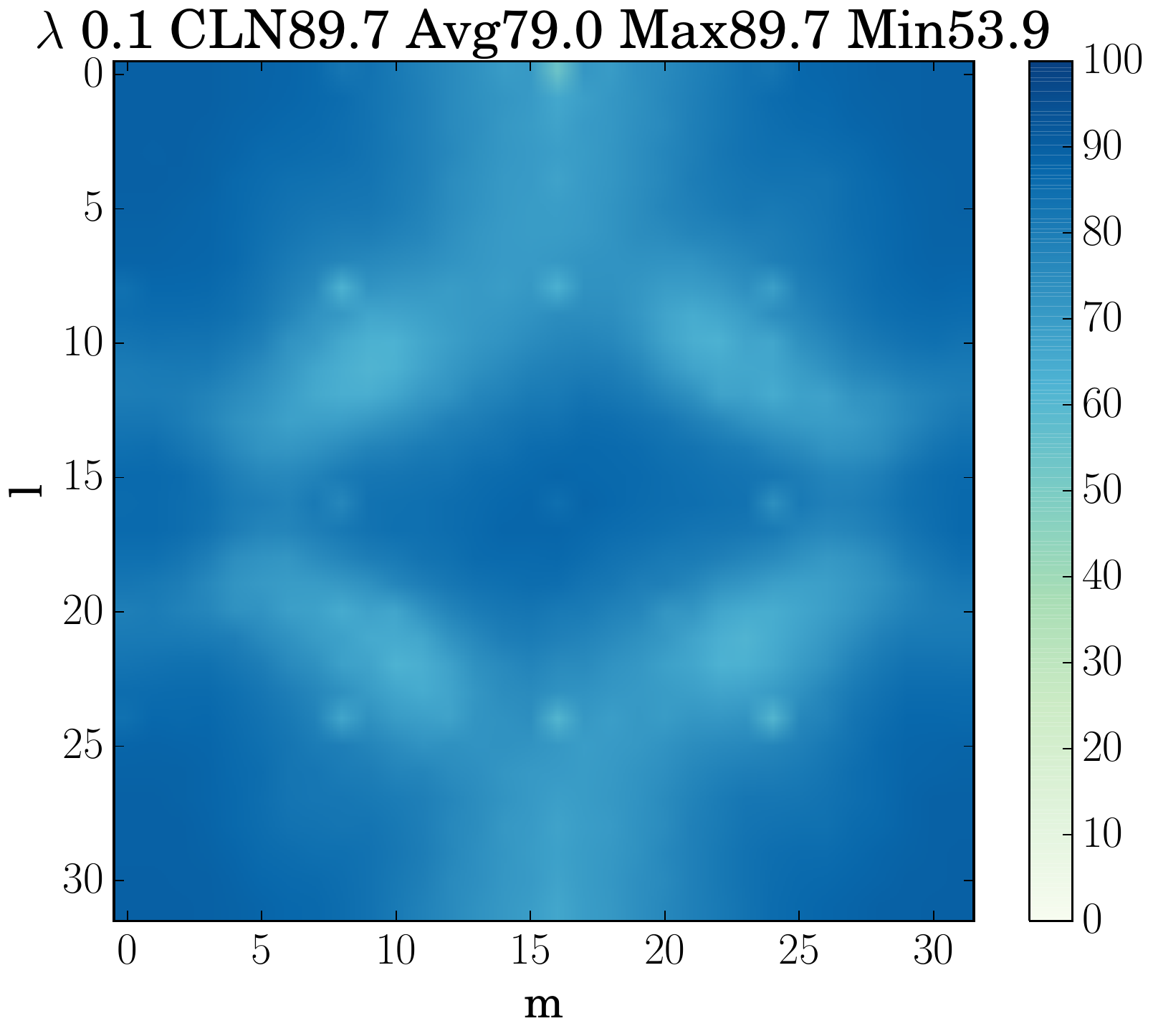}}
\subfloat[WD]{\includegraphics[width=0.25\linewidth ,clip]{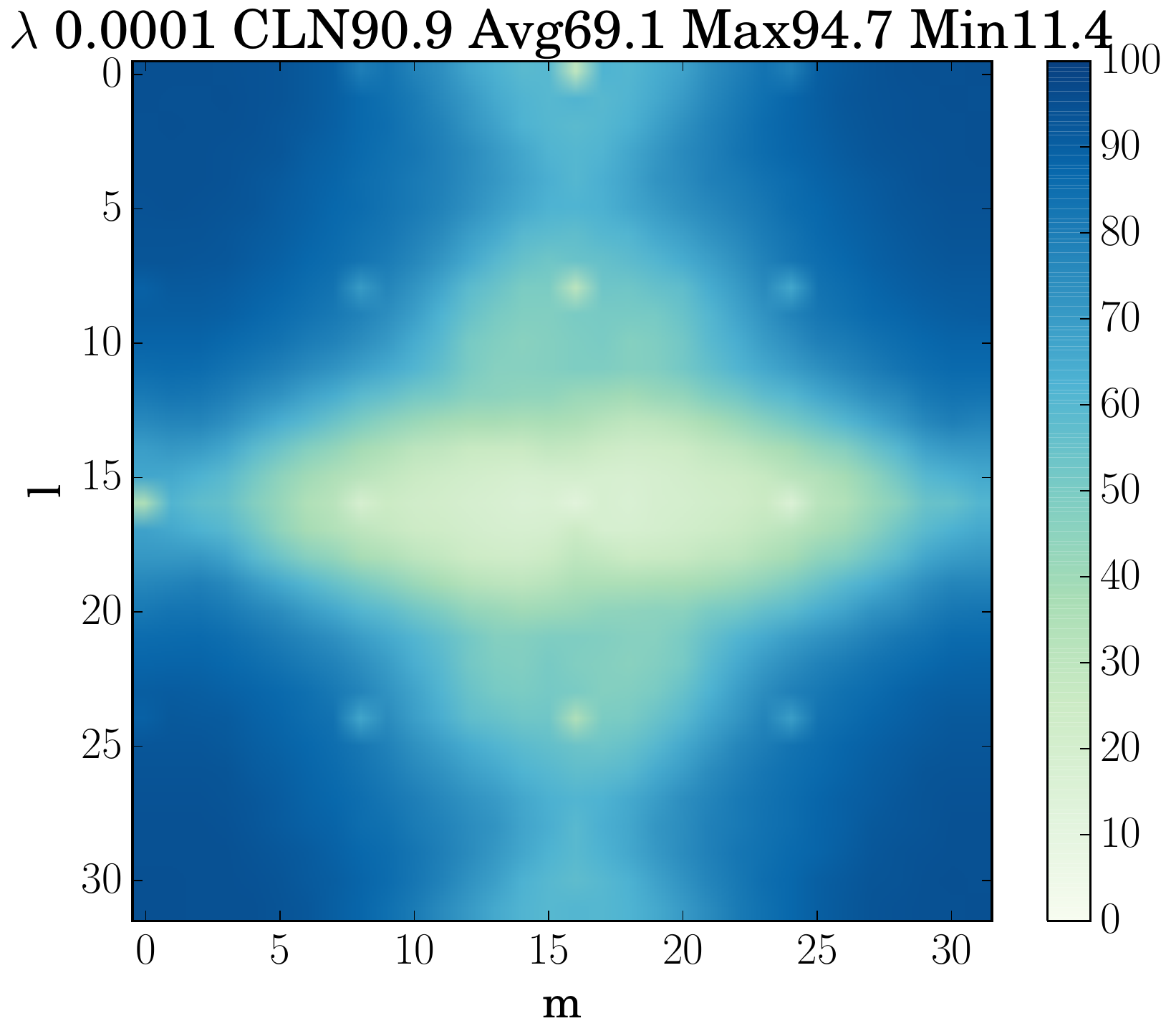}}
\subfloat[$L_1$]{\includegraphics[width=0.25\linewidth ,clip]{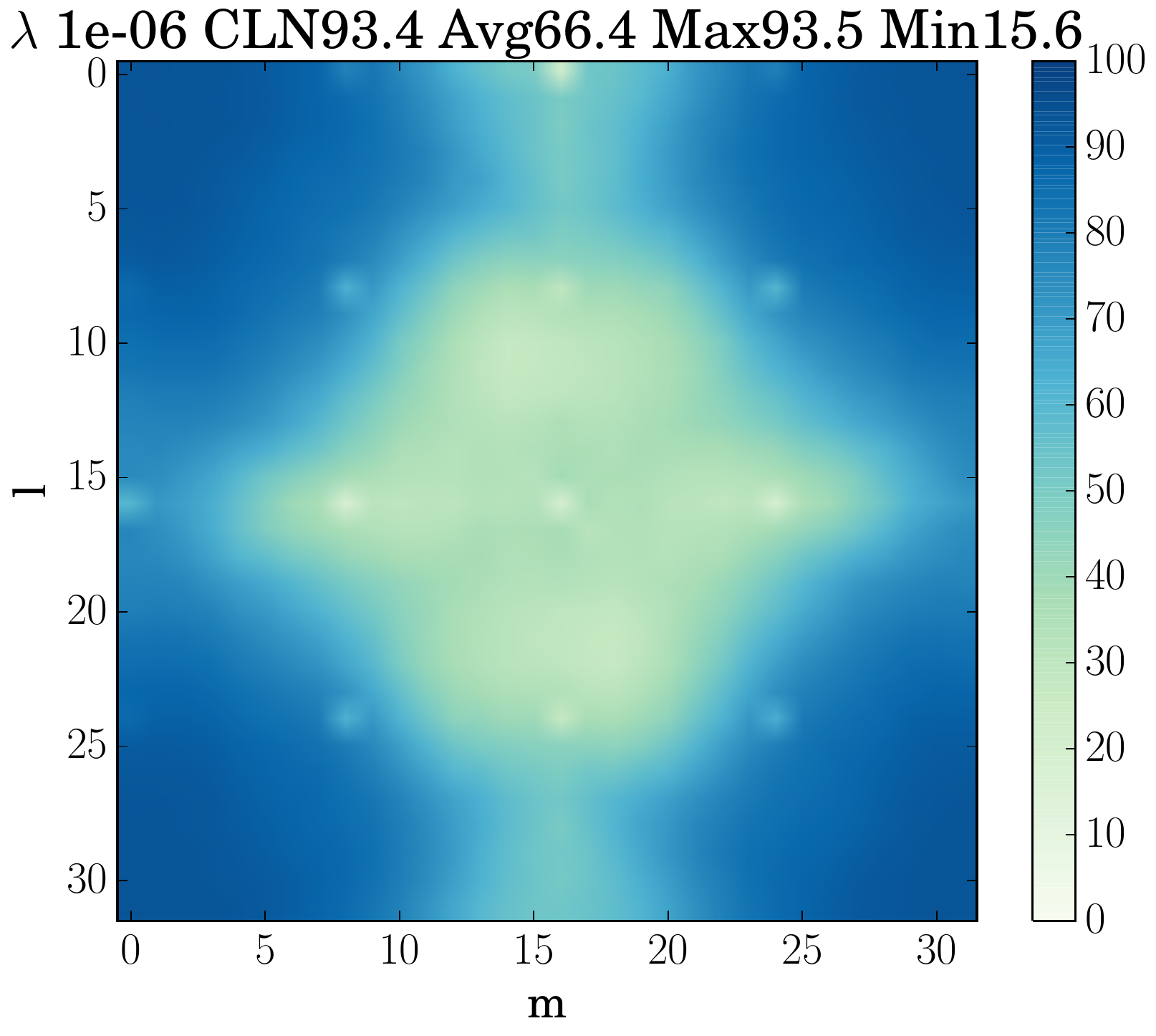}}
\subfloat[SNC]{\includegraphics[width=0.25\linewidth ,clip]{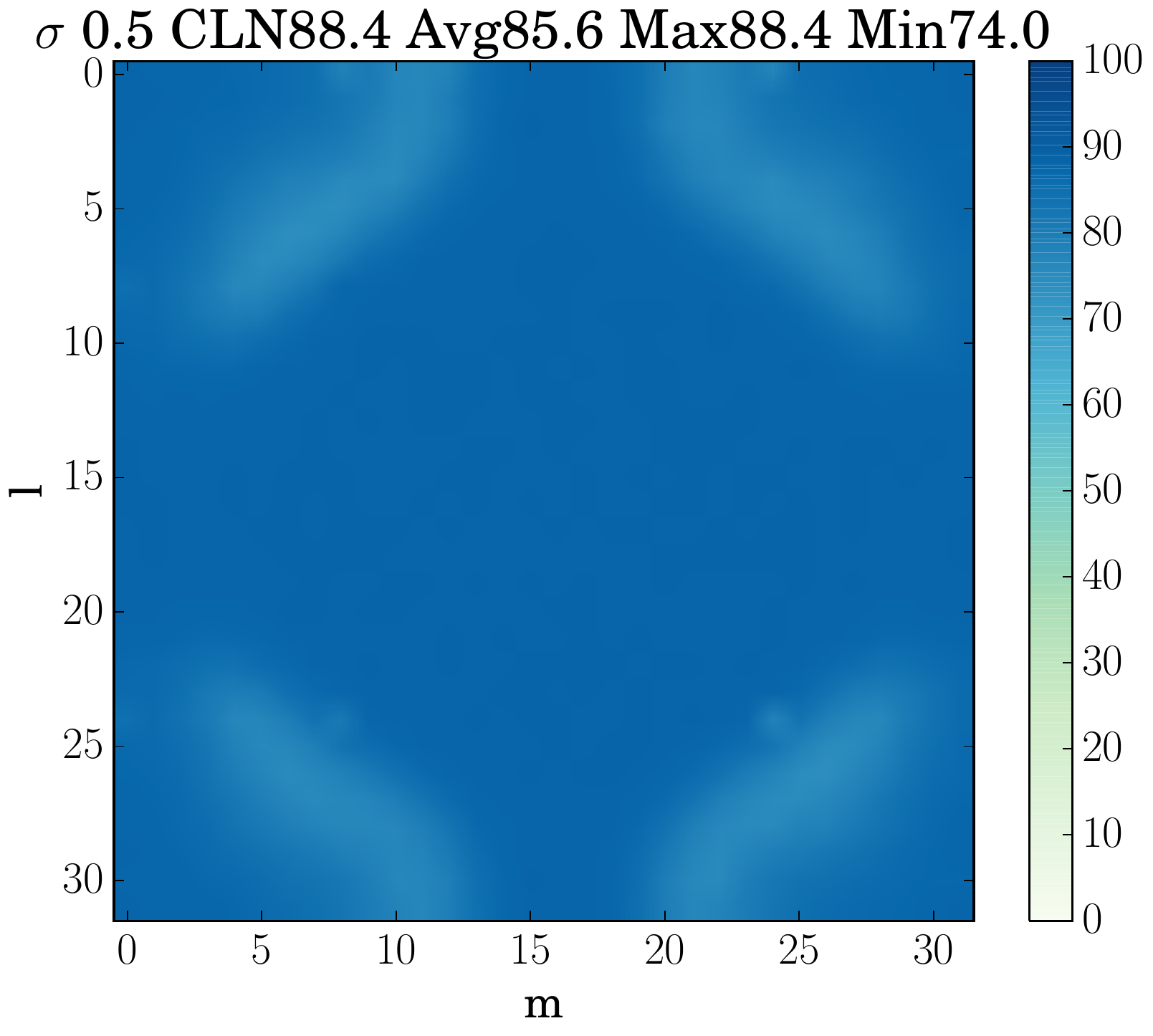}}
}
\\
\centering
\scalebox{0.88}{
\subfloat[Absum]{\includegraphics[width=0.25\linewidth ,clip]{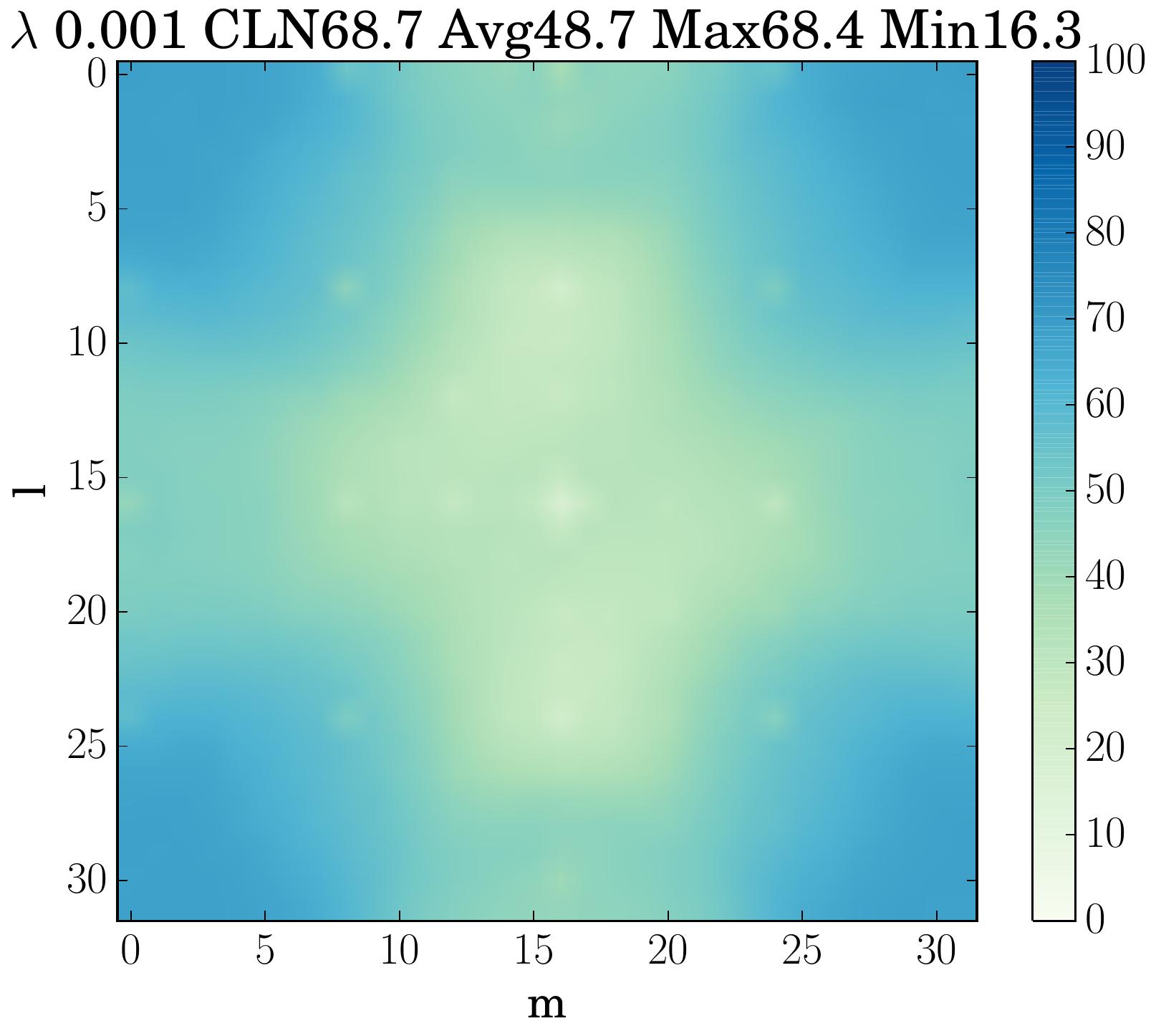}}
\subfloat[WD]{\includegraphics[width=0.25\linewidth ,clip]{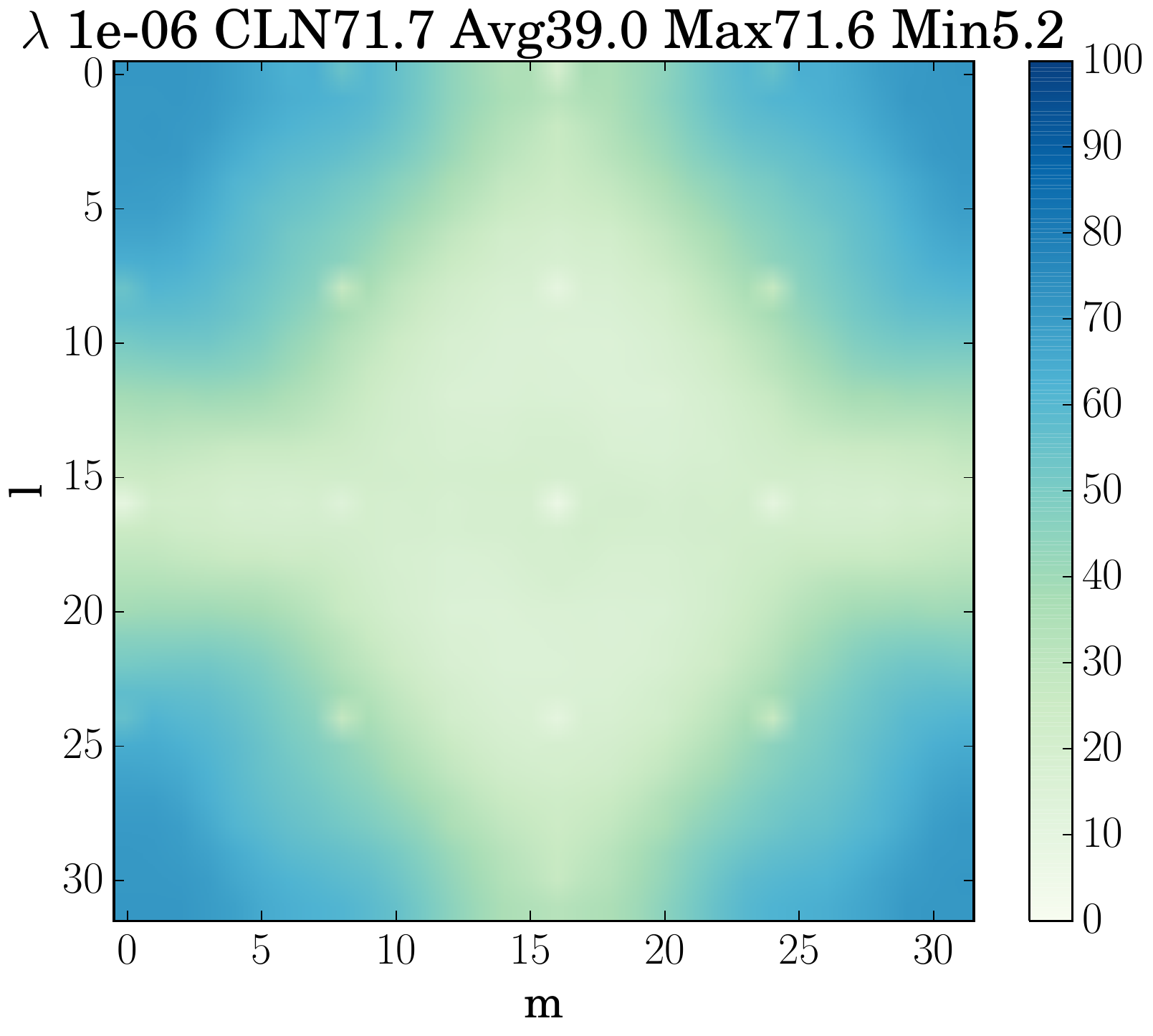}}
\subfloat[$L_1$]{\includegraphics[width=0.25\linewidth ,clip]{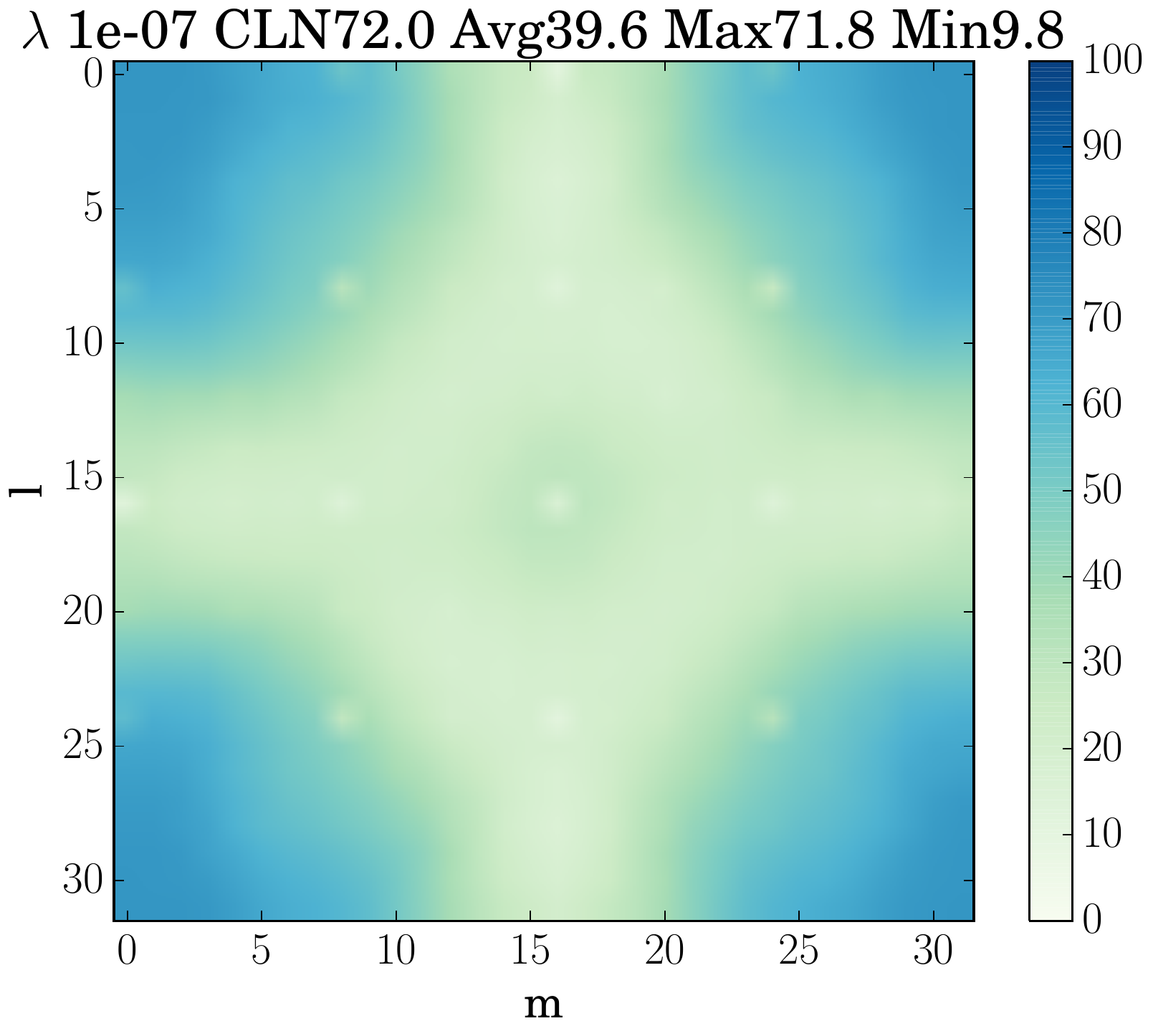}}
\subfloat[SNC]{\includegraphics[width=0.25\linewidth ,clip]{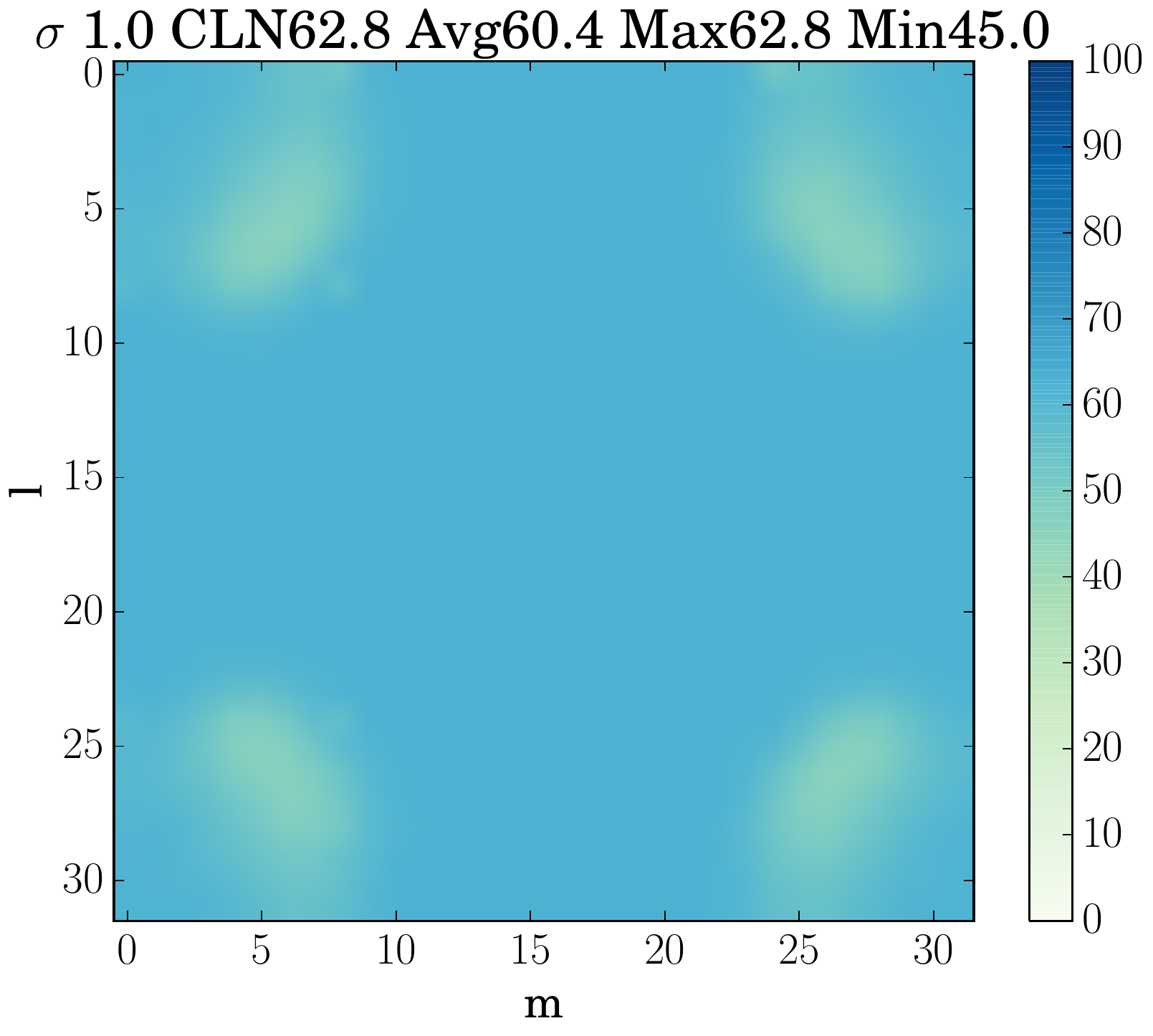}}
}
\\
\centering
\scalebox{0.88}{
\subfloat[Absum]{\includegraphics[width=0.25\linewidth ,clip]{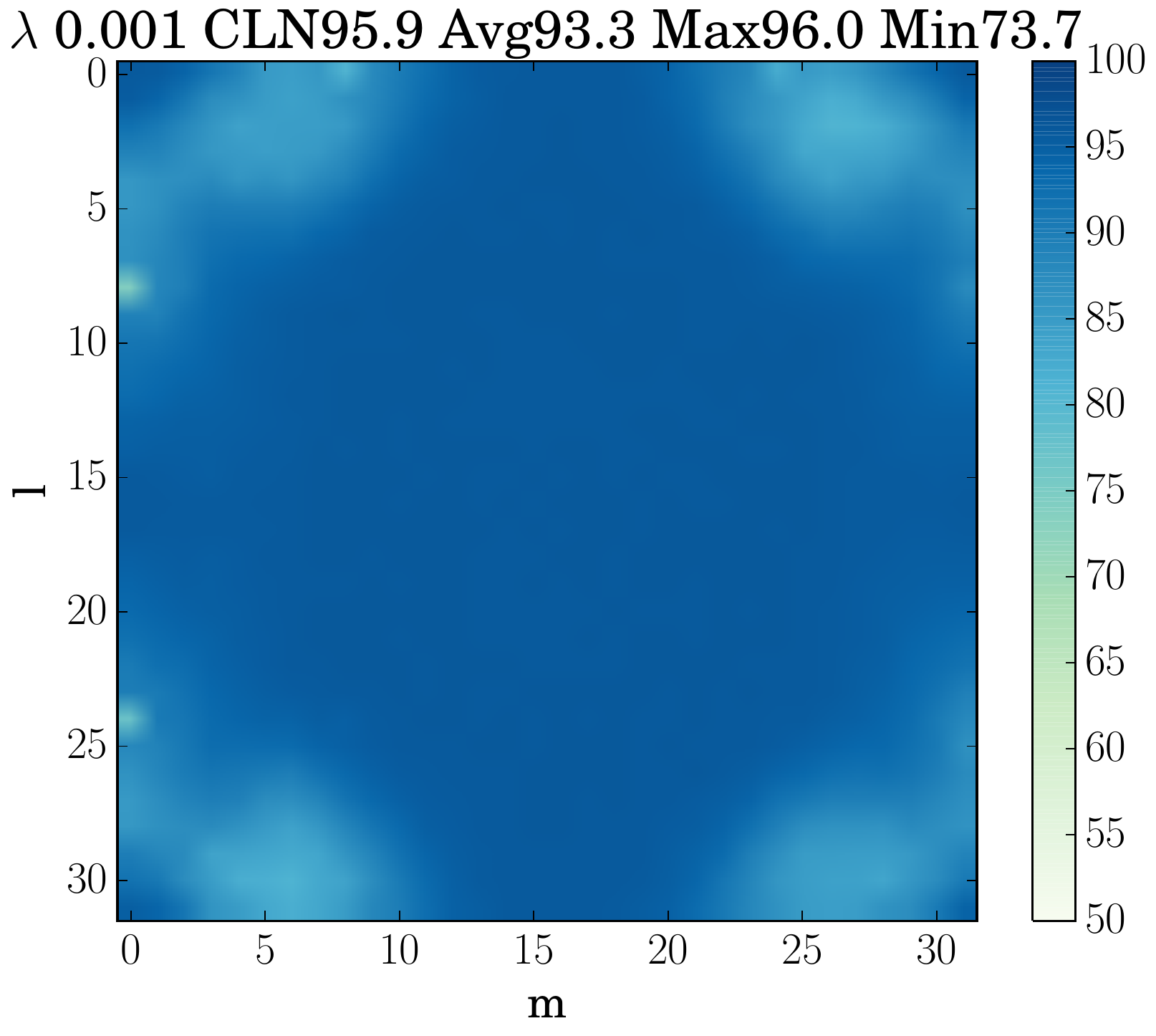}}
\subfloat[WD]{\includegraphics[width=0.25\linewidth ,clip]{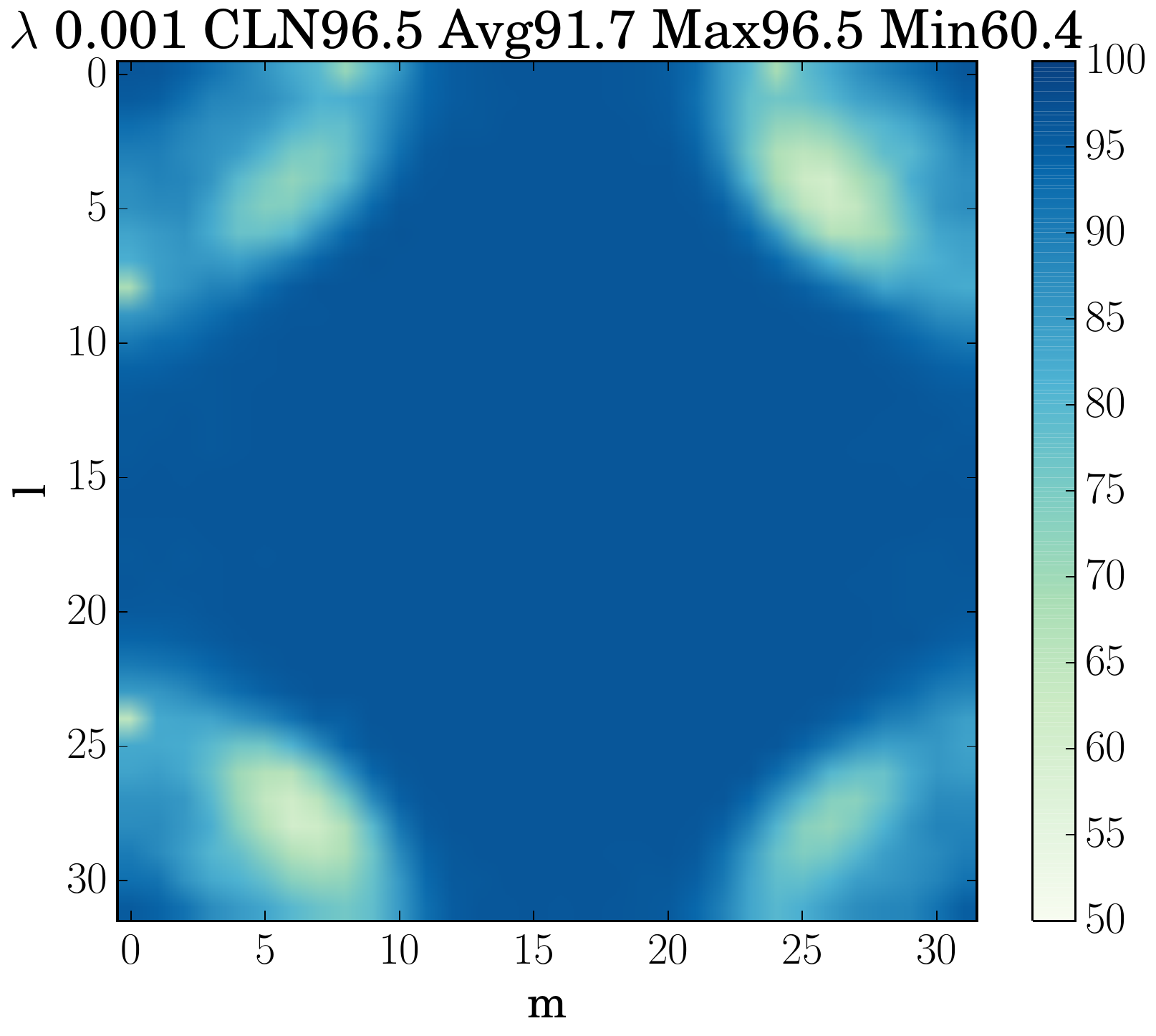}}
\subfloat[$L_1$]{\includegraphics[width=0.25\linewidth ,clip]{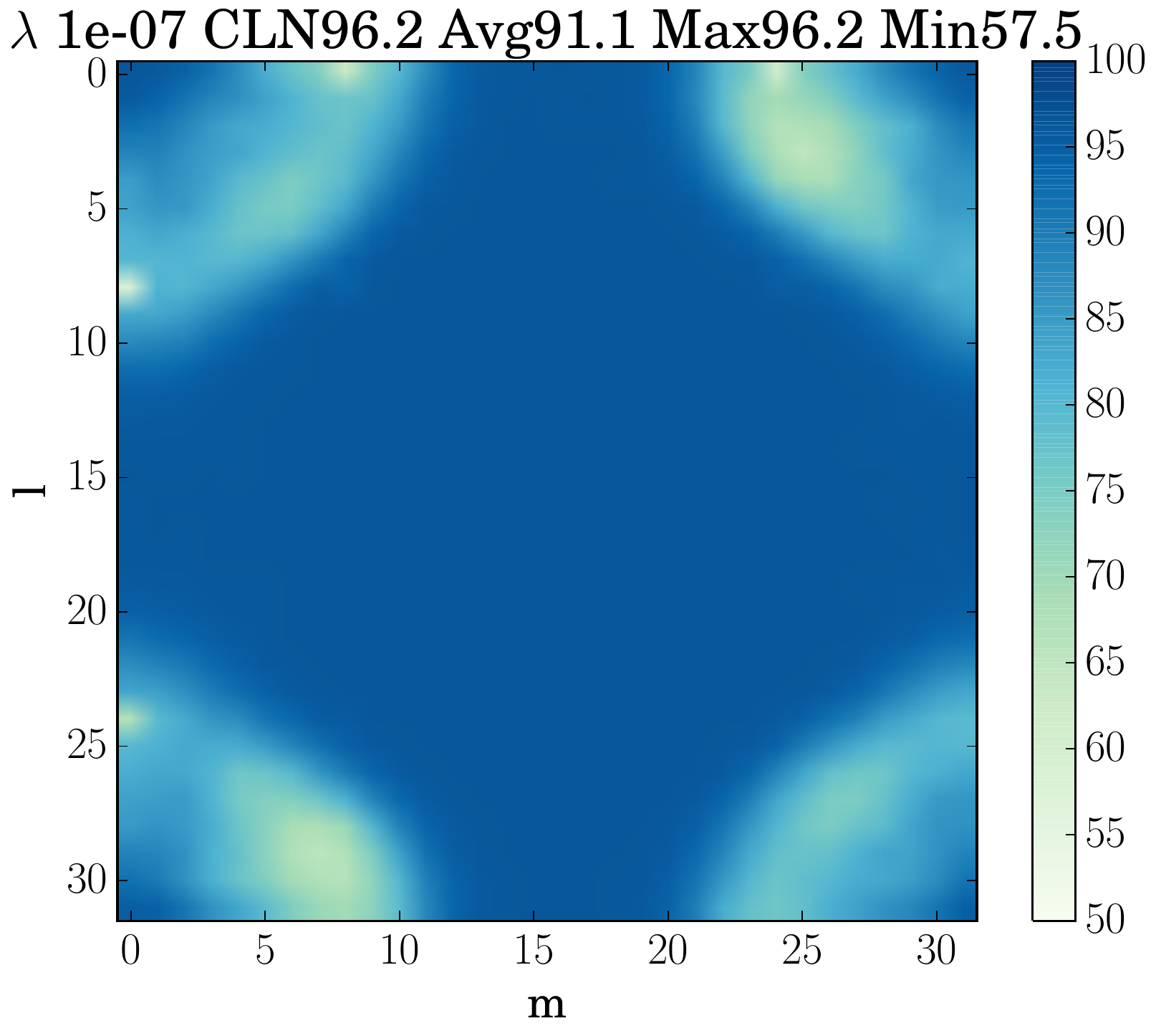}}
\subfloat[SNC]{\includegraphics[width=0.25\linewidth ,clip]{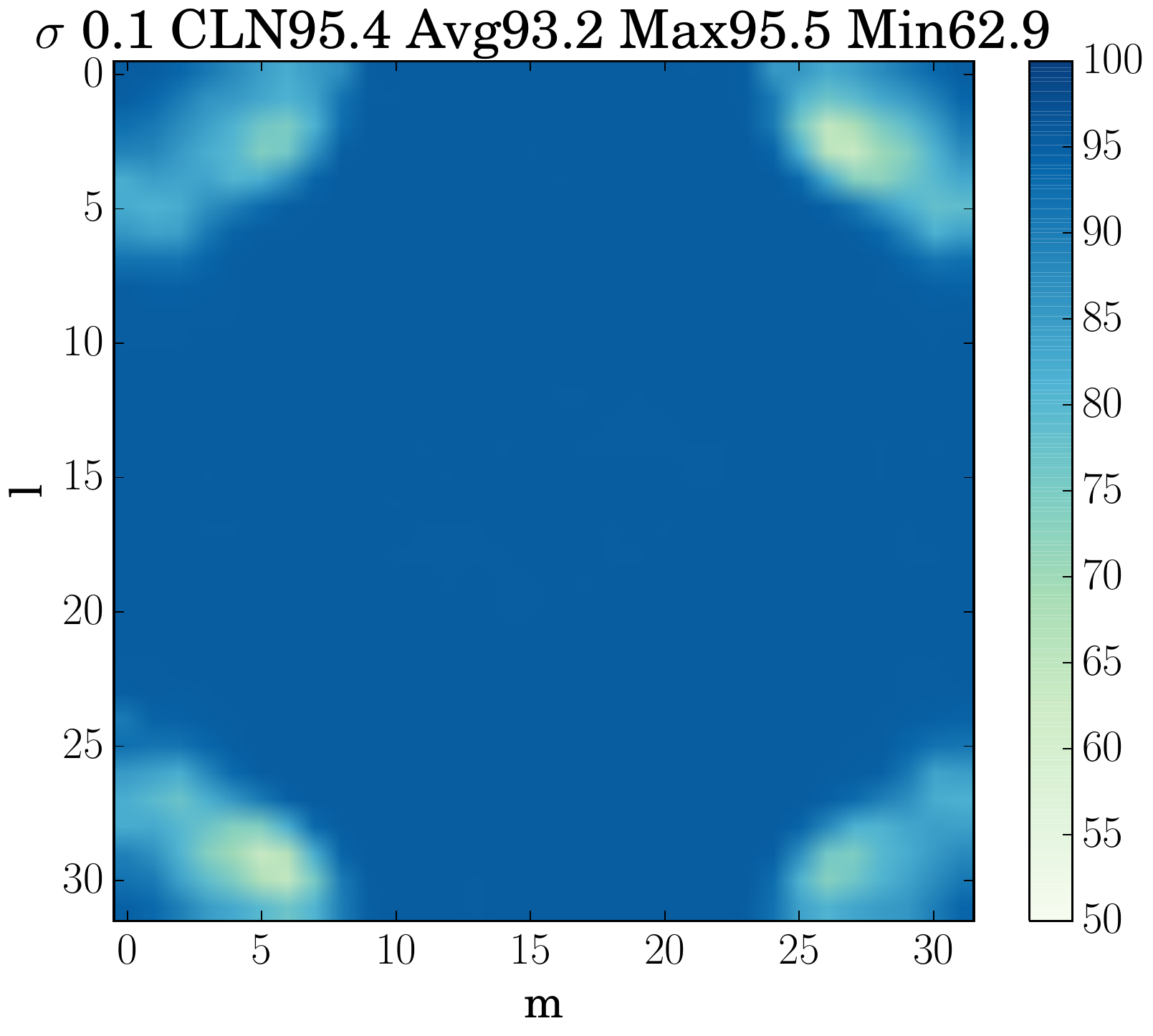}}
}
\caption{Accuracies of methods on MNIST (a)-(d), FMNIST (e)-(h), CIFAR10 (i)-(l), CIFAR100 (m)-(p), and 
SVHN (q)-(t) perturbed by SFA $(l,m)$. CLN is 
accuracy on clean data. $\lambda$ and $\sigma$ are selected so that average accuracies (Avg) against SFA would achieve largest values.}\label{accSFA}
\end{figure*}
\begin{figure}[tb]
\centering
\subfloat[FMNIST]{\includegraphics[width=0.9\linewidth ,clip]{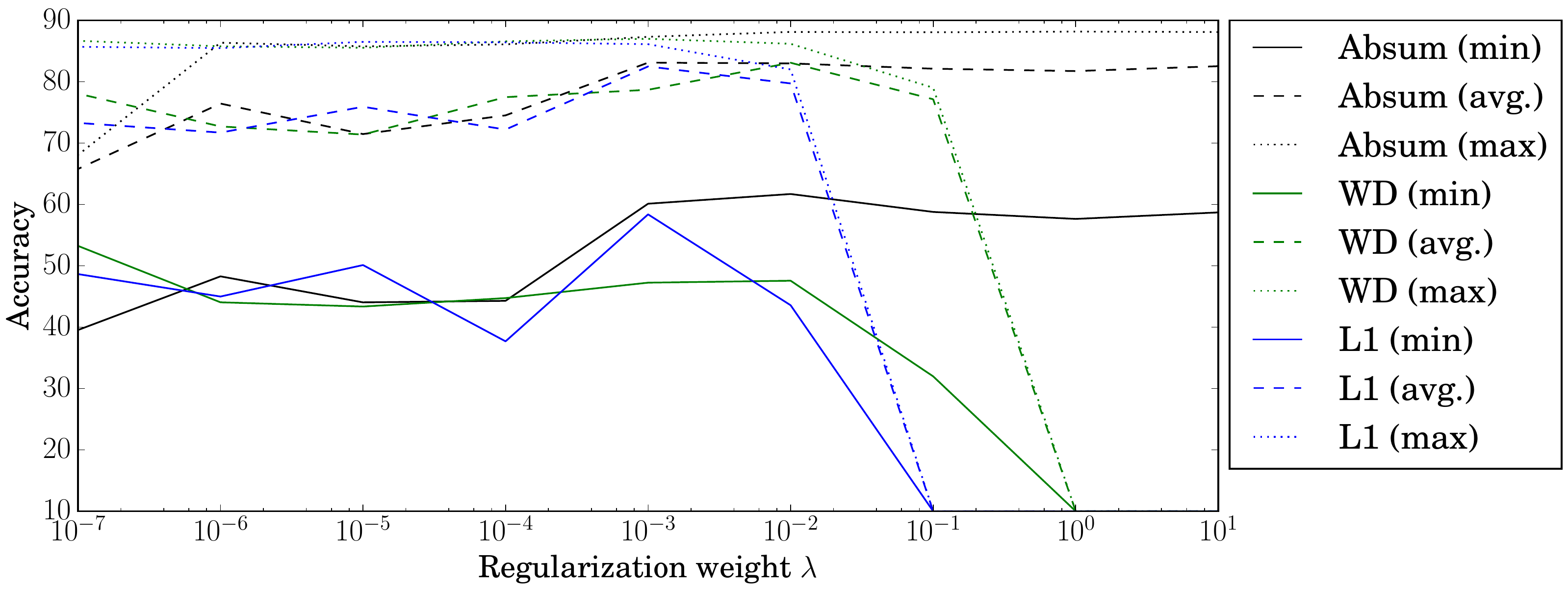}}
\\
\centering
\subfloat[CIFAR100]{\includegraphics[width=0.9\linewidth ,clip]{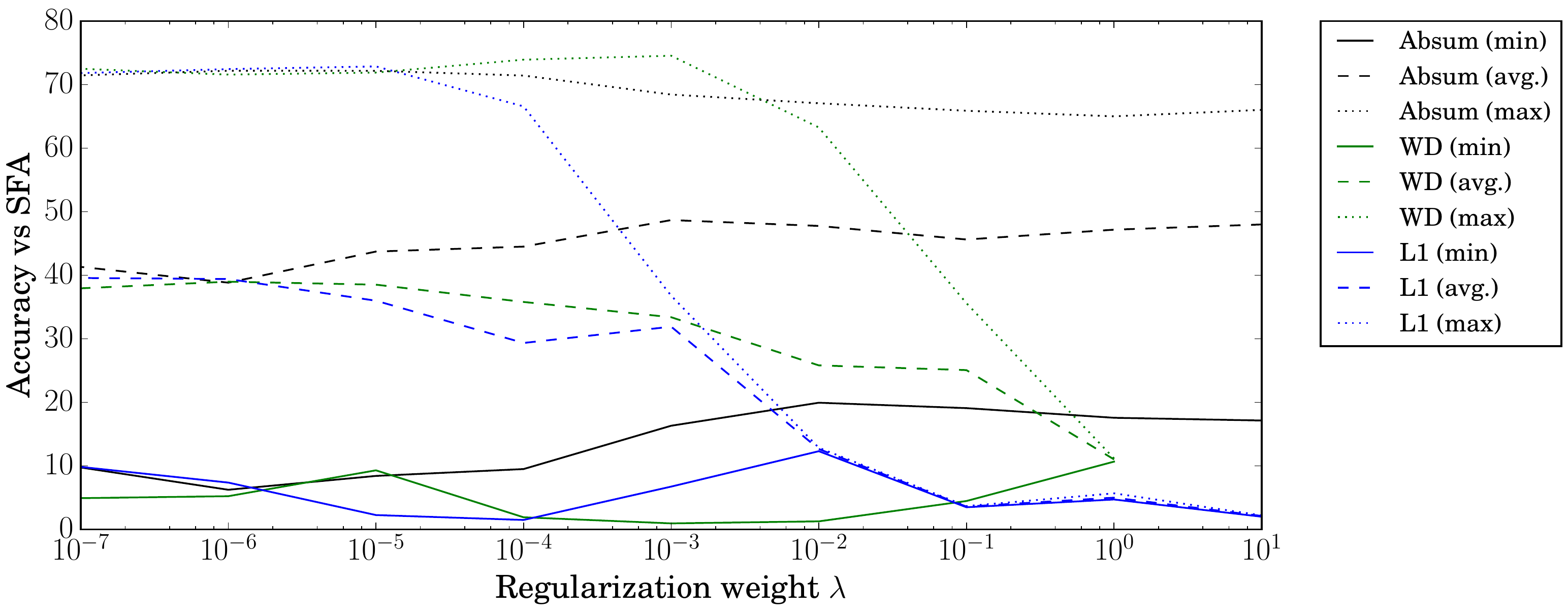}}
\\
\centering
\subfloat[SVHN]{\includegraphics[width=0.9\linewidth ,clip]{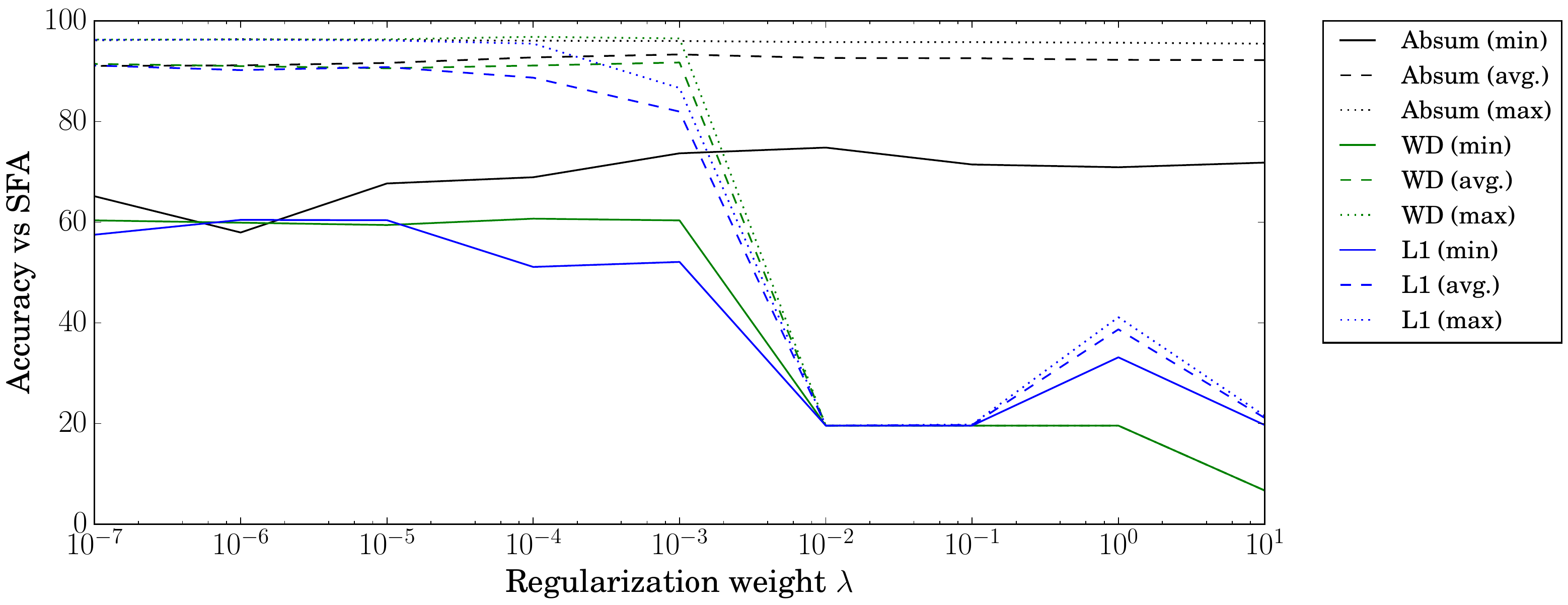}}
\caption{Accuracies of methods on dataset perturbed by SFA vs regularization weight}
\label{AccVsLam2}
\end{figure}
\begin{table*}[tbp]
\centering
\caption{Accuracies of each method on datasets perturbed by SFA $(l,m)$. $\lambda$ and $\sigma$ are selected so that each accuracy would be highest.}
\label{sfatab2}
\scalebox{0.8}{
\begin{tabular}{lrrrrcrrrrcrrrrcrrrrc}
\toprule
& \multicolumn{5}{c}{Avg.}&\multicolumn{5}{c}{Min.} &\multicolumn{5}{c}{CLN}&\\
\cmidrule(l){2-6}\cmidrule(l){7-11}\cmidrule(l){12-16}\cmidrule(l){17-20}
{} &      Absum &         WD &        L1 &        SNC&{\small w/o Reg.} &      Absum &         WD &         L1 &       SNC &{\small w/o Reg.}&      Absum &         WD &         L1 &        SNC&{\small w/o Reg.} \\
\midrule
MNIST &  \bf{98.64} &  98.59 &  98.48 &  98.55&98.44 &  \bf{95.25} &  94.65 &  91.61 &  91.79&80.53 &  \bf{99.23} &  99.21 &  99.19 &  99.10&99.18\\
FMNIST &  \bf{83.11} &  83.09 &  82.49 &  82.60 &72.75&  \bf{60.12} &  53.25 &  58.38 &  55.36 &42.92&  89.25 &  89.20 &  89.27 &  87.50&\bf{89.37} \\
CIFAR10 &  79.05 &  69.09 &  66.44 &  \bf{85.57}&66.64 &   53.90 &  20.80 &  29.82 &  \bf{73.99}&11.74 &  93.87 &  \bf{94.73} &  93.78 &  93.51&93.53\\
CIFAR100 &  48.69 &  42.97 &  38.99 &  \bf{60.42}&39.52 &  19.94 &   10.67 &  12.31 &  \bf{45.58}&8.89 &  72.38 &  \bf{74.63}  &  73.02 &  71.51&71.93\\
SVHN &  \bf{93.34} &  91.74 &  91.14&  93.20&90.72 &  \bf{74.83} &  70.77 &  60.70&  70.66&58.48&  96.27 &  \bf{96.72} &  96.20 &  96.15&96.17\\
\bottomrule
\end{tabular}
}
\end{table*}
\begin{figure}[tb]
\centering
\subfloat[MNIST]{\includegraphics[width=0.8\linewidth ,clip]{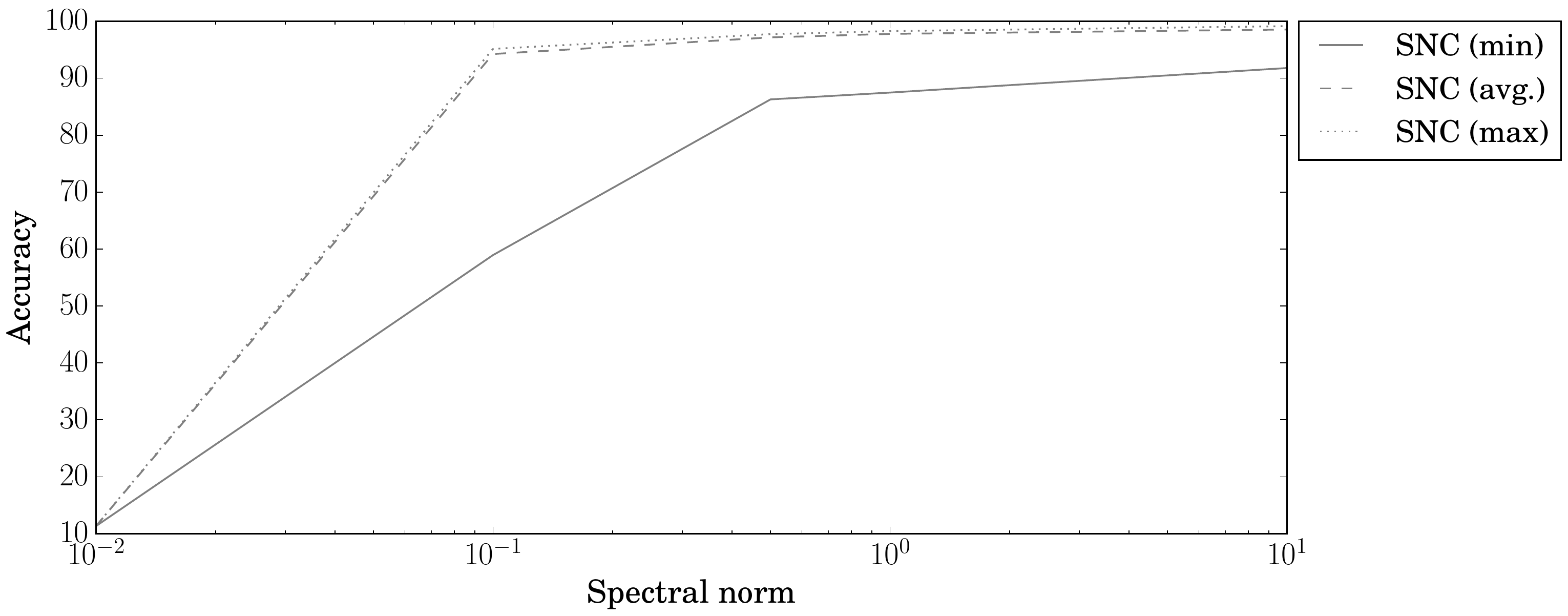}}\\
\centering
\subfloat[FMNIST]{\includegraphics[width=0.8\linewidth ,clip]{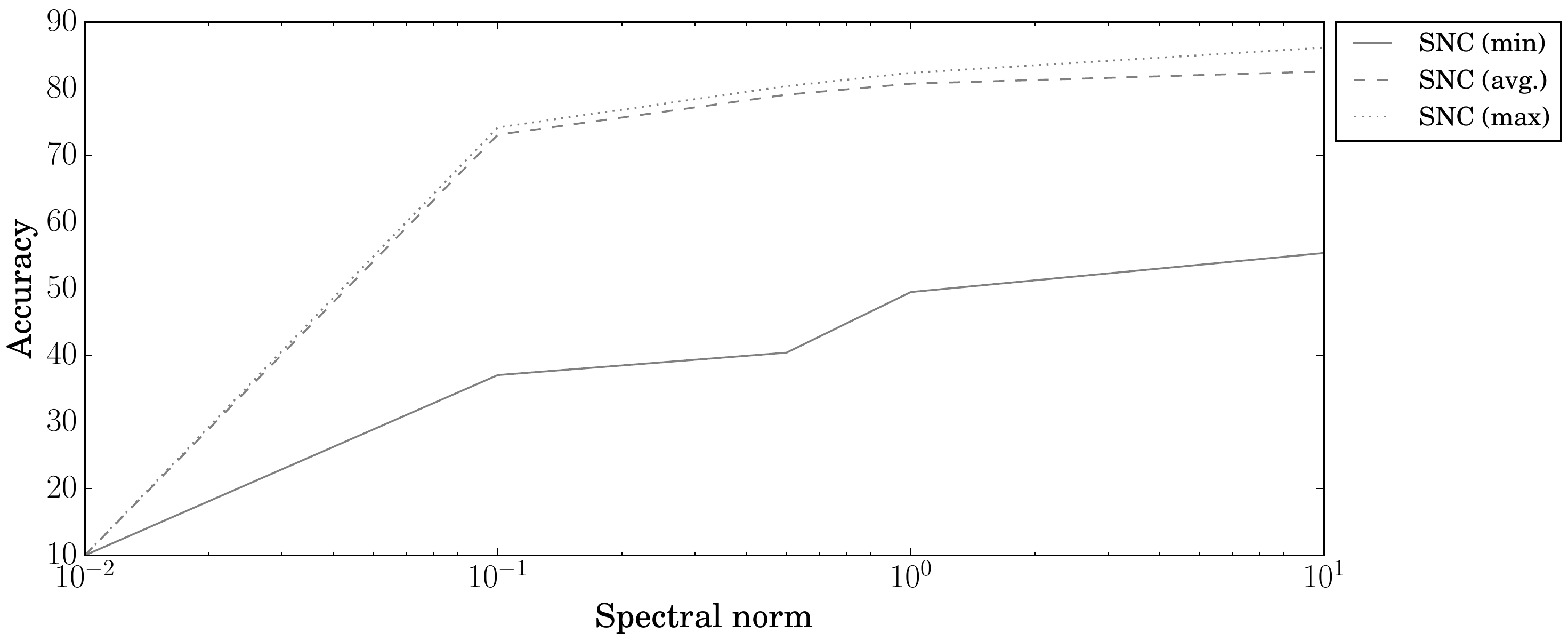}}
\\
\centering
\subfloat[CIFAR10]{\includegraphics[width=0.8\linewidth ,clip]{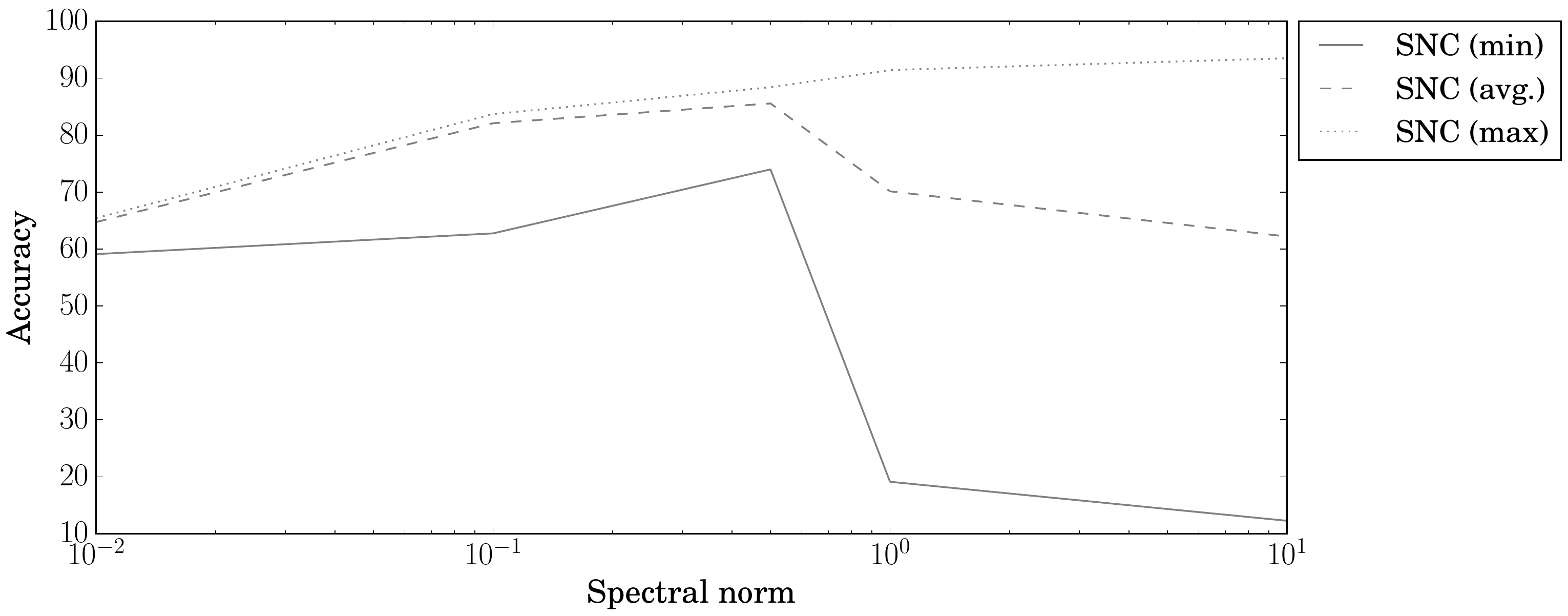}}
\\
\centering
\subfloat[CIFAR100]{\includegraphics[width=0.8\linewidth ,clip]{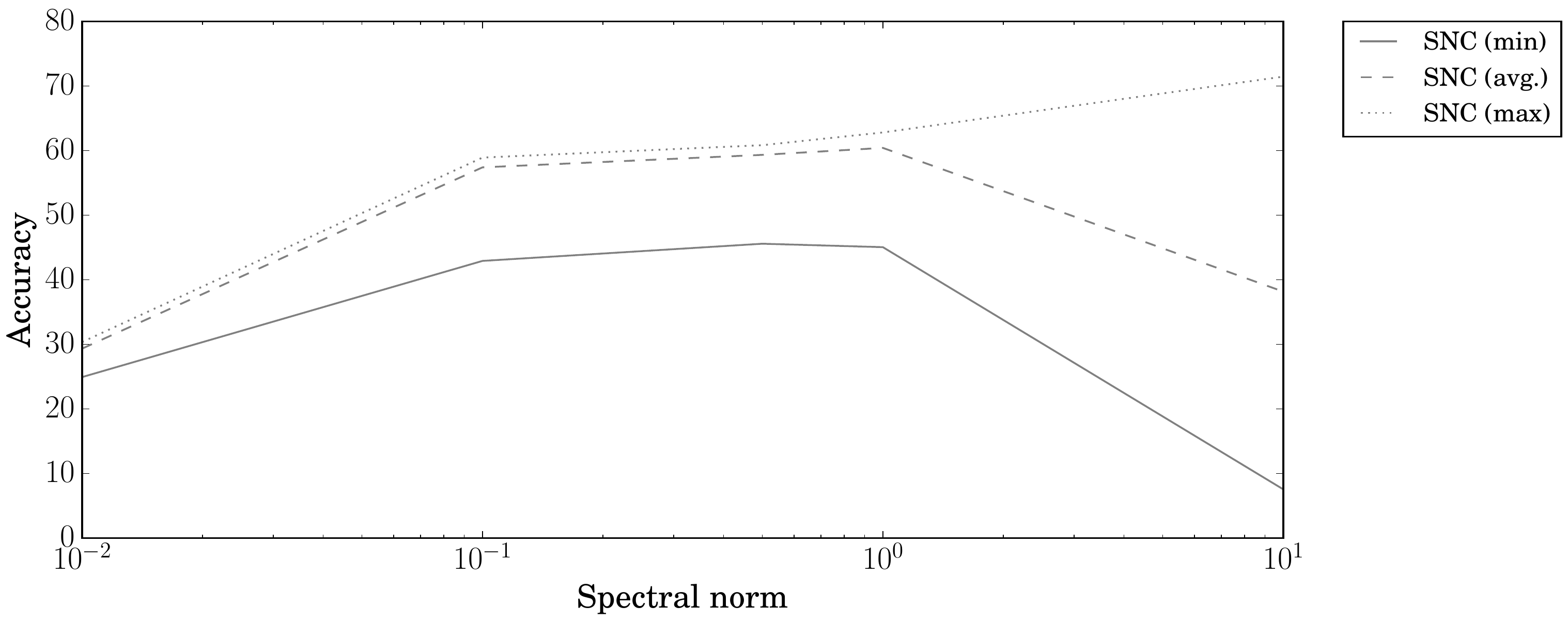}}
\centering
\\
\subfloat[SVHN]{\includegraphics[width=0.8\linewidth ,clip]{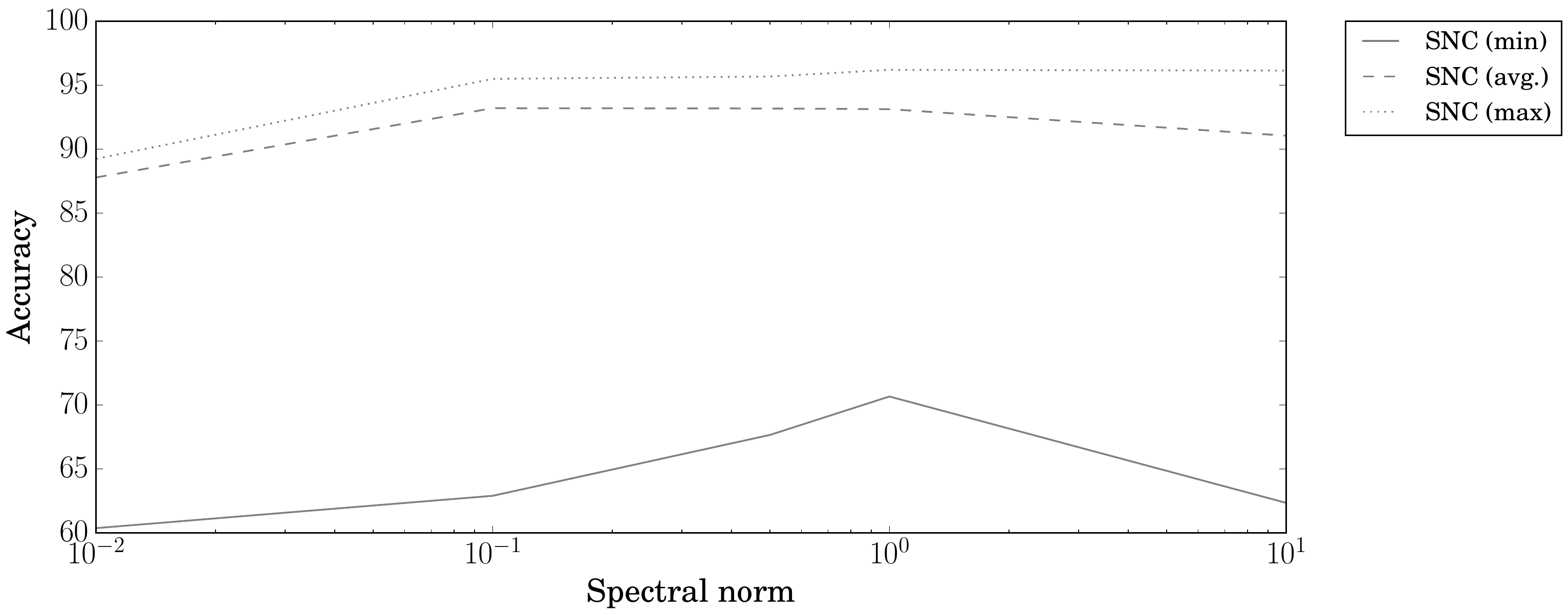}}
\caption{Accuracies of methods on dataset perturbed by SFA vs spectral norm $\sigma$ in SNC}
\label{AccVsSN}
\end{figure}
\begin{figure}[tb]
\centering
\includegraphics[width=0.8\linewidth ,clip]{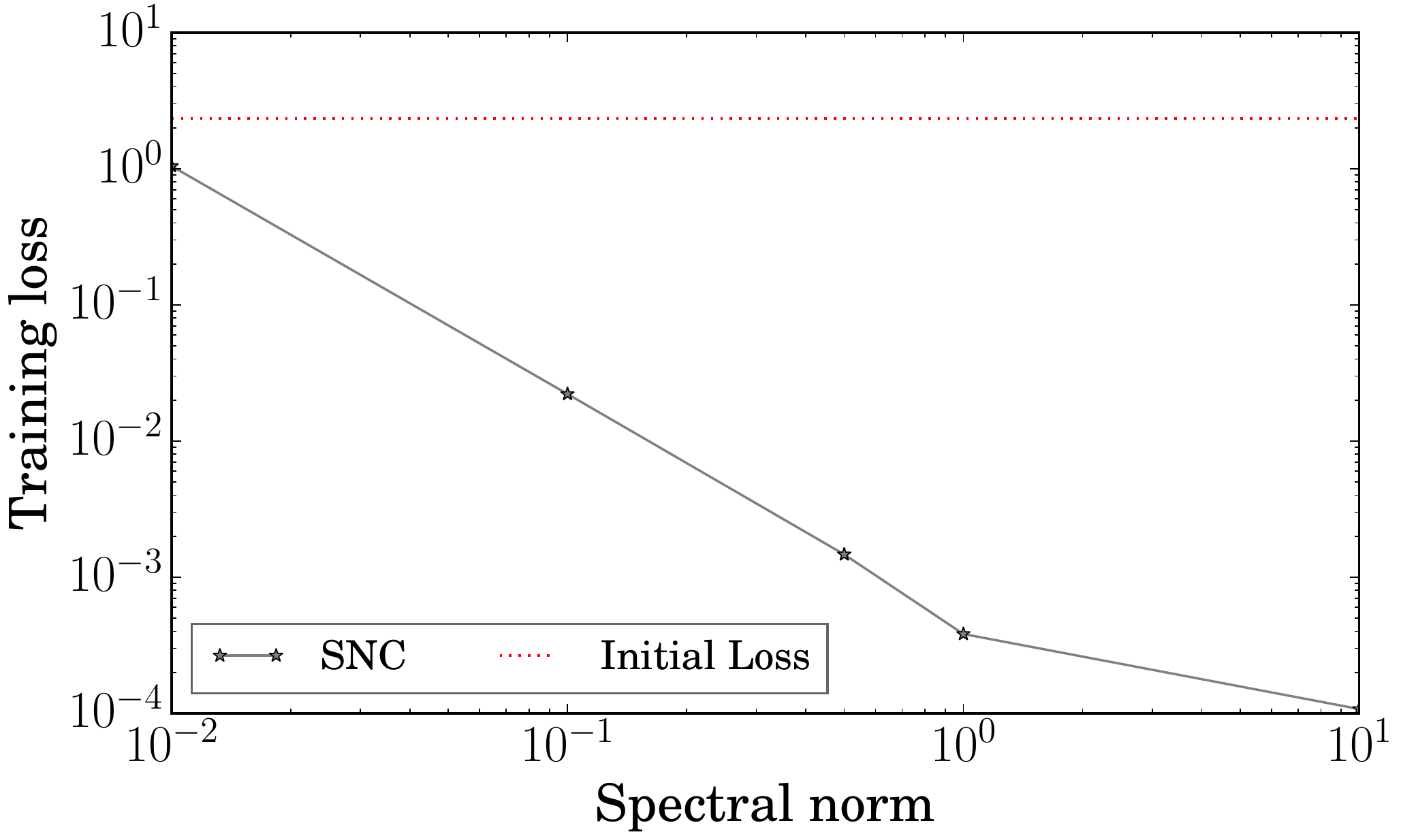}
\caption{Training loss vs. $\sigma$ on CIFAR10}
\label{SVtrainLoss}
\end{figure}
Figure~\ref{accSFA} shows the accuracies on datasets 
perturbed by SFA against hyperparameters $(l,m)$ of SFA.
As shown in \rfig{accSFA}, the models trained with WD and $L_1$ regularization are sensitive to certain frequency noise (e.g., $(17,17)$ in Figs. \ref{accSFA}~(j) and (k)).
Table~\ref{sfatab2} lists the average, minimum, and clean
accuracies on datasets perturbed by SFA.
In this table, Avg. denotes accuracies on data perturbed by SFA averaged over hyperparameters
$(l,m)$, and Min. denotes minimum accuracies on data perturbed by SFA
among $(l, m)$.
CLN denotes accuracies on clean data.
The $\lambda$ and $\sigma$ are selected for each
of Avg., Min., and CLN so that each of them would become the highest.

Figure~\ref{AccVsLam2} shows the accuracies of the methods on FMNIST, CIFAR100, and SVHN perturbed by SFA against regularization weights.
These results are almost the same as those of
MNIST and CIFAR 10.
On all the datasets, Absum improves the avg. and min. accuracies according to the regularization weights,
while the other methods decrease the accuracies according to them.

Figure~\ref{AccVsSN} shows the accuracies of SNC on all datasets perturbed by SFA against
the threshold of the spectral norm $\sigma$.
We can see that, on MNIST and FMNIST, accuracies increase along with $\sigma$.
This is because, when the spectral norm is small, gradient vanishing occurs in the stacked convolutional layers.
On the other hand, on CIFAR10, CIFAR100, and SVHN, where we used ResNet,
minimum accuracy decreases, when the spectral norm becomes larger than a certain point,
while max accuracy increases along with the spectral norm.
Figure~\ref{SVtrainLoss} shows the lowest training loss $\frac{1}{N}f$ in training with SNC on CIFAR10 against $\sigma$.
We can see that SNC with low $\sigma$ prevents the minimizing of the loss function.
\subsection{Robustness against Transferred PGD}
Table~\ref{TransferP} lists robust accuracies against transferred PGD for various $\varepsilon$.
We can see that Absum and SNC can improve robustness against transferred PGD
better than WD and $L_1$.
\begin{table}[tb]
\centering
\caption{Accuracies (\%) on test datasets perturbed by transferred PGD}
\label{TransferP}
\scalebox{0.9}{
\begin{tabular}{lrrr}
\toprule
MNIST\\
\midrule
$\varepsilon$&   0.10 &  0.20 &   0.30 \\
\midrule
Absum $\lambda=10^{-2}$& \bf{95.71}&  \bf{76.34} & \bf{34.39}\\
WD $\lambda=10^{-3}$&  92.67 &  48.94 &  6.63\\
L1 $\lambda=10^{-4}$&  94.63 &  66.48&23.30 \\
SNC $\sigma=1.0$&  94.99 &  71.30 & 25.24 \\
w/o Reg. &94.68&65.87&25.54\\
\bottomrule
\toprule
FMNIST\\
\midrule
%
Absum $\lambda=10^{-3}$&  \bf{58.32} &  \bf{30.08} &  \bf{17.02}\\
WD $\lambda=10^{-2}$&  35.15 &  3.46 &  0.02 \\
L1 $\lambda=10^{-3}$&  48.47 &  18.35 &  7.08\\
SNC $\sigma=10$&  51.45 &  21.31 &  9.76 \\
w/o Reg. &39.92&19.74&12.24\\
\bottomrule
\end{tabular}
}
\centering
\scalebox{0.9}{
\begin{tabular}{lrrrrr}
\toprule
CIFAR10\\
\midrule
$\varepsilon$ &   2/255  &   4/255  &   6/255 &\\
\midrule
Absum $\lambda=10^{-1}$ &  63.33 &  26.29 &  8.58 \\
WD $\lambda=10^{-4}$&   57.45&  18.48 &  5.01 \\
L1 $\lambda=10^{-6}$&57.52 &15.66 &  2.88 \\
SNC $\sigma=0.5$&  \bf{74.14} &  \bf{48.85} & \bf{24.31} \\
w/o Reg. &57.64&15.85&3.15\\
\bottomrule
\toprule
CIFAR100\\
\midrule
Absum $\lambda=10^{-3}$&  41.64 &  18.57 &  8.20 \\
WD $\lambda=10^{-6}$&  42.91 &  17.40 &  6.86 \\
L1 $\lambda=10^{-7}$&  41.35 &  16.68 &  6.60 \\
SNC $\sigma=1.0$&  \bf{50.96} &  \bf{36.57} &  \bf{23.90} \\
w/o Reg. &41.59&16.68&7.28\\
\bottomrule
\toprule
SVHN\\
\midrule
Absum $\lambda=10^{-3}$&  78.37 &  49.11 &  30.27\\
WD $\lambda=10^{-3}$&  76.20 &  40.49 &  20.19 \\
L1 $\lambda=10^{-7}$&  79.99 &  52.79 &  32.95 \\
SNC $\sigma=1.0$&  77.70 &  46.36 &  25.78 \\
w/o Reg. &\bf{80.75}&\bf{54.39}&\bf{34.17}\\
\bottomrule
\end{tabular}
}
\end{table}
\subsection{Accuracy on Data Filtered using High-pass Filter}
Table~\ref{hpf} lists accuracies on test data processed using the high-pass filter.
As shown in this table, the accuracies of Absum tend to be higher than the other methods.
This table and results against High-Frequency attacks imply that Absum does not bias 
towards a specific frequency domain.
On the other hand, the models trained using SNC are not more robust against the high-pass
filter than WD and L1 while they are more robust against High-Frequency attacks.
Therefore, SNC biases CNNs towards low-frequency components of image data.
\begin{table}[t]
\centering
\caption{Robust accuracy against high-pass filter} 
\label{hpf}
\scalebox{0.9}{
\begin{tabular}{ccccccc}
\toprule
&Absum&WD&L1&SNC&w/o Reg.\\
\midrule
MNIST&13.21&40.07&\bf{46.96}&32.96&27.30\\
FMNIST&\bf{29.75}&10.08&10.00&10.05&10.04\\
CIFAR10&\bf{28.75}&19.88&20.35&12.93&28.19\\
CIFAR100&\bf{4.03}&2.17&1.80& 1.1&2.66\\
SVHN&15.53&\bf{19.59}&7.50&6.12&6.38
\\
\bottomrule
\end{tabular}
}
\end{table}
\subsection{Computational Cost}
We evaluated the computation time for convergence on CIFAR10.
Figure~\ref{CompTime2} shows the
training loss $\frac{1}{N}f$ against computation time when
$\lambda=10^{-4}$ and $\sigma=1.0$.
In this figure, Absum converges as fast as L1 regularization.
\begin{figure}[t]
\centering
\includegraphics[width=0.8\linewidth]{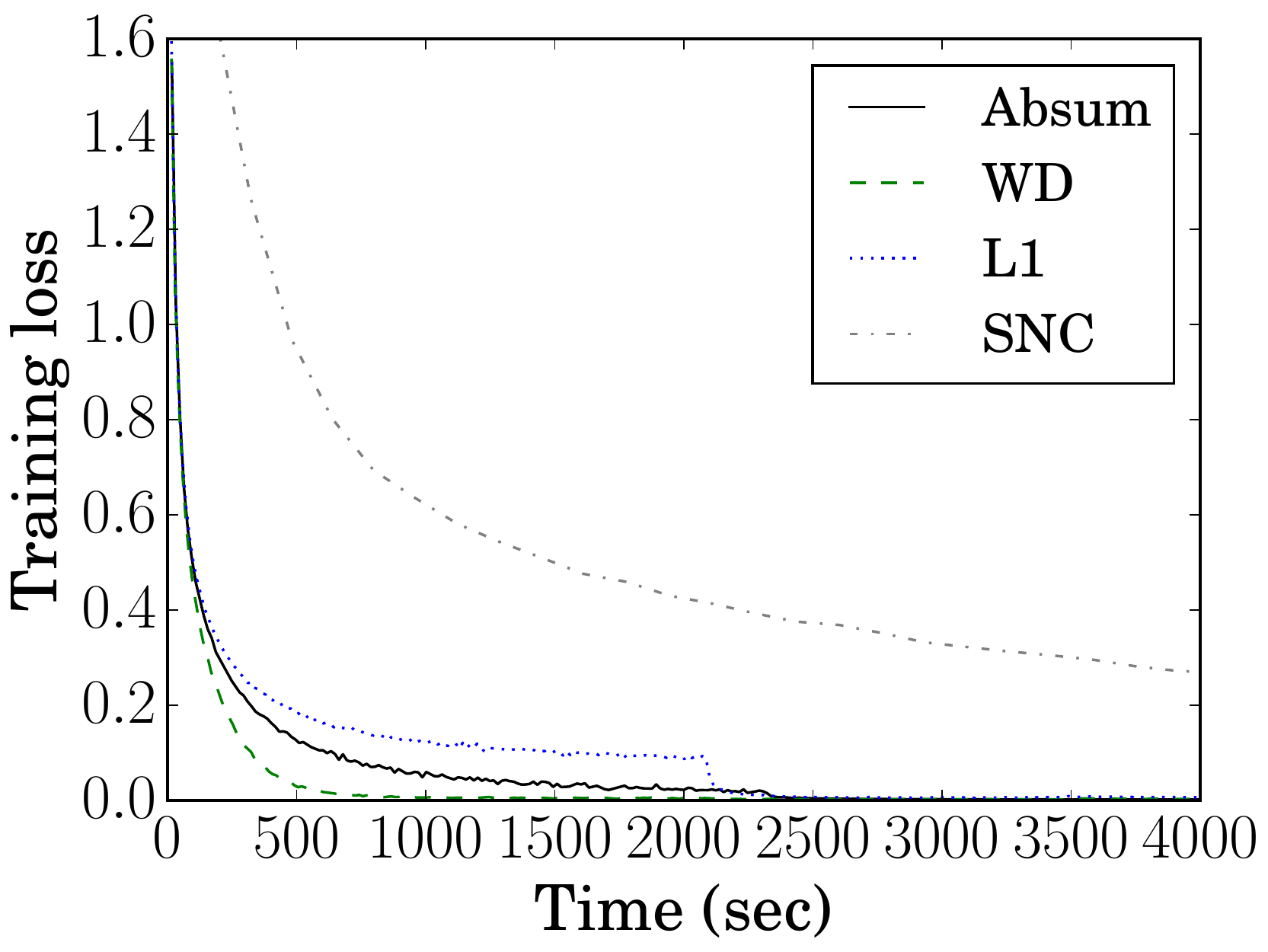}
\caption{Training loss vs. computation time}
\label{CompTime2}
\end{figure}
\subsection{Robustness against PGD}
Table~\ref{data adPGD} 
lists the test accuracies of the models trained by naive training and
adversarial training on the data perturbed by PGD.
We can see that when we train the models without using adversarial training,
Absum does not improve robustness against PGD.
This implies that the structural sensitivity of CNNs does not necessarily cause
all vulnerabilities of CNN-based models.
However, when we use adversarial training, Absum improves robustness against PGD, the highest among regularization methods on almost all datasets.
Therefore, robustness against Fourier basis functions 
can contribute to robustness against other adversarial attacks.
We can see that SNC can slightly improve the robustness against PGD in naive training.
However, when using adversarial training, it does not improve robustness more than Absum.
The best regularization weights for WD and L1 regularization in adversarial training tend to be
lower, and $\sigma$ in adversarial training is higher compared with naive training.
These results indicate that these methods impose too tight of constraints to achieve 
high accuracy and robustness at the same time.
On the other hand, the best regularization weights of Absum in adversarial training
tend to be higher than those in naive training.
Thus, Absum can improve robustness without deteriorating classification performance due to its looseness.
\begin{table*}[t]
\centering
\caption{Accuracies (\%) on test datasets perturbed by PGD. Reg. denotes regularization.}
\label{data adPGD}
\scalebox{0.85}{
\begin{tabular}{lrrrrrrlrrrrrr}
\toprule
MNIST&\multicolumn{6}{c}{Naive training}&&\multicolumn{6}{c}{Adversarial training}\\
\cmidrule(r){1-7}\cmidrule(l){8-14}
$\varepsilon$ &   0.05 &   0.10 &   0.15 &   0.20 &   0.25 &   0.30 &&   0.05 &   0.10 &   0.15 &   0.20 &   0.25 &   0.30 \\
\cmidrule(r){1-7}\cmidrule(l){8-14}
Absum $\lambda=10^{-2}$&  91.70 &  58.97 &  13.30 &   0.90 &  0.09 &  0.00&Absum $\lambda=10^{-3}$& \bf{96.01}&  \bf{94.92} & \bf{93.75}&  \bf{92.73}& \bf{91.59}&  \bf{90.78}\\
WD $\lambda=10^{-5}$ &\bf{93.38}&  73.97 &  34.34 &   6.55 &  0.71 &  0.04&WD $\lambda=10^{-4}$&  92.97 &  91.34 &  89.69 &  88.02 &  87.05 &  85.96\\
L1 $\lambda=10^{-2}$&  92.23 &  76.02 &  43.92 &  13.14 &  2.29 &  0.21 &L1 $\lambda=10^{-4}$&  93.12 &  91.86 &  90.60 &  89.28 &  88.25 &  87.06 \\
SNC $\sigma=0.5$&  92.16 &  \bf{79.30}&  \bf{49.12}&  \bf{16.21}&  \bf{3.64}& \bf{0.63}&SNC $\sigma=10$&  91.92 &  89.43 &  86.77 &  83.89 &  80.24 &  76.92 \\
w/o Reg. &  93.21 &  73.81 &  33.26 &  6.07 &   0.70 &  0.04 &w/o Reg. &  91.57 &  89.85 &  88.43 &  86.87 &  85.76 &  84.86 \\
\bottomrule
\toprule
FMNIST&\multicolumn{6}{c}{Naive training}&&\multicolumn{6}{c}{Adversarial training}\\
\cmidrule(r){1-7}\cmidrule(l){8-14}
%
Absum $\lambda=10^{-7}$&  53.40 &  21.92 &  6.19 &  1.06 &  0.03 &  0.00 &Absum $\lambda=10^{-3}$& \bf{66.94} &  \bf{65.92} &  \bf{65.77} &  \bf{65.52} &  \bf{65.24} &  \bf{64.95} \\
WD $\lambda=10^{-2}$&  \bf{54.45} &  \bf{26.12} &  8.93 &  2.19 &  0.38 &  0.01 &WD $\lambda=10^{-5}$&  65.38 &  63.64 &  62.91 &  62.60 &  62.11 &  61.96 \\
L1 $\lambda=10^{-3}$&  52.80 &  22.43 &  6.80 &  1.59 &  0.18 &  0.00 &L1 $\lambda=10^{-6}$&  66.13 &  64.16 &  62.95 &  62.23 &  61.64 &  61.66 \\
SNC $\sigma=1.0$&  49.08 &  24.03 &  \bf{9.36} &  \bf{3.06} &  \bf{0.68} &  \bf{0.07} &SNC $\sigma=10$&  51.58 &  49.33 &  47.31 &  45.85 &  44.86 &  44.04 \\
w/o Reg. &  52.75 &  20.97 &  5.79 &  0.98 &  0.05 &   0.0 &w/o Reg. &  63.36 &  61.66 &  61.15 &  60.97 &  60.46 &  60.26 \\
\bottomrule
\end{tabular}
}
\centering
\scalebox{0.89}{
\begin{tabular}{lrrrrrlrrrrr}
\toprule
CIFAR10&\multicolumn{5}{c}{Naive training}&&\multicolumn{5}{c}{Adversarial training}\\
\cmidrule(r){1-6}\cmidrule(l){7-12}
$\varepsilon$ &   4/255  &   8/255  &   12/255 &   16/255 &   20/255 &&   4/255  &   8/255  &   12/255 &   16/255 &   20/255 \\
\cmidrule(r){1-6}\cmidrule(l){7-12}
Absum $\lambda=10^{-7}$&  0.26 &  0.00 &  0.00 &  0.00 &  0.00 &Absum $\lambda=10^{-5}$&  69.42 &  49.39 &  \bf{30.22} &  \bf{15.03} &  \bf{6.54} \\
WD $\lambda=10^{-2}$&   8.42 &  \bf{1.87} &  \bf{0.37} &  0.09 &  0.02 &WD $\lambda=10^{-5}$&  \bf{69.48} &  49.38 &  29.37 &  14.45 &  6.06 \\
L1 $\lambda=10^{-1}$&   5.61 &  1.43 &  0.32 &  \bf{0.12} &  \bf{0.08} &L1 $\lambda=10^{-5}$& 68.99 &  \bf{49.45} &  29.51 &  14.68 &  6.31 \\
SNC  $\sigma=0.5$&  \bf{13.39} &  0.41 &  0.01 &  0.00 &  0.00 &SNC $\sigma=10$&  68.47 &  48.74 &  29.07 &  14.32 &  6.04 \\
w/o Reg. &  0.17 &  0.00&  0.00&  0.00&  0.00&w/o Reg. &  68.46 &  48.77 &  29.20 &  14.50 &  6.08 \\
\bottomrule
\toprule
CIFAR100&\multicolumn{5}{c}{Naive training}&&\multicolumn{5}{c}{Adversarial training}\\
\cmidrule(r){1-6}\cmidrule(l){7-12}
Absum $\lambda=10^{-7}$&  2.85 &  0.35 &  0.08 &  0.04 &  0.02 &Absum $\lambda=10^{-4}$&  \bf{42.19} &  \bf{27.25} &  15.89 &  \bf{8.47} &  4.14 \\
WD $\lambda=10^{-4}$&  4.61 &  1.08 &  0.37 &  0.19 &  0.14 &WD $\lambda=10^{-7}$&  41.14 &  27.05 &  \bf{15.90} &  8.26 &  \bf{4.28} \\
L1 $\lambda=10^{-2}$&  4.26 &  1.53 &  0.68 &  \bf{0.37} &  \bf{0.17} &L1 $\lambda=10^{-4}$&  40.75 &  26.14 &  14.45 &  7.61 &  3.67 \\
SNC $\sigma=1.0$&  \bf{7.03} &  \bf{1.88} &  \bf{0.70} &  0.25 &  0.15 &SNC $\sigma=10$&  40.90 &  26.61 &  15.53 &  8.32 &  4.13 \\
w/o Reg. &  2.02 &  0.14 &  0.03 &  0.03 &  0.01 &w/o Reg. &  40.03 &  25.42 &  13.94 &  7.34 &  3.68 \\
\bottomrule
\toprule
SVHN&\multicolumn{5}{c}{Naive training}&&\multicolumn{5}{c}{Adversarial training}\\
\cmidrule(r){1-6}\cmidrule(l){7-12}
Absum $\lambda=10^{-7}$&   9.36 &  0.33 &  0.02 &  0.00 &  0.00 &Absum $\lambda=10^{-5}$&  \bf{77.78} &  \bf{52.74} &  \bf{27.39} &  11.97 &  5.50 \\
WD $\lambda=10^{-4}$&  10.46 &  0.41 &  0.02 &  0.00 &  0.00 &WD $\lambda=10^{-7}$&  76.66 &  50.40 &  25.05 &  10.86 &  5.04 \\
L1 $\lambda=10^{-5}$&  11.78 &  0.565 &  0.03&  0.00 &  0.00&L1 $\lambda=10^{-6}$&  76.50 &  51.49 &  27.10 &  \bf{12.12} &  \bf{5.63} \\
SNC $\sigma=0.5$&  \bf{22.34} &  \bf{2.32} &  \bf{0.21}&  \bf{0.02} &  \bf{0.00} &SNC $\sigma=1.0$&  77.23 &  50.80 &  25.24&  11.04 &  5.03 \\
w/o Reg. &  8.44 &  0.28 &  0.02 &  0.00 &  0.00 &w/o Reg. & N/A& N/A& N/A& N/A& N/A\\
\bottomrule
\end{tabular}
}
\end{table*}
\end{document}